\documentclass{article}

\usepackage{microtype}
\usepackage{graphicx}
\usepackage{subcaption}
\usepackage{booktabs}%
\usepackage{longtable}

\usepackage[utf8]{inputenc}
\usepackage{xcolor}
\usepackage{colortbl}%
\usepackage{array}

\usepackage{adjustbox}%
\usepackage{amsfonts}%
\usepackage[ruled,vlined,algo2e]{algorithm2e}%

\usepackage{hyperref}
\usepackage{enumitem}
\usepackage{pbalance}
\let\OriginalAddContentsLine\addcontentsline

\usepackage[accepted]{Style/icml2026}

\usepackage{amsmath}
\usepackage{amssymb}
\usepackage{mathtools}
\usepackage{amsthm}

\usepackage[capitalize,noabbrev]{cleveref}
\newcommand{\crblue}[1]{#1}

\usepackage{amsmath,amsfonts,bm}

\def\1{\bm{1}}

\DeclareMathAlphabet{\mathsfit}{\encodingdefault}{\sfdefault}{m}{sl}
\SetMathAlphabet{\mathsfit}{bold}{\encodingdefault}{\sfdefault}{bx}{n}

\usepackage[T1]{fontenc}
\usepackage{url}
\usepackage{tabularx}
\usepackage{multirow}
\usepackage{bm}
\usepackage{tocloft}
\usepackage{etoolbox}
\usepackage{doi}
\usepackage{tcolorbox}
\tcbuselibrary{skins,breakable}

\definecolor{darkgreen}{RGB}{24,119,24}
\definecolor{improvementgreen}{RGB}{24,119,24}
\definecolor{darkblue}{rgb}{0.0,0.0,0.55}
\definecolor{myblue}{RGB}{82,140,182}
\definecolor{titlebgcolor}{RGB}{82,140,182}
\definecolor{textbgcolor}{RGB}{219,232,241}
\newcommand{\deltagain}[1]{\textcolor{improvementgreen}{\textbf{#1}}}

\newtheorem{definition}{Definition}

\newtheorem{theorem}{Theorem}
\newtheorem{proposition}{Proposition}
\newtheorem{corollary}{Corollary}
\newtheorem{lemma}{Lemma}
\newtheorem{remark}{Remark}

\makeatletter
\def\Itemautorefname{\@ifnextchar~{\@gobble}{}}
\makeatother

\makeatletter
\ExplSyntaxOn
\RenewDocumentCommand{\autoref}{s m}
  {
    \regex_match:nnTF { \A(eq|equ): } {#2}
      {
        \IfBooleanTF {#1}
          { Equation~(\getrefnumber{#2}) }
          { \hyperref[#2]{Equation~(\getrefnumber{#2})} }
      }
      {
        \regex_match:nnTF { \A(apdix|apdx|appendix): } {#2}
          {
            \IfBooleanTF {#1}
              { Appendix~\getrefnumber{#2} }
              { \hyperref[#2]{Appendix~\getrefnumber{#2}} }
          }
          {
            \leavevmode
            \IfBooleanTF {#1}
              { \HyRef@autoref\@gobbletwo{#2} }
              { \HyRef@autoref\hyper@@link{#2} }
          }
      }
  }
\ExplSyntaxOff
\makeatother

\pretocmd{\appendix}{%
}{}{}

\makeatletter
\newif\ifinappendix
\pretocmd{\appendix}{\inappendixtrue}{}{}

\pretocmd{\subsection}{%
  \ifinappendix
  \fi
}{}{}

\pretocmd{\subsubsection}{%
  \ifinappendix
  \fi
}{}{}
\makeatother

\newtcolorbox{infobox}[2][]
{
  colframe = titlebgcolor!60,
  colback  = textbgcolor!29,
  #1,
}

\newtcolorbox{thmbox}[2][]
{
  enhanced,
  colback = textbgcolor!29,
  frame hidden,%
  boxrule = 0pt,
  borderline north={1.2pt}{0pt}{titlebgcolor!70},
  borderline south={0.6pt}{0pt}{titlebgcolor!70},
  left=8pt,right=8pt,top=6pt,bottom=6pt,
  arc=0mm,%
  #1,
}

\icmltitlerunning{Hyperbolic Multimodal Continual Learning}

\begin{document}

\twocolumn[
  \icmltitle{Hyperbolic Multimodal Continual Learning}

  \icmlsetsymbol{equal}{*}

  \begin{icmlauthorlist}
    \icmlauthor{Jiahong Liu}{cuhk}
    \icmlauthor{Ming Shen}{cuhk}
    \icmlauthor{Xiaohao Liu}{nus}
    \icmlauthor{Rex Ying}{yale}
    \icmlauthor{Menglin Yang}{hkustgz}
    \icmlauthor{Tat-Seng Chua}{nus}
    \icmlauthor{Irwin King}{cuhk}
  \end{icmlauthorlist}

  \icmlaffiliation{cuhk}{Department of Computer Science and Engineering, The Chinese University of Hong Kong, Hong Kong, China}
  \icmlaffiliation{nus}{School of Computing, National University of Singapore, Singapore}
  \icmlaffiliation{yale}{Department of Computer Science, Yale University, New Haven, CT, United States}
  \icmlaffiliation{hkustgz}{AI Thrust, The Hong Kong University of Science and Technology (Guangzhou), Guangzhou, Guangdong, China}

  \icmlcorrespondingauthor{Jiahong Liu}{jiahong.liu21@gmail.com}
  \icmlcorrespondingauthor{Irwin King}{king@cse.cuhk.edu.hk}

  \icmlkeywords{Continual Learning, Multimodal Learning, Hyperbolic Representation Learning}

  \vskip 0.3in
]

\printAffiliationsAndNotice{}%

\begin{abstract}
Hyperbolic geometry has recently emerged as a powerful representation space for multimodal learning, as it naturally captures hierarchical semantic structure across modalities. Despite this progress, how such representations behave under continual learning poses fundamentally different challenges that remain underexplored.
This work provides a geometric perspective on this problem and establishes a theoretical foundation for representation preservation in hyperbolic space, showing that preventing forgetting requires cross-modal invariance under a shared hyperbolic isometry.
\crblue{We further show that forgetting in hyperbolic continual learning involves both semantic relation drift and hierarchy-related distortion, motivating preservation of both cross-modal relational structure and hierarchical geometry.}
Guided by these insights, a principled continual learning framework is derived that preserves essential geometric structure while allowing effective adaptation to new tasks.
Experiments on continual multimodal benchmarks corroborate the effectiveness of the proposed approach.
\crblue{The code is available in the \href{https://github.com/HUBERILT/HMCL_ICML}{project repository}.}
\end{abstract}

\section{Introduction}

Multimodal Contrastive Learning (MCL)~\citep{radford2021learning,dufumier2024align,liu2025principled,liu2025continual} has become a dominant paradigm for learning joint representations across modalities, enabling effective alignment between heterogeneous data sources. By leveraging large-scale multimodal data and contrastive objectives, such models exhibit robust zero-shot transfer capabilities within a fixed representation space~\citep{radford2021learning,han2023imagebind,chen2023vast,li2025clip}.
Most existing approaches embed multimodal representations in high-dimensional Euclidean spaces, implicitly assuming a flat geometric structure for semantic concepts~\citep{radford2021learning,han2023imagebind,chen2023vast}.

\begin{figure}[t]%
\centering
\includegraphics[width=0.999\linewidth]{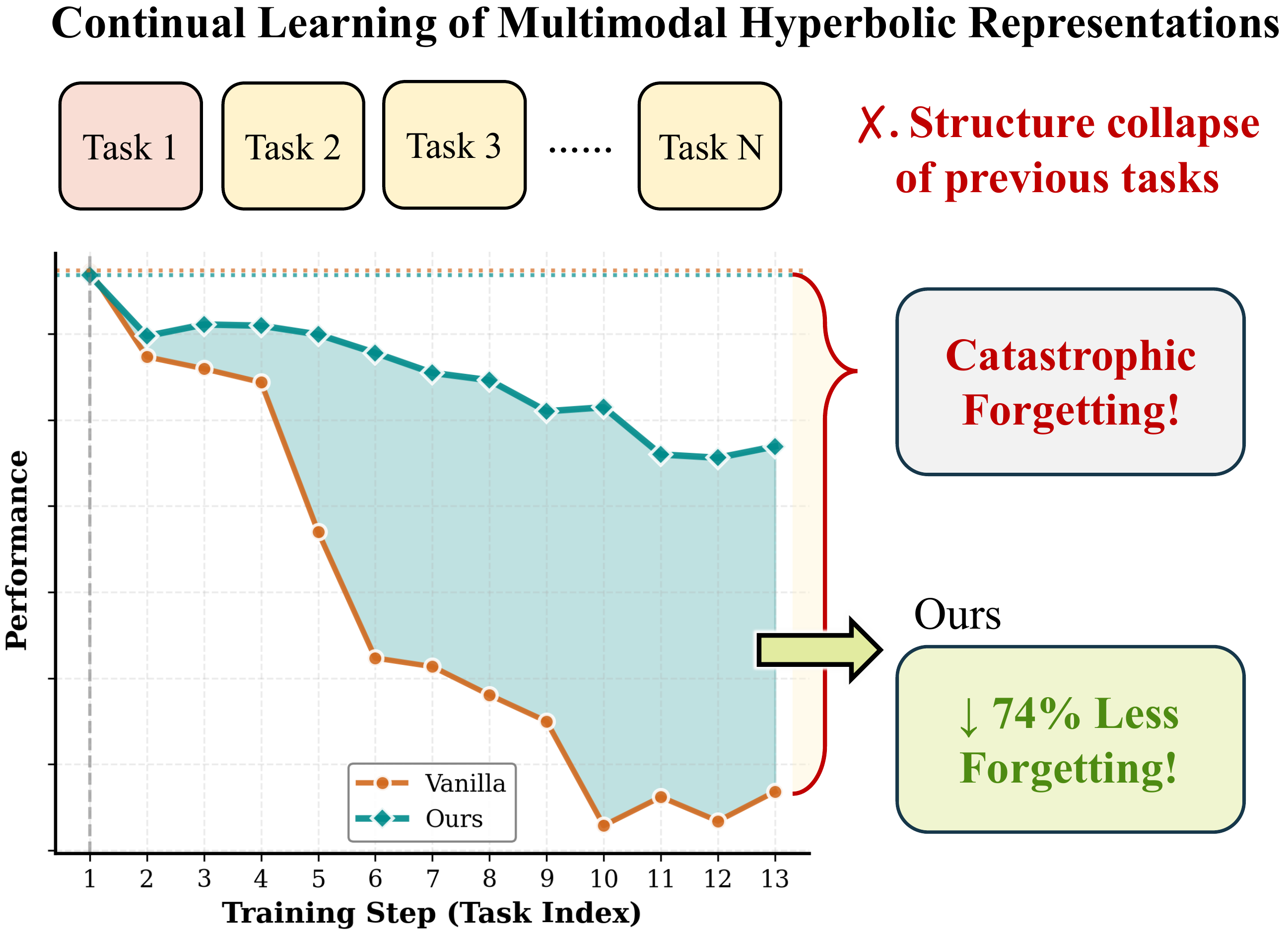}%
\caption{\textbf{Continual learning in hyperbolic representation space.}
Multimodal representations are embedded in hyperbolic space to capture semantic hierarchy across tasks. Under naive continual updates, distortions in the hyperbolic geometry lead to catastrophic forgetting and collapse of previously learned structures, whereas structure-aware hyperbolic updates significantly reduce forgetting.}
\label{fig:your_label}
\vspace{-10pt}
\end{figure}

However, multimodal semantic concepts often exhibit inherent hierarchical structure~\citep{desai2023hyperbolic,palcompositional,kim2024hype}.
Across modalities, abstract and high-level concepts typically subsume a large number of more specific instances, inducing asymmetric and partially ordered semantic structures rather than flat metric clusters.
Such differences in semantic specificity and abstraction are difficult to faithfully capture in Euclidean spaces, where distances and neighborhoods are uniformly structured~\citep{krioukov2010hyperbolic,nickel2017poincare,nickel2018learning,ganea2018entailment}.

\textbf{Hyperbolic Multimodal Learning.}
Hyperbolic geometry provides a principled representation space for hierarchical semantics~\citep{krioukov2010hyperbolic,nickel2017poincare,nickel2018learning,ganea2018hyperbolic,ganea2018entailment}.
Recent advances in hyperbolic multimodal representation learning, such as MERU and related hyperbolic vision--language models~\citep{desai2023hyperbolic,mandica2024hyperbolic,kim2024hype,palcompositional,poppi2025hyperbolic}, show that leveraging hyperbolic geometry leads to higher-quality multimodal representations. Specifically, hyperbolic space encodes semantic information along two complementary geometric dimensions:
\emph{(i) geodesic distances capture semantic similarity}, ensuring that concept proximity reflects relational closeness;
\emph{(ii) entailment cones encode semantic hierarchy}, organizing concepts according to their level of abstraction, with generic concepts typically located more centrally and more specific concepts extending outward along entailment cones~\citep{ganea2018entailment,desai2023hyperbolic,palcompositional}.

Despite these advantages, existing hyperbolic multimodal models are almost exclusively developed under static learning assumptions, where the full data distribution is available upfront~\citep{desai2023hyperbolic,mandica2024hyperbolic,kim2024hype,palcompositional,poppi2025hyperbolic}.
In realistic multimodal systems, however, data distributions evolve over time as new concepts, domains, and tasks emerge~\citep{kirkpatrick2017overcoming,yu2024recent,ni2023continual,liu2025c}, leaving continual learning in hyperbolic representation spaces largely unexplored.

\textbf{Continual Learning as a Geometric Problem.}
Because hyperbolic semantics are encoded by geometric quantities, continual parameter updates that violate their invariants distort semantic similarity and hierarchy, leading to catastrophic forgetting~\citep{kirkpatrick2017overcoming,lopez2017gradient,ni2023continual}.
This perspective highlights the fundamental challenge that need to be addressed:
\emph{what geometric conditions must be preserved to prevent forgetting in hyperbolic multimodal representations}.

\textbf{Contributions.} This work explores continual learning for hyperbolic multimodal representations and addresses the above fundamental questions. To the best of our knowledge, it provides the first geometric study of this setting.
In particular, we make the following contributions:

\begin{itemize}
    \item First, we establish a \textit{theoretical foundation} for representation preservation in hyperbolic space, showing that it requires cross-modal invariance under a shared hyperbolic isometry (e.g., the same rotation), yielding necessary and sufficient geometric conditions.

    \item Second, guided by the above theoretical conditions and analysis, we derive a \textit{principled continual learning framework} for hyperbolic multimodal representations, which preserves essential geometric structure while allowing effective adaptation to new tasks.

\end{itemize}

\vspace{-0.8em}
\crblue{Across diverse continual multimodal benchmarks, HMCL consistently reduces catastrophic forgetting while adapting to new tasks.}

\section{Preliminary}

\subsection{Hyperbolic Geometry}
\label{sec:pre_hyperbolic}
Hyperbolic space is a non-Euclidean geometry with constant negative curvature. The Lorentz model of hyperbolic geometry (hyperboloid model) is widely used due to its numerical stability for optimization~\citep{nickel2018learning,mishne2023numerical}. We provide a detailed introduction in~\autoref{apdix:lorentz_manifold}.

{\color{black}\textbf{Lorentz Model of Hyperbolic Space.} The $d$-dimensional hyperbolic space with sectional curvature $-K$ $(K>0)$ can be realized as the upper sheet of a hyperboloid embedded in $(d+1)$-dimensional Minkowski space. Let $\mathbf{G} = \operatorname{diag}(-1, 1, \ldots, 1) \in \mathbb{R}^{(d+1) \times (d+1)}$ denote the Minkowski metric tensor. The hyperbolic manifold is defined as:}
\begin{equation}
    {\color{black}\mathbb{H}^d_K = \{\mathbf{x} \in \mathbb{R}^{d+1} \mid \mathbf{x}^{\top} \mathbf{G} \mathbf{x} = -1/K, \; x_0 > 0\}.}
\end{equation}
Each point $\mathbf{x} = [x_0; \mathbf{x}_{[1:d]}]^\top \in \mathbb{H}^d_K$ consists of a time-like coordinate $x_0$ and $d$ space-like coordinates $\mathbf{x}_{[1:d]}$, reflecting the Lorentzian signature of the metric.

\textbf{Lorentzian Inner Product and Distance.} The Lorentzian inner product is defined as:
\begin{equation}
    {\color{black}\langle\mathbf{x}, \mathbf{y}\rangle_{\mathcal{L}} = \mathbf{x}^{\top} \mathbf{G} \mathbf{y} = -x_0 y_0 + \sum_{i=1}^d x_i y_i,}
\end{equation}
which induces the geodesic distance:
\begin{equation}
    \begin{aligned}
    {\color{black}d_\mathcal{L}^K(\mathbf{x}, \mathbf{y})} &= {\color{black}K^{-1/2}\operatorname{arcosh}\left(-K\langle \mathbf{x}, \mathbf{y}\rangle_{\mathcal{L}}\right)} \\
    &= {\color{black}K^{-1/2}\operatorname{arcosh}\left(-K\mathbf{x}^\top \mathbf{G} \mathbf{y}\right).}
    \end{aligned}
    \label{equ:prelim_distance}
\end{equation}

\textbf{Lorentz Transformation Layer.} Let $\mathbf{z} \in \mathbb{R}^{d+1}$ be a Lorentz embedding, where $z_0>0$ denotes the time-like coordinate and $\mathbf{z}_s \in \mathbb{R}^d$ denotes the space-like coordinates.
A Lorentz transformation layer~\citep{chen2022fully} is defined as follows
\begin{equation}
f(\mathbf{W}; \mathbf{z})
=
\begin{bmatrix}
{\color{black}\sqrt{K^{-1} + \|\mathbf{W} \mathbf{z}\|_2^2}} \\
\mathbf{W} \mathbf{z}
\end{bmatrix},
\label{eq:lorentz_layer}
\end{equation}
where $\mathbf{W} \in \mathbb{R}^{d \times (d+1)}$, {\color{black}$K>0$ is the magnitude of the negative curvature, so the manifold has sectional curvature $-K$.}
The transformation matrix is decomposed as
$
\mathbf{W} = \big[\, \mathbf{w}_0 \;\; \mathbf{W}_s \,\big],
\qquad
\mathbf{w}_0 \in \mathbb{R}^{d \times 1}, \;
\mathbf{W}_s \in \mathbb{R}^{d \times d},
$
where $w_0$ and $W_s$ act on the time-like and space-like components of the input embedding, respectively.

\subsection{Hyperbolic Multimodal Learning}

Given a paired dataset $\mathcal{X}^{m,m'} = \{(\mathbf{x}_i^{m}, \mathbf{x}_i^{m'})_{i=1}^N\} \subset \mathcal{X}^m \times \mathcal{X}^{m'}$, where $N$ is the number of paired samples, $\mathcal{X}^m$ and $\mathcal{X}^{m'}$ represent the data spaces of modality $m$ and modality $m'$, respectively. Each pair $(\mathbf{x}_i^{m}, \mathbf{x}_i^{m'})$ consists of semantically corresponding samples from different modalities. The model produces embedding matrices $\mathbf{Z}^m, \mathbf{Z}^{m'} \in \mathbb{R}^{N \times (d+1)}$, where each row $\mathbf{z}_i^{m}, \mathbf{z}_i^{m'} \in \mathbb{H}^d$ represents the hyperbolic embedding for the $i$-th sample pair.

\textbf{Contrastive Similarity.} The hyperbolic contrastive similarity matrix is computed as:
\begin{equation}
\begin{aligned}
   \mathbf{S}^{m\rightarrow m'} &= - d_\mathcal{L}^K (\mathbf{Z}^m, \mathbf{Z}^{m'}) \\
   & = {\color{black}-K^{-1/2}\operatorname{arcosh}\left(-K\mathbf{Z}^m \mathbf{G} (\mathbf{Z}^{m'})^{\top}\right),}
\end{aligned}
\label{equ:sim}
\end{equation}
where $\mathbf{S}^{m\rightarrow m'} \in \mathbb{R}^{N \times N}$ denotes the similarity matrix, {\color{black}$\mathbf{G} = \operatorname{diag}(-1, 1, \ldots, 1) \in \mathbb{R}^{(d+1) \times (d+1)}$ is the Minkowski metric tensor,} and $\operatorname{arcosh}$ is applied element-wise. Each element $\mathbf{S}_{ij}^{m\rightarrow m'} = -d_\mathcal{L}^K(\mathbf{z}_i^m, \mathbf{z}_j^{m'})$ represents the negative hyperbolic distance between the $i$-th embedding from modality $m$ and the $j$-th embedding from modality $m'$.

The unidirectional contrastive loss is formulated as:
\begin{equation}
\mathcal{L}_{m \to m'} = -\frac{1}{N}\sum_{i=1}^N \log\frac{\exp(\mathbf{S}_{ii}^{m\rightarrow m'}/\tau)}{\sum_{j=1}^N \exp(\mathbf{S}_{ij}^{m\rightarrow m'}/\tau)}
\end{equation}
where $\tau$ is the temperature parameter. The symmetric contrastive loss is defined as:
\begin{equation}
\mathcal{L}_{\text{contrast}} = \frac{1}{2}\big(\mathcal{L}_{m \to m'} + \mathcal{L}_{m' \to m}\big)
\end{equation}

\textbf{Entailment cone.} For hierarchical relationships, each embedding $\mathbf{z}_i^m \in \mathbb{H}^d$ defines an entailment cone with aperture:
\begin{equation}
{\color{black}\text{aper}(\mathbf{z}_i^m) = \sin^{-1}\left(\frac{2\kappa}{\sqrt{K}\,\|(\mathbf{z}_i^m)_{[1:d]}\|_2}\right),}
\end{equation}
where a constant $\kappa = 0.1$ is used for setting boundary conditions near the origin. The aperture naturally encodes semantic hierarchy: embeddings closer to the origin (i.e., smaller $\|(\mathbf{z}_i^m)_{[1:d]}\|_2$) correspond to higher-level, more general concepts and have larger apertures $\text{aper}(\mathbf{z}_i^m)$, while embeddings farther from the origin represent more specific concepts with smaller apertures. {\color{black}Writing $\ell_i^{m,m'}=\langle \mathbf{z}_i^m,\mathbf{z}_i^{m'}\rangle_{\mathcal{L}}$, the exterior angle is:}
\begin{equation}
{\color{black}\text{ext}(\mathbf{z}_i^m, \mathbf{z}_i^{m'}) = \cos^{-1}\left(\frac{(\mathbf{z}_i^{m'})_{0} + K(\mathbf{z}_i^m)_{0}\ell_i^{m,m'}}{\|(\mathbf{z}_i^m)_{[1:d]}\|_2 \sqrt{(K\ell_i^{m,m'})^2 - 1}}\right).}
\end{equation}

The entailment loss~\citep{ganea2018hyperbolic} enforces partial order constraints to ensure proper hierarchical relationships between cross-modal concepts (e.g., textual representations at higher semantic levels than their visual counterparts):
\begin{equation}
\mathcal{L}_{\text{entail}}(\mathbf{z}_i^m, \mathbf{z}_i^{m'}) = \max(0, \text{ext}(\mathbf{z}_i^m, \mathbf{z}_i^{m'}) - \text{aper}(\mathbf{z}_i^m)).
\end{equation}

\section{Problem Statement}
\label{sec:problem_formulation}
\subsection{Hyperbolic Multimodal Continual Framework}
We consider multimodal continual learning under hyperbolic representations, in which multiple modalities are aligned into a common hyperbolic space and learned sequentially over time.
The goal is to ensure that the learned embeddings preserve critical geometric relationships, both within and across modalities, while adapting to new tasks.
We adopt the established continual learning framework for multi-modal contrastive learning~\cite{liu2025continual}, where $M$ modalities are processed through pre-trained encoders $\mathcal{E}_m: \mathcal{X}_m \to \mathbb{R}^{d_m}$ followed by learnable transformation layers $T_m^{(t)}: \mathbb{R}^{d_m} \to \mathbb{R}^{d}$ for each modality $m \in \{1, 2, \ldots, M\}$ at task $t$. The complete forward pass consists of:

\textbf{Features before continual learning: } The frozen encoder extracts features from task $t-1$ data:
\begin{equation}
\mathbf{Z}_{t-1}^{m,*} = \mathcal{E}_m(X_{t-1}^m) \in \mathbb{R}^{N \times d_m}
\end{equation}
where $\mathcal{E}_m$ denotes the pre-trained encoder for modality $m$.

\textbf{Transformation layer:} To effectively accumulate knowledge through incremental updates, a learnable transformation layer adapts the features:
\begin{equation}
\mathbf{Z}_{t-1}^{m,t} = T_m^{(t)}(\mathbf{Z}_{t-1}^{m,*}) = f\!\left(\mathbf{W}^{m, t}, \mathbf{Z}_{t-1}^{m,*}\right) \in \mathbb{R}^{N \times (d+1)}
\label{equ:transformation_layer}
\end{equation}
where $T_m^{(t)}$ denotes the transformation layer for modality $m$ at task $t$, {\color{black} $\mathbf{W}^{m, t}$} is a learnable weight matrix of {\color{black} the linear transformation stage $t$}, and $f(\cdot,\cdot)$ is a function ensuring that the outputs still lie on the Lorentz manifold $\mathbb{H}^d$.

Similarly, for the current task $t$ data:
\begin{equation}
\mathbf{Z}_{t}^{m,t} = T_m^{(t)}(\mathcal{E}_m(X_{t}^m)) \in \mathbb{R}^{N \times (d+1)}
\end{equation}

\textbf{Notation Justification.}
In our notation system, $\mathbf{Z}_{t}^{m,s}$ represents the embedding of task $t$ data for modality $m$ processed at stage $s$. For example, $\mathbf{Z}_{t-1}^{m,*}$ -- task $t-1$ data processed by frozen pre-trained encoder ($*$ indicates pre-trained weights); $\mathbf{Z}_{t-1}^{m,t}$ -- task $t-1$ data processed by learnable transformation at stage $t$. We list the notation table in ~\autoref{apdix:notation}.

\subsection{Key Challenges in Hyperbolic Geometry}

\textbf{Stability.}
In hyperbolic representation learning, the preservation of historical knowledge fundamentally depends on maintaining the geometric structure of the embedding space.
This requires maintaining three geometric properties of the learned representations\footnote{Given that arccosh$(\cdot)$ is a strictly monotonic function in \autoref{equ:distance}, equality of similarity values $\mathbf{S}$ directly corresponds to equality of the underlying Lorentzian inner products in \autoref{def:inner_product}. We thus simplify our notation accordingly.}. $\forall m, m' \in \{1, 2, \ldots, M\}, m \neq m'$:

\begin{infobox}
    a
\begin{enumerate}[label=(\textbf{P\arabic*})]
\item \label{P:intra} \textbf{Intra-modal preservation}:

$\mathbf{Z}_{t-1}^{m,t} \mathbf{G} (\mathbf{Z}_{t-1}^{m,t})^{\top} = \mathbf{Z}_{t-1}^{m,t-1} \mathbf{G} (\mathbf{Z}_{t-1}^{m,t-1})^{\top}$;
\item \label{P:inter} \textbf{Inter-modal preservation}:

$\mathbf{Z}_{t-1}^{m,t} \mathbf{G} (\mathbf{Z}_{t-1}^{m',t})^{\top} = \mathbf{Z}_{t-1}^{m,t-1} \mathbf{G} (\mathbf{Z}_{t-1}^{m',t-1})^{\top}$;
\item \label{P:hierarchical} \textbf{Hierarchical preservation}:

$\|(\mathbf{z}_{t-1}^{m, t})_{[1:d]}\| = \|(\mathbf{z}_{t-1}^{m, t-1})_{[1:d]}\|$.

\end{enumerate}
\end{infobox}

\textit{Intra-modal preservation} ensures that domain-specific knowledge within each modality remains intact, while \textit{inter-modal preservation} maintains the learned cross-modal associations that enable effective multimodal understanding. \textit{Hierarchical preservation} protects the conceptual hierarchies captured by the hyperbolic cone structure, which are crucial for representing complex semantic relationships between concepts (Detailed explanations are shown in ~\autoref{apdix:problem_statement}).

\begin{figure*}[!t]%
\centering
\includegraphics[width=\linewidth]{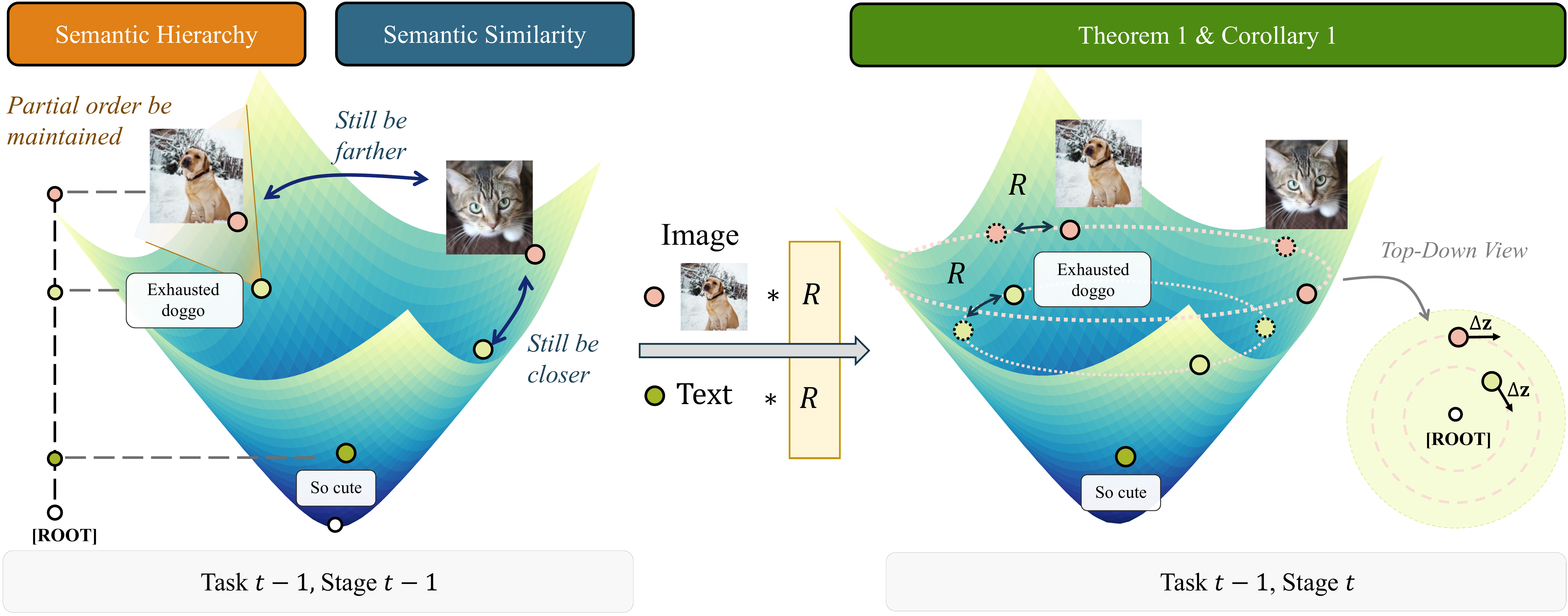}%
\caption{\textbf{Geometric interpretation of hyperbolic multimodal continual learning.}
Multimodal representations are embedded in hyperbolic space, where distance encodes semantic similarity and distance to the origin encodes semantic hierarchy.
Preserving old-task knowledge across tasks requires maintaining these geometric relations under a shared hyperbolic isometry.
The top-down view illustrates that first-order hierarchy preservation restricts updates to lie within the tangent space.}
\label{fig:geometric_interp}
\end{figure*}

\textbf{Plasticity.}
The model maintains the capability to learn from new datasets and adapt to evolving multimodal distributions. For the current task $t$, the model effectively acquires new knowledge by learning appropriate representations: $\{\mathbf{Z}_{t}^{m,t}\}_{m=1}^{M} \sim p_{t}^{(1,2,\ldots,M)}$, where $p_{t}^{(1,2,\ldots,M)}$ represents the joint distribution of the $t$-th task dataset across all $M$ modalities. This ensures the model can effectively capture task-specific patterns and cross-modal relationships while maintaining its representational capacity.

\noindent\textbf{Challenges.}
The above formulations clarify \emph{what} stability and plasticity demand in hyperbolic multimodal continual learning.
The challenge that must be addressed to make these objectives achievable in practice is:
\emph{the stability conditions \textbf{\autoref{P:intra}}--\textbf{\autoref{P:hierarchical}} are principled but cumbersome to enforce or analyze directly, motivating a simpler equivalent characterization (\autoref{coro:p1_p3_synergy})}
.

\section{Principles of Geometric Updates}

\label{sec:equivalent_analysis}

In this section, we derive geometric principles that determine the legality of parameter updates through preservation of geometric invariants in hyperbolic representations.

\subsection{Geometric Characterization}
\label{subsec:geometric_characterization}

This section provides a characterization of the conditions under which old-task representations are preserved across continual learning stages.

\begin{theorem}[Geometric Characterization of Preservation]
\label{coro:p1_p3_synergy}
Assume a non-degeneracy condition for task $t\!-\!1$ representations at stage $t\!-\!1$.
Then the preservation conditions
\textbf{\autoref{P:intra}}, \textbf{\autoref{P:inter}}, and \textbf{\autoref{P:hierarchical}}
hold for task $t\!-\!1$ representations between stages $t\!-\!1$ and $t$
if and only if there exists a shared transformation of the form
\[
\mathbf{R}=
\begin{pmatrix}
1 & \mathbf{0}^{\top}\\
\mathbf{0} & \widetilde{\mathbf{R}}
\end{pmatrix},
\qquad \widetilde{\mathbf{R}}\in \mathrm{SO}(d),
\]
such that, for all modalities $m\in\{1,\ldots,M\}$,
\[
\mathbf{Z}_{t-1}^{m,t}=\mathbf{Z}_{t-1}^{m,t-1}\mathbf{R}^{\top}.
\]
\end{theorem}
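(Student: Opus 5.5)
The "if" direction is a direct computation, and the "only if" direction goes through a Gram-matrix rigidity argument in Minkowski space.

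\textbf{Sufficiency.} First, $\mathbf{R}$ is a Lorentz isometry. Since $\mathbf{R}$ is block diagonal with a $1$ in the time-like slot and $\widetilde{\mathbf{R}}$ orthogonal, we have $\mathbf{R}^{\top}\mathbf{G}\mathbf{R}=\mathbf{G}$. Hence for any modalities $m,m'$ (equal or not),
\[
\mathbf{Z}_{t-1}^{m,t}\mathbf{G}(\mathbf{Z}_{t-1}^{m',t})^{\top}=\mathbf{Z}_{t-1}^{m,t-1}\mathbf{R}^{\top}\mathbf{G}\mathbf{R}(\mathbf{Z}_{t-1}^{m',t-1})^{\top}=\mathbf{Z}_{t-1}^{m,t-1}\mathbf{G}(\mathbf{Z}_{t-1}^{m',t-1})^{\top},
\]
which gives \textbf{\autoref{P:intra}} and \textbf{\autoref{P:inter}}. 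For \textbf{\autoref{P:hierarchical}}, the space-like part of each row transforms as $\mathbf{z}_{[1:d]}\mapsto\widetilde{\mathbf{R}}\mathbf{z}_{[1:d]}$, so its Euclidean norm is unchanged. The time-like coordinate is also unchanged, consistent with staying on $\mathbb{H}^d_K$.

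\textbf{Necessity.} Stack all modalities into $\mathbf{A}=[\mathbf{Z}_{t-1}^{1,t-1};\dots;\mathbf{Z}_{t-1}^{M,t-1}]\in\mathbb{R}^{MN\times(d+1)}$ and $\mathbf{B}$ analogously at stage $t$. Together, \textbf{\autoref{P:intra}} and \textbf{\autoref{P:inter}} say exactly that $\mathbf{B}\mathbf{G}\mathbf{B}^{\top}=\mathbf{A}\mathbf{G}\mathbf{A}^{\top}$, i.e., the full Minkowski Gram matrix is preserved.

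\emph{Step 1 (reduce to the Euclidean case).} Every row lies on $\mathbb{H}^d_K$, so $x_0=\sqrt{K^{-1}+\|\mathbf{x}_{[1:d]}\|^2}$. By \textbf{\autoref{P:hierarchical}}, $\|\mathbf{x}_{[1:d]}\|$ is preserved for every row, so the time-like coordinate of every row is unchanged. Writing $\mathbf{A}=[\mathbf{a}_0\ \mathbf{A}_s]$, the term $-\mathbf{a}_0\mathbf{a}_0^{\top}$ is identical on both sides of the Gram identity and cancels, leaving the Euclidean Gram identity $\mathbf{B}_s\mathbf{B}_s^{\top}=\mathbf{A}_s\mathbf{A}_s^{\top}$.

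\emph{Step 2 (Euclidean rigidity).} By the standard fact that equal Gram matrices determine vectors up to an orthogonal map, there is $\mathbf{Q}\in\mathrm{O}(d)$ with $\mathbf{B}_s=\mathbf{A}_s\mathbf{Q}^{\top}$. Concretely, define $\mathbf{Q}$ on the row space of $\mathbf{A}_s$, where it is well defined and isometric by the Gram equality, and extend it arbitrarily on the orthogonal complement. Setting $\widetilde{\mathbf{R}}=\mathbf{Q}$ gives the claimed form, and because the same $\mathbf{Q}$ serves all stacked rows, the transformation is automatically shared across modalities.

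\textbf{Main obstacle: getting $\mathrm{SO}(d)$ rather than $\mathrm{O}(d)$.} This is where the non-degeneracy assumption enters, and I would take it to mean that the rows of $\mathbf{A}_s$ do not span all of $\mathbb{R}^d$ (rank $<d$). In that case the orthogonal complement is nontrivial, and I would compose the extension with a reflection there to force $\det\mathbf{Q}=+1$. If instead the paper's non-degeneracy means full rank, then $\mathbf{Q}$ is unique and one must either accept $\mathrm{O}(d)$ or argue continuity of training from the identity, since the determinant is constant along a continuous path starting at $\det=1$. I would state this caveat explicitly. I would also note that full rank is what makes $\mathbf{Q}$ unique, and hence genuinely determined by the old-task data.
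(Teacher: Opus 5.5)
Your proof is correct, and it takes a more elementary route than the paper. The paper works in Minkowski space first. It stacks two modalities and applies Witt's extension theorem to the preserved Lorentz Gram matrix to obtain $\mathbf{L}\in\mathrm{SO}^+(1,d)$. It uses $\mathrm{rank}=d+1$ for uniqueness and a separate corollary to argue that $\mathbf{L}$ is shared. Only afterwards does it impose (P3), writing $\mathbf{L}=\mathbf{R}\mathbf{B}$ and eliminating the boost by evaluating at the origin.

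You use (P3) first. Since $x_0=\sqrt{K^{-1}+\|\mathbf{x}_{[1:d]}\|^2}$ on $\mathbb{H}^d_K$, it freezes every time-like coordinate. The term $-\mathbf{a}_0\mathbf{a}_0^\top$ then cancels, and the problem reduces to Euclidean Gram rigidity. This buys several things:
\begin{itemize}
\item no Witt theorem and no rotation--boost decomposition;
\item all $M$ modalities handled at once, with sharing automatic;
\item existence with no rank hypothesis at all;
\item no need for the paper's step of applying (P3) at the origin $\mathbf{o}$, which is not one of the embeddings (P3) is assumed for.
\end{itemize}
You also supply the sufficiency direction, which the paper never writes out. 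You recover the identifiability the paper gets from full rank as well: $\mathrm{rank}[\mathbf{a}_0\ \mathbf{A}_s]=d+1$ forces $\mathrm{rank}\,\mathbf{A}_s=d$, so your $\mathbf{Q}$ is unique.

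On $\mathrm{SO}(d)$ versus $\mathrm{O}(d)$, your caveat is right, and the defect lies in the statement rather than in your proof. However, your reading of ``non-degeneracy'' is backwards. The paper's hypothesis is full rank $d+1$ of the stacked Lorentz matrix. As just noted, this forces the spatial rows to span $\mathbb{R}^d$. So the branch where you compose with a reflection on the orthogonal complement is exactly the case the hypothesis excludes.

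In the full-rank case the $\mathrm{SO}(d)$ conclusion is actually false. Take $\mathbf{D}=\mathrm{diag}(-1,1,\ldots,1)\in\mathrm{O}(d)$ and set $\mathbf{W}^{m,t}=\mathbf{D}\mathbf{W}^{m,t-1}$ for every $m$. The Lorentz layer then gives $\mathbf{Z}_{t-1}^{m,t}=\mathbf{Z}_{t-1}^{m,t-1}\,\mathrm{diag}(1,\mathbf{D})$, so (P1)--(P3) all hold. Yet by uniqueness the only compatible spatial block is $\mathbf{D}$, whose determinant is $-1$. The paper reaches $\det=+1$ only through its lemma that every isometry preserving the upper sheet is proper, and $\mathrm{diag}(1,-1,1,\ldots,1)$ refutes that lemma.

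So drop the rank-deficient branch and state the result with $\widetilde{\mathbf{R}}\in\mathrm{O}(d)$; your argument proves that version in both directions. Your continuity fix is valid only under an added hypothesis: (P1)--(P3) must hold along a continuous parameter path $u\mapsto\mathbf{W}(u)$ between the two stages, whereas the theorem compares only the endpoints. Under that hypothesis, uniqueness gives $\mathbf{Q}(u)^\top=(\mathbf{A}_s^\top\mathbf{A}_s)^{-1}\mathbf{A}_s^\top\mathbf{B}_s(u)$. This is continuous in $u$, so the determinant stays equal to $1$.
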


\noindent\emph{Proof sketch.}
Conditions \textbf{\autoref{P:intra}} and \textbf{\autoref{P:inter}} preserve all required Lorentzian inner products for
task $t\!-\!1$ between stages $t\!-\!1$ and $t$.
By ~\autoref{thm:witt_multimodal} (and ~\autoref{coro:shared_L}) in  ~\autoref{apdx:proof_theorem1}, this yields a unique \emph{shared} Lorentz
transformation $\mathbf{L}\in \mathrm{SO}^{+}(1,d)$ acting identically across modalities.
Condition \textbf{\autoref{P:hierarchical}} enforces spatial-norm invariance;
Together, these give ~\autoref{coro:p1_p3_synergy}; the detailed proof is shown in ~\autoref{apdx:proof_theorem1}.

\paragraph{\crblue{Role of non-degeneracy.}}
\crblue{The non-degeneracy condition is used to obtain the sharpest identifiable characterization of the preservation map. When old-task embeddings span the relevant Lorentzian subspace, the preserved Gram structure determines a shared isometry uniquely, making the preservation requirement maximally constrained. If some representation directions collapse or become redundant, the same preservation conditions still apply, but the compatible isometry may no longer be unique because the old-task geometry is less informative along those collapsed directions. Thus, non-degeneracy should be interpreted as an identifiability condition for the strongest theoretical statement, rather than as a restriction required by the HMCL algorithm.}

\subsection{Hierarchy Preservation at First Order}
\label{subsec:hierarchy_preservation}

In hyperbolic space, the time-like coordinate is functionally constrained by the spatial components through the geometry of the Lorentz manifold. Leveraging this structural dependency, this section characterizes the first-order conditions under which hierarchical structure is preserved, forming the theoretical basis for the update rules derivation.

\begin{corollary}[Updates under hierarchical preservation]
\label{cor:timelike_stability}
Consider an update $\Delta \mathbf{z} = \mathbf{z}_{t-1}^{m,t} - \mathbf{z}_{t-1}^{m,t-1}$ applied to the old-task embedding $\mathbf{z}_{t-1}^{m,t-1}$.
If the time-like coordinate is required to be first-order invariant,
i.e.~\autoref{P:hierarchical},
then the induced spatial update must satisfy
\begin{equation}
\label{eq:radial_free}
\mathbf{z}_{[1{:}d]}^\top \Delta \mathbf{z}_{[1{:}d]} = 0.
\end{equation}
\end{corollary}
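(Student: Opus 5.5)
The plan is to use the hyperboloid constraint $\mathbf{z}^\top\mathbf{G}\mathbf{z}=-1/K$ to tie the time-like coordinate to the spatial part, and then to linearize. Every point of $\mathbb{H}^d_K$ satisfies $z_0=\sqrt{K^{-1}+\|\mathbf{z}_{[1:d]}\|_2^2}$, which is exactly the form in which the Lorentz transformation layer \autoref{eq:lorentz_layer} produces its outputs. By this identity, \autoref{P:hierarchical} (invariance of $\|\mathbf{z}_{[1:d]}\|$) is the same as invariance of $z_0$. So I will state the first-order version of the hypothesis as $\Delta z_0 = o(\|\Delta\mathbf{z}\|)$, i.e.\ the differential of $z_0$ along the update vanishes.

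Next I differentiate the constraint. Writing $\mathbf{z}=\mathbf{z}_{t-1}^{m,t-1}$ and $\mathbf{z}+\Delta\mathbf{z}=\mathbf{z}_{t-1}^{m,t}$, both points lie on $\mathbb{H}^d_K$. Subtracting the two constraint equations gives
\begin{equation*}
2\langle \mathbf{z},\Delta\mathbf{z}\rangle_{\mathcal{L}}+\langle\Delta\mathbf{z},\Delta\mathbf{z}\rangle_{\mathcal{L}}=0 .
\end{equation*}
At first order this is the tangency condition $\langle\mathbf{z},\Delta\mathbf{z}\rangle_{\mathcal{L}}=0$, which expands to $-z_0\Delta z_0+\mathbf{z}_{[1:d]}^\top\Delta\mathbf{z}_{[1:d]}=0$. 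Equivalently, I can differentiate $z_0=\sqrt{K^{-1}+\|\mathbf{z}_{[1:d]}\|^2}$ directly:
\begin{equation*}
\Delta z_0=\frac{\mathbf{z}_{[1:d]}^\top\Delta\mathbf{z}_{[1:d]}}{z_0}+O(\|\Delta\mathbf{z}\|^2).
\end{equation*}
Imposing $\Delta z_0=0$ at first order, and using that $z_0\ge K^{-1/2}>0$ so we may divide by it, yields $\mathbf{z}_{[1:d]}^\top\Delta\mathbf{z}_{[1:d]}=0$. This is \eqref{eq:radial_free}.

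There are two side remarks. First, the same conclusion follows directly from \autoref{P:hierarchical} by linearizing $\|\mathbf{z}_{[1:d]}+\Delta\mathbf{z}_{[1:d]}\|^2-\|\mathbf{z}_{[1:d]}\|^2=2\mathbf{z}_{[1:d]}^\top\Delta\mathbf{z}_{[1:d]}+\|\Delta\mathbf{z}_{[1:d]}\|^2$. Second, this is consistent with \autoref{coro:p1_p3_synergy}: a spatial rotation $\widetilde{\mathbf{R}}=\exp(\mathbf{A})$ with $\mathbf{A}$ skew-symmetric gives $\Delta\mathbf{z}_{[1:d]}\approx\mathbf{A}\mathbf{z}_{[1:d]}$, and $\mathbf{z}^\top\mathbf{A}\mathbf{z}=0$ for skew-symmetric $\mathbf{A}$.

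The main obstacle is not computational but lies in stating precisely what ``first-order invariant'' means. I will make it explicit that the update is treated as infinitesimal, i.e.\ that second-order terms in $\Delta\mathbf{z}$ are dropped. With that convention the claim is an exact statement about the differential of $z_0$ restricted to the tangent space $T_{\mathbf{z}}\mathbb{H}^d_K$.
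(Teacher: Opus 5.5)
Your proposal is correct and follows essentially the same route as the paper. The paper first proves a lemma that linearizes the hyperboloid constraint to get $\Delta z_0 = (\mathbf{z}_{[1:d]}^\top/z_0)\,\Delta\mathbf{z}_{[1:d]}$. The corollary then sets $\Delta z_0=0$ from (P3) and divides by $z_0>0$, exactly as you do. Your side remark, linearizing $\|\mathbf{z}_{[1:d]}\|^2$ directly, is also valid and skips the constraint entirely.
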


{\color{black}

\begin{infobox}
    a
    \textbf{Key takeaway.} \autoref{cor:timelike_stability} indicates that preserving the hierarchy of old-task embeddings enforces that the spatial update is orthogonal to the current embedding's spatial direction. By \autoref{eq:timelike_first_order}, admissible updates are restricted within the tangent space so as to suppress first-order drift along the time-like dimension.

\end{infobox}

}

\paragraph{Geometric interpretation.}
\autoref{fig:geometric_interp} illustrates hyperbolic multimodal continual learning from a geometric perspective.
In hyperbolic space, distances encode semantic similarity, while distance to the origin captures semantic hierarchy.
As established in \autoref{coro:p1_p3_synergy}, preserving old-task knowledge across tasks requires updates to act as a shared hyperbolic isometry, maintaining both properties.

The top-down view highlights the consequence of \autoref{cor:timelike_stability}:
at first order, hierarchy preservation restricts updates to changes that keep representations’ distance to the origin unchanged.
Conventional continual learning methods lack such constraints, leading to violations of the stability conditions and degradation on previously learned tasks.

\section{Hyperbolic Multimodal Continual Learning}

Based on \autoref{coro:p1_p3_synergy} and \autoref{cor:timelike_stability}, we further derive a principled methodology for identifying admissible parameter updates that satisfy the geometric characterization established in this section.

\subsection{Admissible Parameter Updates}
\label{subsec:admissible_updates}

\begin{proposition}[Admissible Parameter Updates]
\label{prop:blockwise_admissible}
Consider the Lorentz transformation layer in ~\autoref{sec:pre_hyperbolic},
with parameters
\(
\mathbf{W}^{m,t}=[\,\mathbf{w}^{m,t}_{0}\;\;\mathbf{W}^{m,t}_{s}\,].
\)
Assume ~\autoref{coro:p1_p3_synergy} holds at stage \(t-1\).
For an infinitesimal update
\(
\mathbf{W}^{m,t}=\mathbf{W}^{m,t-1}-\eta\,\Delta\mathbf{W}^m
\)
with \(\eta\to0^+\),
\(\Delta\mathbf W_s^m\) is admissible if and only if there exists
\(\boldsymbol{\Omega}^\top=-\boldsymbol{\Omega}\) such that
\begin{equation}
\mathbf{Z}^{m,*}_{t-1}[1\!:\!d]\big(\Delta\mathbf{W}^{m}_{s}\big)^\top
=
\mathbf{Z}^{m,t-1}_{t-1}[1\!:\!d]\boldsymbol{\Omega}^\top .
\label{eq:prop_blockwise_admissible_spatial}
\end{equation}
Moreover, HMCL uses the block-wise constraint
\({\color{black}\Delta\mathbf w_0^m=\mathbf0}\).
The time-like output is reconstructed from the spatial output and is therefore preserved by the first-order spatial rotation.
\end{proposition}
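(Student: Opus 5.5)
The plan is to reduce admissibility to a first-order statement about the spatial outputs, using \autoref{coro:p1_p3_synergy} as the definition of admissibility. By that theorem, an update is admissible exactly when the new old-task embeddings are obtained from the stage-$(t-1)$ embeddings by a shared block rotation $\mathbf{R}=\mathrm{diag}(1,\widetilde{\mathbf R})$ with $\widetilde{\mathbf R}\in\mathrm{SO}(d)$. For an infinitesimal update with $\eta\to0^+$, I will parametrize this curve of rotations as $\widetilde{\mathbf R}(\eta)=\mathbf I-\eta\boldsymbol\Omega+O(\eta^2)$. The tangent space of $\mathrm{SO}(d)$ at the identity is the space of skew-symmetric matrices, so $\boldsymbol\Omega^\top=-\boldsymbol\Omega$. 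This turns the condition into a linear first-order equation.

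\textbf{Spatial block.} By \eqref{eq:lorentz_layer}, the spatial output of the layer is linear in $\mathbf W$:
\[
\mathbf Z^{m,t}_{t-1}[1\!:\!d]=\mathbf Z^{m,*}_{t-1}(\mathbf W^{m,t})^\top .
\]
Substituting $\mathbf W^{m,t}=\mathbf W^{m,t-1}-\eta\Delta\mathbf W^m$ gives
\[
\mathbf Z^{m,t}_{t-1}[1\!:\!d]=\mathbf Z^{m,t-1}_{t-1}[1\!:\!d]-\eta\,\mathbf Z^{m,*}_{t-1}(\Delta\mathbf W^m)^\top .
\]
The rotation requirement reads
\[
\mathbf Z^{m,t}_{t-1}[1\!:\!d]=\mathbf Z^{m,t-1}_{t-1}[1\!:\!d]\widetilde{\mathbf R}^\top=\mathbf Z^{m,t-1}_{t-1}[1\!:\!d]-\eta\,\mathbf Z^{m,t-1}_{t-1}[1\!:\!d]\boldsymbol\Omega^\top+O(\eta^2).
\]
Matching the $O(\eta)$ terms gives \eqref{eq:prop_blockwise_admissible_spatial}, where the $\mathbf Z^{m,*}_{t-1}[1\!:\!d]$ factor collects the contribution of the spatial input columns acting through $\Delta\mathbf W^m_s$. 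The time-like column $\Delta\mathbf w_0^m$ is set to $\mathbf 0$ by the block-wise constraint, so only $\Delta\mathbf W^m_s$ contributes.

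For the converse, suppose $\Delta\mathbf W^m_s$ satisfies \eqref{eq:prop_blockwise_admissible_spatial} with some skew $\boldsymbol\Omega$. I will show that the first-order displacement of every row $\mathbf z_{[1:d]}$ is $-\eta\,\boldsymbol\Omega\mathbf z_{[1:d]}$. Since $\mathbf z_{[1:d]}^\top\boldsymbol\Omega\mathbf z_{[1:d]}=0$ for skew $\boldsymbol\Omega$, this is exactly \eqref{eq:radial_free}. Hence by \autoref{cor:timelike_stability} the spatial norms, and therefore \autoref{P:hierarchical}, are preserved to first order. Pairwise spatial inner products are also preserved to first order, because
\[
\mathbf x^\top(\boldsymbol\Omega+\boldsymbol\Omega^\top)\mathbf y=0 .
\]

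\textbf{Time-like coordinate.} The time-like output is $\sqrt{K^{-1}+\|\cdot\|^2}$ of the spatial output, so its first-order variation is proportional to $\mathbf z_{[1:d]}^\top\Delta\mathbf z_{[1:d]}=0$. The Lorentzian Gram matrices in \autoref{P:intra} and \autoref{P:inter} are therefore preserved to first order. The shared-ness across modalities comes from requiring the same $\boldsymbol\Omega$ for all $m$. That requirement is inherited from the shared $\mathbf R$ in \autoref{coro:p1_p3_synergy}, and I will state it explicitly.

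\textbf{Main obstacle.} The hard step is the necessity direction. I must show that any first-order update preserving \autoref{P:intra}--\autoref{P:hierarchical} has a velocity of the rotation form. This needs a differentiated version of \autoref{coro:p1_p3_synergy}. Differentiating the Gram identity gives $\mathbf Z\mathbf A\mathbf Z^\top=0$ with $\mathbf A=\mathbf B+\mathbf B^\top$, where $\mathbf B$ is the velocity matrix acting on $\mathbf Z$. The non-degeneracy assumption (the rows span the space) then forces $\mathbf A=0$, so $\mathbf B$ is skew. The norm and time-like invariance kill the boost components, leaving a skew spatial block $\boldsymbol\Omega$. I will carry this out via the Lie algebra $\mathfrak{so}(1,d)$, whose boost part is removed by \autoref{P:hierarchical}.
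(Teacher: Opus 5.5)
Your spatial-block computation and your time-like argument are exactly the paper's proof, which ends there. The spatial block consists of three steps: the old-task spatial output is exactly linear in $\mathbf W$; the shared rotation of Theorem~\ref{coro:p1_p3_synergy} is expanded as $\widetilde{\mathbf R}^\top=\mathbf I-\eta\boldsymbol\Omega^\top+O(\eta^2)$; and the $O(\eta)$ terms are matched, with $\Delta\mathbf w_0^m=\mathbf 0$ removing the time-like input column.

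Where you differ is in making admissibility a genuinely first-order notion. You prove the converse explicitly, you require a single $\boldsymbol\Omega$ shared by all modalities, and you use an $\mathfrak{so}(1,d)$ version of Theorem~\ref{coro:p1_p3_synergy} in place of the finite-group one. These additions earn their keep.

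With modality-dependent $\boldsymbol\Omega_m$, the inner product of old rows $\mathbf x$ (modality $m$) and $\mathbf y$ (modality $m'$) moves at first order by $-\eta\,\mathbf x^\top(\boldsymbol\Omega_{m'}-\boldsymbol\Omega_m)\mathbf y$. So the per-modality ``there exists $\boldsymbol\Omega$'' in the statement is too weak for (P2).

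More importantly, the skew family is intrinsically tangent-level. Whenever the admissibility equation holds, the old-task spatial output after the step is exactly $\mathbf Z^{m,t-1}_{t-1}[1\!:\!d](\mathbf I-\eta\boldsymbol\Omega^\top)$. Since $\|(\mathbf I-\eta\boldsymbol\Omega)\mathbf z\|^2=\|\mathbf z\|^2+\eta^2\|\boldsymbol\Omega\mathbf z\|^2$, \emph{exact} (P3) forces $\boldsymbol\Omega=\mathbf 0$ once the old spatial rows span $\mathbb R^d$, which non-degeneracy guarantees. The paper invokes the exact theorem and then Taylor-expands $\mathbf R(\eta)$; that is literally consistent only in the $\boldsymbol\Omega=\mathbf 0$ (null-space) case. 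Your differentiated statement is what ``ignoring second-order terms'' actually means. The paper's route buys brevity by reusing Theorem~\ref{coro:p1_p3_synergy} wholesale; yours buys a two-sided first-order result.

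Two repairs are needed when you write it out.

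First, your necessity sketch presumes the velocity factors as $\dot{\mathbf Z}=\mathbf Z\mathbf X^\top$ through the joint old-task matrix. This is not automatic, because modality $m$ moves by $-\mathbf Z^{m,*}_{t-1}(\Delta\mathbf W^m)^\top$ with its own $\Delta\mathbf W^m$. It does follow from non-degeneracy:
\begin{itemize}
\item Left-multiply the differentiated Gram identity $\dot{\mathbf Z}\mathbf G\mathbf Z^\top+\mathbf Z\mathbf G\dot{\mathbf Z}^\top=\mathbf 0$ by the orthogonal projector $\mathbf P^\perp$ onto $\mathrm{col}(\mathbf Z)^\perp$. This gives $\mathbf P^\perp\dot{\mathbf Z}\mathbf G\mathbf Z^\top=\mathbf 0$.
\item Since $\mathbf Z^\top$ maps onto $\mathbb R^{d+1}$, this yields $\mathbf P^\perp\dot{\mathbf Z}=\mathbf 0$.
\item Full column rank then turns $\mathbf Z(\mathbf X^\top\mathbf G+\mathbf G\mathbf X)\mathbf Z^\top=\mathbf 0$ into $\mathbf X\in\mathfrak{so}(1,d)$, after which your boost elimination goes through.
\end{itemize}

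Second, in the converse, Corollary~\ref{cor:timelike_stability} points the wrong way: it states that invariance implies orthogonality. Use $\tfrac{d}{d\eta}\|\mathbf z_{[1:d]}\|^2=2\,\mathbf z_{[1:d]}^\top\dot{\mathbf z}_{[1:d]}$ directly instead.
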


\begin{corollary}[Block-wise Canonical Update]
\label{cor:canonical_admissible}
Under the admissibility condition of~\autoref{prop:blockwise_admissible},
the canonical admissible update is characterized by \footnote{To simplify notation, $\mathbf{Z}^{m,*}_{t-1, s}$ represents $\mathbf{Z}^{m,*}_{t-1}[1:d]$.}
\begin{equation}
\mathbf{Z}^{m,*}_{t-1, s}\big(\Delta\mathbf{W}^{m}_{s}\big)^\top=\mathbf{0},
\qquad
\Delta\mathbf{w}^{m}_{0}=\mathbf{0},
\label{eq:canonical_constraints}
\end{equation}
and is given in closed form by
\begin{equation}
\Delta\mathbf{W}^{m}_{s}
=
\boldsymbol{\delta}^{m}_{s}\big(\mathbf{I}-\mathbf{P}_{t-1}\big),
\label{eq:canonical_projection}
\end{equation}
\end{corollary}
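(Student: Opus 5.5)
The plan is to read the corollary as the special case $\boldsymbol{\Omega}=\mathbf{0}$ of \autoref{prop:blockwise_admissible}, and then solve the resulting linear constraint in closed form by orthogonal projection. First I will check that \autoref{eq:canonical_constraints} is admissible. Setting $\boldsymbol{\Omega}=\mathbf{0}$, which is trivially skew-symmetric, makes the right-hand side of \autoref{eq:prop_blockwise_admissible_spatial} vanish. The admissibility condition then reduces to $\mathbf{Z}^{m,*}_{t-1,s}(\Delta\mathbf{W}^m_s)^\top=\mathbf{0}$. Together with the block-wise choice $\Delta\mathbf{w}^m_0=\mathbf{0}$ already adopted in \autoref{prop:blockwise_admissible}, this is exactly \autoref{eq:canonical_constraints}.

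I would call this choice \emph{canonical} for three reasons. It is the unique member of the admissible family that leaves every old-task spatial output unchanged to first order. Equivalently, it corresponds to the identity element $\widetilde{\mathbf{R}}=\mathbf{I}$ of the shared isometry group in \autoref{coro:p1_p3_synergy}. It is also the only choice that does not depend on the unknown rotation generator $\boldsymbol{\Omega}$. Any $\boldsymbol{\Omega}\neq\mathbf{0}$ would have to be shared across modalities, which would add a cross-modal coupling constraint, so fixing $\boldsymbol{\Omega}=\mathbf{0}$ is the natural modality-decoupled choice.

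Next I will derive the closed form \autoref{eq:canonical_projection}. The constraint $\mathbf{Z}^{m,*}_{t-1,s}(\Delta\mathbf{W}^m_s)^\top=\mathbf{0}$ says that every row of $\Delta\mathbf{W}^m_s$ is orthogonal to the row space $\mathcal{R}=\operatorname{row}(\mathbf{Z}^{m,*}_{t-1,s})\subseteq\mathbb{R}^d$. I will define $\mathbf{P}_{t-1}$ as the orthogonal projector onto $\mathcal{R}$, namely
\[
\mathbf{P}_{t-1}=(\mathbf{Z}^{m,*}_{t-1,s})^\top\big(\mathbf{Z}^{m,*}_{t-1,s}(\mathbf{Z}^{m,*}_{t-1,s})^\top\big)^{+}\mathbf{Z}^{m,*}_{t-1,s}.
\]
In practice this can be built from an SVD or from uncentered feature covariance. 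Then I will verify two things:
\begin{itemize}
\item \emph{Sufficiency.} For any raw update direction $\boldsymbol{\delta}^m_s$, the update $\Delta\mathbf{W}^m_s=\boldsymbol{\delta}^m_s(\mathbf{I}-\mathbf{P}_{t-1})$ gives $\mathbf{Z}^{m,*}_{t-1,s}(\mathbf{I}-\mathbf{P}_{t-1})^\top(\boldsymbol{\delta}^m_s)^\top=\mathbf{0}$. This holds because $\mathbf{P}_{t-1}$ is symmetric and $\mathbf{Z}^{m,*}_{t-1,s}\mathbf{P}_{t-1}=\mathbf{Z}^{m,*}_{t-1,s}$.
\item \emph{Completeness.} Every solution has this form: take $\boldsymbol{\delta}^m_s=\Delta\mathbf{W}^m_s$, since its rows already lie in $\mathcal{R}^\perp$ and are therefore fixed by $\mathbf{I}-\mathbf{P}_{t-1}$.
\end{itemize}
This also shows that the projected update is the Frobenius-nearest admissible update to the raw gradient $\boldsymbol{\delta}^m_s$, which justifies calling it canonical.

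The main obstacle is the index bookkeeping. $\Delta\mathbf{W}^m_s$ acts only on the spatial input block, while the Lorentz layer takes the full input $\mathbf{z}$. I have to make sure the constraint really involves only $\mathbf{Z}^{m,*}_{t-1,s}$ once $\Delta\mathbf{w}^m_0=\mathbf{0}$; this requires splitting $\mathbf{W}\mathbf{z}=\mathbf{w}_0z_0+\mathbf{W}_s\mathbf{z}_s$. I also have to confirm that the time-like output, reconstructed via \autoref{eq:lorentz_layer}, stays unchanged. That follows because the spatial output $\mathbf{W}\mathbf{z}$ is exactly unchanged on old data, so $\sqrt{K^{-1}+\|\mathbf{W}\mathbf{z}\|^2}$ is unchanged as well. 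As a result, \autoref{P:intra}--\autoref{P:hierarchical} hold trivially, consistent with \autoref{cor:timelike_stability}.
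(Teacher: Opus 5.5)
Your argument is correct, but it reaches the formula by a different route from the paper. The paper does not fix $\boldsymbol{\Omega}=\mathbf{0}$ in advance. It solves the rotation-penalized problem $\min_{\Delta\mathbf{W}^m_s,\boldsymbol{\Omega}}\tfrac12\|\Delta\mathbf{W}^m_s-\boldsymbol{\delta}^m_s\|_F^2+\tfrac{\lambda}{2}\|\boldsymbol{\Omega}\|_F^2$, subject to the general admissibility constraint of the Proposition. With a Lagrange multiplier $\boldsymbol{\Lambda}$, it obtains the stationarity conditions $\Delta\mathbf{W}^m_s=\boldsymbol{\delta}^m_s-\boldsymbol{\Lambda}^\top\mathbf{Z}^{m,*}_{t-1,s}$ and $\boldsymbol{\Omega}=-\lambda^{-1}\operatorname{Skew}\big((\mathbf{Z}^{m,t-1}_{t-1}[1\!:\!d])^\top\boldsymbol{\Lambda}\big)$. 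It then lets $\lambda\to\infty$, so that $\boldsymbol{\Omega}^\star=\mathbf{0}$ and the pseudoinverse solution for $\boldsymbol{\Lambda}$ produces $\mathbf{I}-\mathbf{P}_{t-1}$. You instead impose $\boldsymbol{\Omega}=\mathbf{0}$ as a hard constraint, justify it qualitatively, and solve the resulting linear constraint by orthogonal-projection geometry.

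The paper's route makes $\boldsymbol{\Omega}=\mathbf{0}$ the outcome of an explicit minimal-rotation criterion rather than a definition, and your closing sentence needs exactly that. The update $\boldsymbol{\delta}^m_s(\mathbf{I}-\mathbf{P}_{t-1})$ is the Frobenius-nearest update only within the $\boldsymbol{\Omega}=\mathbf{0}$ class. The full admissible set of the Proposition is in general a strictly larger subspace. For example, $\Delta\mathbf{W}^m_s=\boldsymbol{\Omega}\mathbf{W}^{m,t-1}_s$ is admissible for any skew $\boldsymbol{\Omega}$ with $\boldsymbol{\Omega}\mathbf{w}^{m,t-1}_0=\mathbf{0}$, and nonzero such $\boldsymbol{\Omega}$ exist once $d\ge 3$. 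So a generic $\boldsymbol{\delta}^m_s$ has a strictly closer admissible update with nonzero rotation, and it is the penalty that excludes it.

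Your route is in turn cleaner and yields more. It needs no limiting argument or $\mathcal{O}(\lambda^{-1})$ bookkeeping, and it gives an explicit completeness statement: every $\boldsymbol{\Omega}=\mathbf{0}$ admissible update has the projected form. It also shows that old-task outputs are preserved exactly for any step size, not only to first order. This holds because the layer is linear in $\mathbf{W}$ before the time-like coordinate is reconstructed.
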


where $\mathbf{P}_{t-1}
=
(\mathbf{Z}^{m,*}_{t-1, s})^\top
\Big(\mathbf{Z}^{m,*}_{t-1, s}(\mathbf{Z}^{m,*}_{t-1, s})^\top\Big)^{\dagger}
\mathbf{Z}^{m,*}_{t-1, s}$.

\crblue{The detailed proofs of ~\autoref{prop:blockwise_admissible} and ~\autoref{cor:canonical_admissible} are provided in ~\autoref{appendix:blockwise} and ~\autoref{appendix:canonical}, respectively.}

This section establishes a principled link between geometric preservation
and parameter updates in hyperbolic multimodal continual learning.
\begin{infobox}
a
\textbf{Key takeaway.}
We show that preserving intra-modal, inter-modal, and hierarchical structure in turn induces a strict first-order constraint on admissible parameter updates (\autoref{prop:blockwise_admissible}).
Under this constraint, the optimal update is given by an orthogonal projection,
providing a closed-form and geometry-consistent parameter update rule (\autoref{cor:canonical_admissible}).
\end{infobox}

\subsection{Training with Canonical Updates (HMCL)}

We now describe how the canonical update in
\autoref{cor:canonical_admissible} is incorporated into training.
At each stage, the update restricts parameter changes to directions
orthogonal to the subspace induced by old-task representations,
while remaining compatible with standard gradient-based optimization. The procedure is summarized in~\autoref{alg:canonical_update},
which applies the canonical projection to the unconstrained update
and enforces stability of hyperbolic old-task representations.

\begin{remark}
    The canonical update introduces no additional learnable parameters.
The projection basis \(\mathbf{V}\) depends only on old-task representations
and can be computed once per stage.
\crblue{In implementation, we maintain a fixed-size \(d\times d\) covariance summary of the retained old-task spatial subspace, rather than storing separate bases or historical representations for every previous task; hence the memory overhead is constant with respect to the number of tasks.}
As a result, the update incurs negligible overhead and can be seamlessly
integrated into existing training pipelines.
\end{remark}

\crblue{We discuss finite-step AdamW optimization and its residual approximation error in \autoref{apdix:finite_step_adamw}.}

\begin{algorithm2e}[t]
\caption{HMCL with Canonical Updates}
\label{alg:canonical_update}
\KwIn{
{\color{black}Old-task spatial representations \(\mathbf{Z}^{m,*}_{t-1,s} \in \mathbb{R}^{N\times d}\);}\\
Current parameters \(\mathbf{W}^{m,t-1}=[\mathbf{w}^{m,t-1}_{0},\mathbf{W}^{m,t-1}_{s}]\);\\
Unconstrained update direction \(\boldsymbol{\delta}^{m}_{s}\);\\
Step size \(\eta\).
}
\KwOut{
Updated parameters \(\mathbf{W}^{m,t}\).
}

Compute a low-rank orthonormal basis \(\mathbf{V}\) of
\(\mathbf{Z}^{m,*}_{t-1,s}\) via SVD (or PCA)~\citep{lopez2017gradient};

Project the unconstrained update:
\[
\Delta\mathbf{W}^{m}_{s}
=
\boldsymbol{\delta}^{m}_{s}\big(\mathbf{I}-\mathbf{V}\mathbf{V}^\top\big);
\]

Set the time-like update to zero $
\Delta\mathbf{w}^{m}_{0}=\mathbf{0};
$

Update parameters:
\[
\mathbf{W}^{m,t}_{s}
=
\mathbf{W}^{m,t-1}_{s}
-
\eta\,\Delta\mathbf{W}^{m}_{s},
\qquad
\mathbf{w}^{m,t}_{0}
=
\mathbf{w}^{m,t-1}_{0}.
\]

\Return \(\mathbf{W}^{m,t}\).
\end{algorithm2e}

\begin{table*}[t]
\centering
\small
\setlength{\tabcolsep}{4.6pt}
\renewcommand{\arraystretch}{1.18}

\caption{Average performance (mean $\pm$ std.) on classification and retrieval tasks. Bold indicates statistical significance ($p < 0.05$). \crblue{The dedicated retrieval columns use image-to-text R@$K$, with $K=5$ for MERU and $K=1$ for HyCoCLIP; the Overall columns retain text-to-image R@5 for retrieval tasks. HMCL uses the reported step-pullback configuration.}}

\begin{tabular}{@{} l l c c c c c c @{}}
\toprule[1.2pt]
& & \multicolumn{2}{c}{\textbf{Classification}}
  & \multicolumn{2}{c}{\textbf{Retrieval}}
  & \multicolumn{2}{c}{\textbf{Overall}} \\
\cmidrule(lr){3-4}\cmidrule(lr){5-6}\cmidrule(lr){7-8}
\textbf{Backbone} & \textbf{Method}
& {\textbf{Acc} $\uparrow$} & {\textbf{BWT\textsubscript{A}} $\uparrow$}
& {\crblue{\textbf{R@$K$}} $\uparrow$} & {\crblue{\textbf{BWT\textsubscript{R$K$}}} $\uparrow$}
& {\textbf{Performance} $\uparrow$} & {\textbf{BWT} $\uparrow$} \\
\midrule[1pt]

\multirow{5}{*}{MERU-L}
& Vanilla
& \(40.59 \pm 0.21\) & \(-6.46 \pm 0.23\)
& \(31.97 \pm 0.31\) & \(-6.46 \pm 0.23\)
& \(38.22 \pm 0.18\) & \(-3.49 \pm 0.24\) \\
\cmidrule(lr){2-8}
& EWC
& \(40.52 \pm 0.15\) & \(- 6.43 \pm 0.15\) & \(32.96 \pm 0.08\) & \(- 3.53 \pm 0.25\) & \(38.22 \pm 0.13\) & \(- 6.04 \pm 0.13\) \\
& GEM
& \(40.55 \pm 0.16\) & \(- 6.40 \pm 0.16\) & \(33.00 \pm 0.07\) & \(- 3.50 \pm 0.24\) & \(38.25 \pm 0.14\) & \(- 6.02 \pm 0.13\) \\
& C-FLAT
& \(39.28 \pm 0.30\) & \(- 6.23 \pm 0.27\) & \(31.42 \pm 0.17\) & \(- 2.56 \pm 0.30\) & \(37.16 \pm 0.27\) & \(- 5.75 \pm 0.23\) \\
& \cellcolor{myblue!20}HMCL (ours)
& \cellcolor{myblue!20}\crblue{\(\mathbf{43.34} \pm 0.12\)} & \cellcolor{myblue!20}\crblue{\(\mathbf{-0.74} \pm 0.08\)}
& \cellcolor{myblue!20}\crblue{\(\mathbf{40.26} \pm 0.27\)} & \cellcolor{myblue!20}\crblue{\(\mathbf{-1.28} \pm 0.21\)}
& \cellcolor{myblue!20}\crblue{\(\mathbf{41.36} \pm 0.10\)} & \cellcolor{myblue!20}\crblue{\(\mathbf{-1.07} \pm 0.09\)} \\
& \deltagain{\(\boldsymbol{\Delta}\) vs. Vanilla}
& \deltagain{+6.8\%} & \deltagain{+88.5\%}
& \deltagain{+25.9\%} & \deltagain{+80.2\%}
& \deltagain{+8.2\%} & \deltagain{+69.3\%} \\
\midrule[1pt]

\multirow{5}{*}{MERU-B}
& Vanilla
& \(44.34 \pm 0.07\) & \(-7.26 \pm 0.14\)
& \(33.66 \pm 0.19\) & \(-5.43 \pm 0.19\)
& \(41.82 \pm 0.07\) & \(-5.43 \pm 0.19\) \\
\cmidrule(lr){2-8}
& EWC
& \(44.48 \pm 0.28\) & \(- 6.63 \pm 0.28\) & \(31.87 \pm 0.45\) & \(- 4.36 \pm 0.39\) & \(41.94 \pm 0.22\) & \(- 6.34 \pm 0.25\) \\
& GEM
& \(44.91 \pm 0.76\) & \(-7.00 \pm 0.72\)
& \(29.06 \pm 3.00\) & \(-7.15 \pm 3.11\)
& \(42.09 \pm 0.55\) & \(-6.81 \pm 0.73\) \\
& C-FLAT
& \(44.88 \pm 0.54\) & \(-7.58 \pm 0.61\)
& \(28.20 \pm 3.16\) & \(-7.91 \pm 3.55\)
& \(42.04 \pm 0.23\) & \(-7.34 \pm 0.70\) \\
& \cellcolor{myblue!20}HMCL (ours)
& \cellcolor{myblue!20}\crblue{\(\mathbf{46.15} \pm 0.03\)} & \cellcolor{myblue!20}\crblue{\(\mathbf{-2.46} \pm 0.05\)}
& \cellcolor{myblue!20}\crblue{\(\mathbf{39.57} \pm 0.31\)} & \cellcolor{myblue!20}\crblue{\(\mathbf{-2.20} \pm 0.28\)}
& \cellcolor{myblue!20}\crblue{\(\mathbf{44.21} \pm 0.03\)} & \cellcolor{myblue!20}\crblue{\(\mathbf{-2.36} \pm 0.03\)} \\
& \deltagain{\(\boldsymbol{\Delta}\) vs. Vanilla}
& \deltagain{+4.1\%} & \deltagain{+66.1\%}
& \deltagain{+17.6\%} & \deltagain{+59.5\%}
& \deltagain{+5.7\%} & \deltagain{+56.5\%} \\
\midrule[1pt]

\multirow{5}{*}{HyCoCLIP-B}
& Vanilla
& \(40.70 \pm 0.20\) & \(-5.46 \pm 0.14\)
& \(45.34 \pm 0.35\) & \(-5.46 \pm 0.14\)
& \(42.33 \pm 0.20\) & \(-5.42 \pm 0.63\) \\
\cmidrule(lr){2-8}
& EWC
& \(40.54 \pm 0.24\) & \(- 5.51 \pm 0.14\) & \(45.62 \pm 0.35\) & \(- 4.52 \pm 0.31\) & \(42.25 \pm 0.22\) & \( - 5.53 \pm 0.12 \) \\
& GEM
& \(40.58 \pm 0.27\) & \(-5.48 \pm 0.16\)
& \(45.67 \pm 0.41\) & \(-4.46 \pm 0.36\)
& \(42.29 \pm 0.24\) & \(-5.51 \pm 0.14\) \\
& C-FLAT
& \(40.67 \pm 0.18\) & \(-5.49 \pm 0.13\)
& \(45.31 \pm 0.28\) & \(-4.52 \pm 0.24\)
& \(42.32 \pm 0.17\) & \(-5.50 \pm 0.12\) \\
& \cellcolor{myblue!20}HMCL (ours)
& \cellcolor{myblue!20}\crblue{\(\mathbf{43.18} \pm 0.07\)} & \cellcolor{myblue!20}\crblue{\(\mathbf{-0.28} \pm 0.10\)}
& \cellcolor{myblue!20}\crblue{\(\mathbf{57.55} \pm 0.31\)} & \cellcolor{myblue!20}\crblue{\(\mathbf{-0.70} \pm 0.32\)}
& \cellcolor{myblue!20}\crblue{\(\mathbf{46.50} \pm 0.07\)} & \cellcolor{myblue!20}\crblue{\(\mathbf{-0.43} \pm 0.09\)} \\
& \deltagain{\(\boldsymbol{\Delta}\) vs. Vanilla}
& \deltagain{+6.1\%} & \deltagain{+94.9\%}
& \deltagain{+26.9\%} & \deltagain{+87.2\%}
& \deltagain{+9.9\%} & \deltagain{+92.1\%} \\
\bottomrule[1.2pt]
\end{tabular}
\label{tab:hmcl_delta}
\end{table*}

\section{Experiments}

The experiments are designed to systematically evaluate continual learning for hyperbolic multimodal representations.
We organize the evaluation around the following research questions.

\begin{itemize}
    \item \textbf{RQ1:} Does HMCL improve performance and reduce forgetting across backbones? (\autoref{sec:exp_main_results})
    \item \textbf{RQ2:} Does HMCL reduce radial, angular, and cross-modal old-task drift? (\autoref{sec:exp_drift})
    \item \textbf{RQ3:} Does HMCL preserve semantic hierarchies after continual training? (\autoref{sec:exp_case_study})
\end{itemize}

\crblue{Supplementary ablations, robustness checks, and additional traversals are referenced below and reported in \autoref{apdix:ablation}, \autoref{apdix:robustness_drift}, and \autoref{apdix:case_study}, respectively.}

\subsection{Implementation Details}

\paragraph{Tasks \& datasets.}
We evaluate under a continual multimodal learning setting with both classification and retrieval tasks.
The task stream consists of up to 15 datasets, including image--text classification benchmarks
(e.g., CIFAR-10/100~\citep{krizhevsky2009learning}, Caltech-101~\citep{fei2004learning}, Food-101~\citep{bossard2014food}) and cross-modal retrieval benchmarks (COCO~\citep{lin2014microsoft} and Flickr30k~\citep{young2014image}).
Tasks are presented sequentially without revisiting previous data, using a fixed task order across all runs.
\crblue{Shuffled-stream results in \autoref{tab:task_order_stability} show that HMCL retains consistent gains under alternative task orders.}
Dataset statistics are provided in \autoref{apdix:datasets}.

\paragraph{Implementation details.}
We adopt pretrained multimodal encoders as frozen feature extractors and optimize lightweight
transformation layers throughout continual training.
All representations are modeled in the Lorentz (hyperboloid) space with fixed curvature $K=0.1$.
The transformation module consists of a single linear layer without activation or dropout.

Models are optimized using AdamW optimization~\cite{loshchilov2017decoupled}.
For classification tasks, we train for 10 epochs with a learning rate $5\times10^{-4}$ and batch size 1024.
For retrieval tasks, we train for 15 epochs with a learning rate of $5\times10^ {-5}$ and a batch size of 1024.
The entailment loss weight is set to 0.2 for all tasks, and weight decay is set to 0. All experiments are implemented in PyTorch~\cite{paszke2019pytorch} and conducted on a single NVIDIA RTX A6000 GPU.
\crblue{The retained-variance ratios of the memory projector are 0.9, 0.8, and 0.9 for MERU-L, MERU-B, and HyCoCLIP-B, respectively. To stabilize finite-step optimization, at the start of task $t$ we snapshot the shared head matrix $\mathbf W_{t,0}$ and, after every projected AdamW optimizer step (one step per epoch because gradients are accumulated over the epoch), apply the step pullback $\mathbf W \leftarrow (1-\beta)\mathbf W + \beta\mathbf W_{t,0}$. We use $\beta=0.10$ for MERU-L/MERU-B and $\beta=0.20$ for HyCoCLIP-B. Pullback is applied only to the shared bias-free head matrix; the temperature and all frozen backbone parameters are unchanged.}
\crblue{\autoref{tab:adamw_weight_decay} further tests AdamW with nonzero weight decay and confirms that HMCL remains effective under this optimizer mismatch.}

\paragraph{Evaluation metrics.}
\crblue{For retrieval tasks, the released implementation stores image-to-text and text-to-image Recall@$k$ for $k\in\{1,5,10\}$. The dedicated retrieval column reports image-to-text R@$K$, using $K=5$ for the two MERU backbones and $K=1$ for HyCoCLIP-B to follow the operating point used by each backbone's evaluation protocol. The Overall Performance/BWT aggregation remains fixed to text-to-image R@5 for every backbone.}
For classification tasks, we report average accuracy.
\crblue{To measure stability, we use signed backward transfer, $\mathrm{BWT}=\mathrm{Performance}_{\mathrm{final}}-\mathrm{Performance}_{\mathrm{right\ after\ task}}$, so a negative value indicates forgetting.} All results are obtained by averaging over five independent runs with different random seeds.

\begin{figure*}[!t]
\centering
\includegraphics[width=0.99\linewidth]{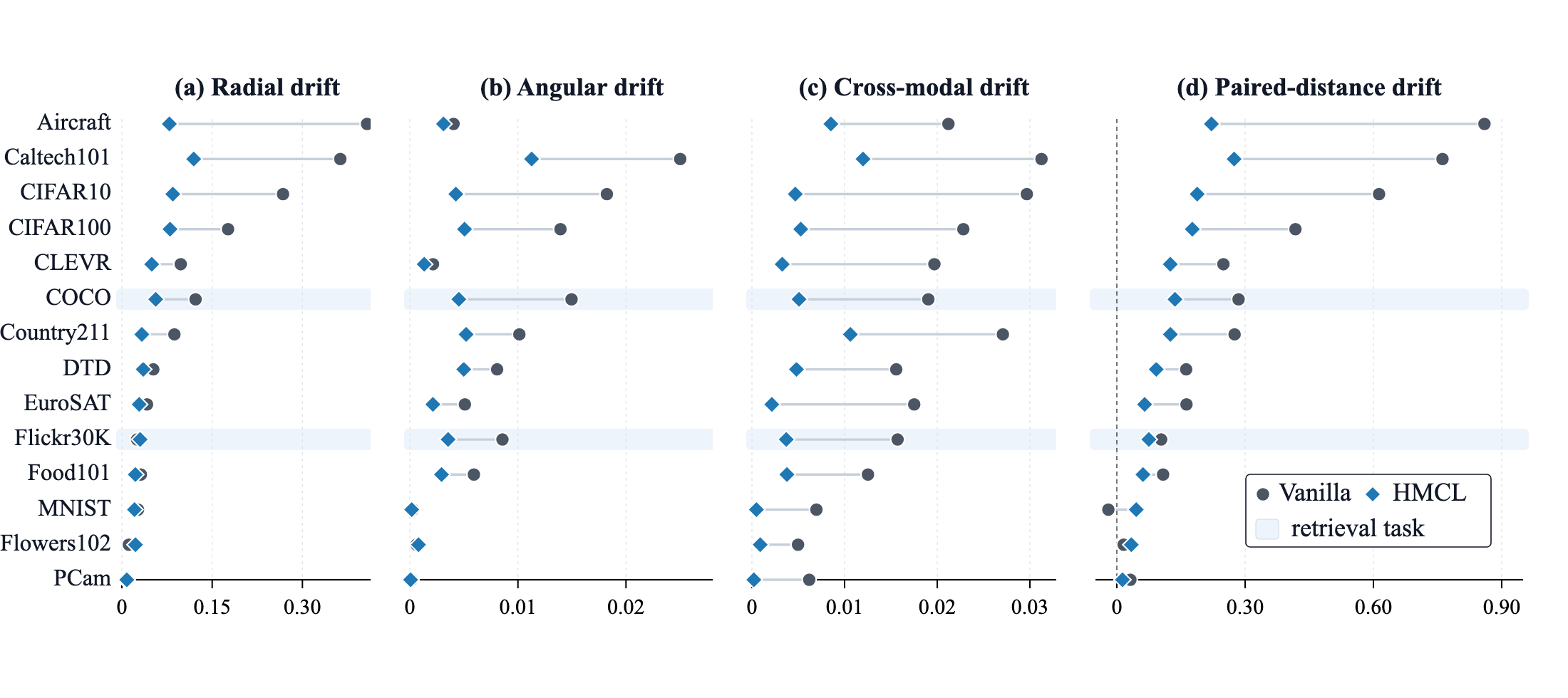}
\caption{\crblue{Dataset-wise old-task representation drift from the task-end checkpoint to the final checkpoint. Gray circles denote Vanilla and blue diamonds denote HMCL; shaded rows indicate retrieval tasks. We measure four complementary changes: radial drift for hierarchy-related radius changes, angular drift for within-modality directional distortion, cross-modal drift for image--text relational structure, and paired-distance drift for matched image--text geodesic distances. Lower values indicate better preservation.}}
\label{fig:drift_analysis}
\end{figure*}

\paragraph{Baselines.} To the best of our knowledge, there is currently no prior work that specifically studies continual learning for hyperbolic multimodal representations.
We therefore adapt representative continual learning strategies originally developed in Euclidean space for comparison.
(1) \textit{Vanilla.}
The Vanilla baseline sequentially fine-tunes the hyperbolic multimodal model on each task using the standard contrastive objective, without any explicit mechanism to mitigate forgetting.
Naive continual training in vision--language models is known to suffer from severe cross-modal misalignment and spatial disorder~\citep{ni2023continual}.
(2) \textit{EWC and GEM.}
We include representative regularization- and constraint-based methods, namely Elastic Weight Consolidation (EWC) and Gradient Episodic Memory (GEM), which are commonly adopted baselines in multimodal continual learning~\citep{ni2023continual}.
Both methods are applied directly to the hyperbolic multimodal training objective.
(3) \textit{C-FLAT.}
C-FLAT introduces contrastive regularization to encourage consistency between current and historical representations across tasks in continual vision--language learning~\citep{bian2024make}.
We adopt the original formulation and apply it directly in the hyperbolic setting.
For all baselines, we replace the original Euclidean training objective with the hyperbolic contrastive loss to better match the underlying geometry.
All other continual learning strategies remain unchanged, and we adopt their original formulations and apply them directly in the hyperbolic setting.
\crblue{This protocol compares the continual-learning mechanisms under the same hyperbolic objective and backbone. However, these baselines remain geometry-agnostic in their update rules: they do not explicitly preserve shared cross-modal isometries or hierarchy-related constraints, which is precisely the gap targeted by HMCL.}
\crblue{All baselines share the same backbone, task stream, optimizer, and learning-rate schedule as HMCL. This controlled setup isolates the update mechanism, directly testing whether preserving Lorentz-space invariants improves old-task stability.}

\subsection{Main Results (RQ1)}
\label{sec:exp_main_results}

\paragraph{Experimental Results.}

\textbf{HMCL consistently improves overall performance across backbones.}
\crblue{\autoref{tab:hmcl_delta}} reports average performance on continual multimodal classification and retrieval across three hyperbolic backbones (MERU-L, MERU-B, and HyCoCLIP-B).
\crblue{Across all settings, HMCL achieves the highest classification accuracy, retrieval performance (R@$K$), and overall score.
For example, on MERU-L, HMCL improves the Overall metric from 38.22 to 41.36, corresponding to a relative gain of $+8.2\%$ over the Vanilla baseline.
On HyCoCLIP-B, HMCL increases image-to-text R@1 by $+26.9\%$.}
These consistent gains indicate that the effectiveness of HMCL generalizes across model architectures and capacities.

\definecolor{headergray}{RGB}{220, 225, 230}
\definecolor{darkgreen}{RGB}{136, 199, 91}
\definecolor{middlegreen}{RGB}{177, 226, 157}
\definecolor{lightgreen}{RGB}{222, 255, 228}

\begin{figure*}[t!]
    \footnotesize
    \centering
    \setlength{\tabcolsep}{2pt}
    \newcolumntype{Z}{>{\centering\arraybackslash}X}

    \begin{minipage}[t]{0.30\linewidth}
        \centering
        \frame{\includegraphics[width=0.72\linewidth]{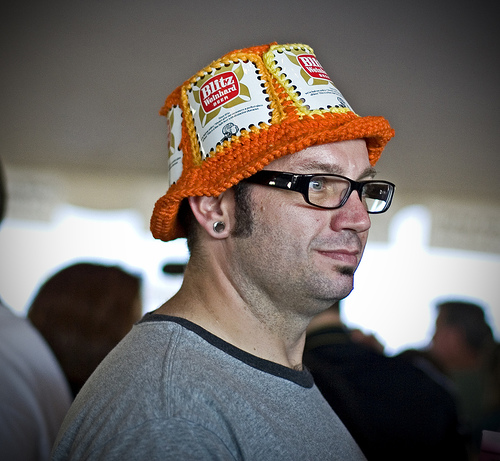}}

        \vspace{0.5mm}

        \begin{tabularx}{\linewidth}{ZZ}
            \rowcolor{headergray} \textbf{HMCL(ours)} & \textbf{Vanilla} \\
            \midrule
            \rowcolor{darkgreen} \emph{hat} & \emph{volleyball} \\
            \midrule
            \rowcolor{middlegreen} \emph{fashion} & \emph{interest} \\
            \midrule
            \rowcolor{lightgreen} \emph{style} & \emph{fashion} \\
            \midrule
            \texttt{\textbf{[ROOT]}} & \texttt{\textbf{[ROOT]}} \\
        \end{tabularx}
    \end{minipage}
    \hfill
    \begin{minipage}[t]{0.30\linewidth}
        \centering
        \frame{\includegraphics[width=0.72\linewidth]{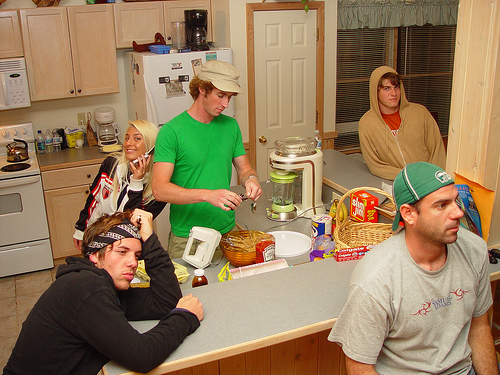}}

        \vspace{0.5mm}

        \begin{tabularx}{\linewidth}{ZZ}
            \rowcolor{headergray} \textbf{HMCL(ours)} & \textbf{Vanilla} \\
            \midrule
            \rowcolor{darkgreen} \emph{relaxation} & \emph{relaxation} \\
            \midrule
            \rowcolor{middlegreen} \emph{vacation} & \emph{aloof} \\
            \midrule
            \rowcolor{lightgreen} \emph{peace} & \emph{sunken} \\
            \midrule
            \texttt{\textbf{[ROOT]}} & \texttt{\textbf{[ROOT]}} \\
        \end{tabularx}
    \end{minipage}
    \hfill
    \begin{minipage}[t]{0.30\linewidth}
        \centering
        \frame{\includegraphics[width=0.72\linewidth]{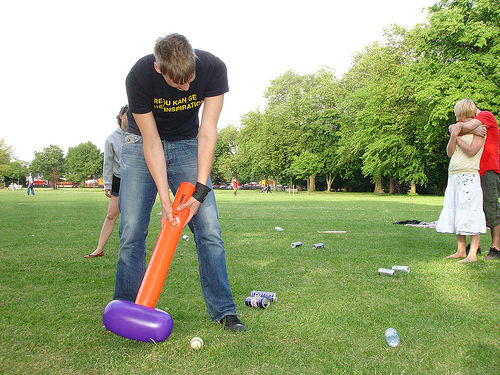}}

        \vspace{0.5mm}

        \begin{tabularx}{\linewidth}{ZZ}
            \rowcolor{headergray} \textbf{HMCL(ours)} & \textbf{Vanilla} \\
            \midrule
            \rowcolor{darkgreen} \emph{defeat} & \emph{workout} \\
            \midrule
            \rowcolor{middlegreen} \emph{workout} & \emph{pilates} \\
            \midrule
            \rowcolor{lightgreen} \emph{peace} & \emph{race} \\
            \midrule
            \texttt{\textbf{[ROOT]}} & \texttt{\textbf{[ROOT]}} \\
        \end{tabularx}
    \end{minipage}
     \caption{\textbf{Case study: image traversals and hierarchical concept retrieval.}
    \crblue{For three Flickr30K images, we traverse each embedding toward [ROOT] and retrieve concepts from specific to generic.
    HMCL yields coherent hierarchies (e.g., \emph{hat} $\to$ \emph{fashion} $\to$ \emph{style}), whereas Vanilla drifts toward irrelevant or inconsistent concepts after continual training.}}
    \label{case_study}
    \vspace{-16pt}
\end{figure*}

\textbf{HMCL substantially mitigates catastrophic forgetting.}
In terms of backward transfer (BWT), HMCL consistently yields significantly less negative values than competing approaches.
\crblue{On MERU-L, classification BWT improves from $-6.46$ to $-0.74$, corresponding to an $88.5\%$ relative reduction in forgetting.}
Similar improvements are observed for retrieval BWT and across all three backbones.
In contrast, EWC and GEM provide only marginal relief, while C-FLAT alleviates forgetting in some cases but often at the expense of overall performance.

\textbf{Conventional continual learning methods struggle in hyperbolic multimodal settings.}
Regularization-based methods such as EWC and constraint-based approaches like GEM show limited effectiveness and, in some cases, high variance.
C-FLAT partially improves retrieval stability but fails to consistently preserve classification and overall performance.
\crblue{These results suggest that methods developed under Euclidean assumptions are insufficient when continual updates must preserve hyperbolic invariants governing both cross-modal relations and hierarchy.}

\textbf{Geometry-aware constraints are key to stable multimodal continual learning.}
The consistent advantages of HMCL align with the theoretical analysis: by restricting continual updates to shared hyperbolic isometries, HMCL preserves both cross-modal similarity and hierarchical structure.
This geometric consistency enables simultaneous improvements in classification and retrieval, rather than trading off performance across tasks.
\crblue{\autoref{fig:ablation_schemeA} complements this result by showing that removing the block-wise constraint weakens stability across backbones.}

\subsection{Quantitative Drift Analysis (RQ2)}
\label{sec:exp_drift}

\crblue{\textbf{Protocol.} To directly connect forgetting with geometric representation change, we measure old-task drift after subsequent continual updates. The four metrics below are organized by the preservation conditions in \autoref{sec:problem_formulation}.}

\begin{itemize}[leftmargin=1.35em, topsep=1pt, itemsep=1pt, parsep=0pt]
    \item \crblue{\textbf{Radial drift} measures changes in hyperbolic radius, i.e., the spatial-norm/time-like signal tied to semantic hierarchy. It diagnoses violations of \textbf{hierarchical preservation} in \autoref{P:hierarchical}.}
    \item \crblue{\textbf{Angular drift} removes magnitude effects and measures within-modality directional distortion. It reflects whether \textbf{intra-modal neighborhoods} remain stable as required by \autoref{P:intra}.}
    \item \crblue{\textbf{Cross-modal drift} measures changes in the image--text relational structure. It corresponds to \textbf{inter-modal preservation} in \autoref{P:inter}.}
    \item \crblue{\textbf{Paired-distance drift} tracks geodesic-distance changes of matched image--text pairs. It directly tests \textbf{cross-modal semantic alignment} under \autoref{P:inter}, while remaining sensitive to hierarchy-related radial changes under \autoref{P:hierarchical}.}
\end{itemize}

\crblue{\textbf{Results.} As shown in \autoref{fig:drift_analysis}, HMCL reduces all four drift types for most datasets, including both classification and retrieval tasks. Averaged over old tasks, the reductions are \textbf{\(61.0\%\)}, \textbf{\(57.8\%\)}, \textbf{\(73.8\%\)}, and \textbf{\(59.5\%\)}, respectively, with aggregate statistics and significance in \autoref{tab:drift_analysis}. \textbf{Takeaway:} forgetting involves both hierarchy-related distortion and relation-level drift.}

\subsection{Case Study (RQ3)}
\label{sec:exp_case_study}

\textbf{Data Preparation.}
To explore the efficacy of our method in preserving hierarchical relationships within hyperbolic space during continual learning, we conduct a detailed case study using image traversals on the Flickr30K dataset. We construct a hierarchical text set by extracting nouns and adjectives from the original captions to represent varying levels of semantic abstraction, following the dataset details in \autoref{apdix:datasets}.

\textbf{Image Traversals.}
We utilize the image traversal technique to qualitatively evaluate the structural integrity of the learned representation space. This method involves performing a shortest-path traversal along the geodesic connecting a specific image embedding to the manifold's origin, denoted as [ROOT], which represents the most generic semantic concept. The traversal is executed by interpolating 20 equally spaced steps in the tangent space at the origin. These interpolated vectors are subsequently projected back onto the Lorentz hyperboloid using the exponential map. At each discrete step, we perform nearest-neighbor retrieval from the candidate set of text embeddings.

\textbf{Results and Analysis.}
In experiments focusing solely on nouns and adjectives, \autoref{case_study} illustrates the comparison between \crblue{HMCL} and the Vanilla baseline model after continuous learning. \crblue{For example, in the left image, the most salient concrete concept is a hat: HMCL follows a coherent path from ``hat'' to broader categories such as ``fashion'' and ``style'' as it approaches [ROOT]. Vanilla instead jumps to the unrelated concept ``volleyball'' before drifting through ``interest'' and ``fashion'', indicating a disrupted local-to-generic hierarchy.}
\crblue{The traversal paths thus provide a qualitative counterpart to the drift analysis: HMCL keeps image-specific concepts connected to stable generic anchors as embeddings move toward [ROOT]. This suggests that HMCL preserves not only nearest-neighbor retrieval quality, but also the abstraction ordering encoded by the hyperbolic radial geometry, consistent with \autoref{apdix:case_study}.}

\section{Conclusion}

\crblue{This work presents the first systematic study of continual learning for hyperbolic multimodal representations.
We establish a theoretical foundation showing that preserving previously learned knowledge requires cross-modal invariance under a shared hyperbolic isometry, and that forgetting is primarily driven by distortions along the radial dimension that encodes semantic hierarchy.
We derive HMCL, a principled framework that restricts parameter updates to geometry-preserving directions.
Experiments across multiple hyperbolic backbones demonstrate that HMCL consistently improves both performance and stability, substantially reducing catastrophic forgetting compared to existing continual learning methods.}

\newpage
\noindent\begin{minipage}{\columnwidth}
\textbf{\crblue{Acknowledgments.}} \crblue{Jiahong Liu and Irwin King acknowledge Hong Kong RGC support (CUHK 2300246, C1043-24G, 14203425, 2151317); Menglin Yang acknowledges Guangdong Natural Science Foundation support (2026A1515012118).}
\end{minipage}

\paragraph{\texorpdfstring{\crblue{Limitation and Future directions.}}{Limitation and Future directions.}}
\crblue{This study follows a hyperbolic geometry-preserving route: HMCL constrains admissible updates to preserve relational and hierarchical structure in Lorentz space, rather than using replay. Combining this route with replay-style mechanisms, such as exemplar/generative replay or hyperbolic memory selection, and extending it to full-backbone adaptation remain promising directions.}

\section*{Impact Statement}
This paper presents work whose goal is to advance the field of Machine Learning. There are many potential societal consequences of our work, none of which we feel must be specifically highlighted here.
\crblue{The main limitations and future directions of the technical contribution are discussed in the conclusion.}

\begingroup
\setlength{\parskip}{0pt}
\setlength{\bibsep}{0pt}
\bibliographystyle{Style/icml2026}
\bibliography{references}
\endgroup

\renewcommand{\cftbeforesecskip}{12pt}
\renewcommand{\cftbeforesubsecskip}{12pt}

\newpage
\appendix
\onecolumn
\begin{center}
    \LARGE \textbf{Appendix}
    \vspace{1em}
\end{center}
\tableofcontents
\let\addcontentsline\OriginalAddContentsLine
\newpage
\section{Notation Table}
\label{apdix:notation}
\label{app:notation}

For reference, we provide a comprehensive summary of the notation used throughout this paper.

\begingroup
\footnotesize
\setlength{\tabcolsep}{4pt}
\renewcommand{\arraystretch}{0.96}
\setlength{\LTpre}{3pt}
\setlength{\LTleft}{\fill}
\setlength{\LTright}{\fill}
\begin{longtable}{@{}>{\raggedright\arraybackslash}p{0.13\linewidth} >{\raggedright\arraybackslash}p{0.74\linewidth}@{}}
\caption{Summary of notation used throughout the paper.}
\label{tab:notation}\\
\toprule
\textbf{Symbol} & \textbf{Description} \\
\midrule
\endfirsthead
\caption[]{Summary of notation used throughout the paper (continued).}\\
\toprule
\textbf{Symbol} & \textbf{Description} \\
\midrule
\endhead
\midrule
\multicolumn{2}{r}{\emph{Continued on next page}}\\
\endfoot
\bottomrule
\endlastfoot
\multicolumn{2}{@{}l}{\textbf{Hyperbolic geometry and the Lorentz model}} \\
\midrule
$d$ & Dimension of hyperbolic space \\
$K$ & {\color{black}Positive curvature magnitude ($K>0$); the sectional curvature is $-K$} \\
$\mathbb{H}^d_K$ & {\color{black}$d$-dimensional hyperbolic space with curvature $-K$} \\
$\mathcal{L}^d_K$ & Lorentz manifold $(\mathbb{H}^d_K, \mathbf{G})$ with metric structure \\
$\mathbf{G}$ & {\color{black}Minkowski metric tensor, $\mathbf{G} = \operatorname{diag}(-1, 1, \ldots, 1) \in \mathbb{R}^{(d+1) \times (d+1)}$} \\
$\langle \cdot, \cdot \rangle_{\mathcal{L}}$ & {\color{black}Lorentzian inner product: $\langle \mathbf{x}, \mathbf{y} \rangle_{\mathcal{L}} = \mathbf{x}^\top \mathbf{G} \mathbf{y} = -x_0y_0 + \sum_{i=1}^d x_i y_i$} \\
$d_\mathcal{L}^K(\mathbf{x}, \mathbf{y})$ & {\color{black}Geodesic distance: $d_\mathcal{L}^K(\mathbf{x}, \mathbf{y}) = K^{-1/2}\operatorname{arcosh}\left(-K\langle \mathbf{x}, \mathbf{y}\rangle_{\mathcal{L}}\right)$} \\
$\mathbf{o}$ & {\color{black}Origin of the Lorentz model: $\mathbf{o} = (K^{-1/2}, 0, \ldots, 0)^\top \in \mathbb{H}^d_K$} \\
$x_0$ & Time-like coordinate (first component of $\mathbf{x}$) \\
$\mathbf{x}_{[1:d]}$ & Space-like coordinates: $(x_1, \ldots, x_d)^\top \in \mathbb{R}^d$ \\
$\mathrm{SO}^+(1,d)$ & Proper orthochronous Lorentz group \\
$\mathrm{SO}(d)$ & Special orthogonal group (rotation group) on $\mathbb{R}^d$ \\
\addlinespace[1pt]
\midrule
\multicolumn{2}{@{}l}{\textbf{Multimodal learning}} \\
$M$ & Number of modalities \\
$m, m'$ & Modality indices, $m, m' \in \{1, 2, \ldots, M\}$ \\
$N$ & Number of samples (batch size) \\
$\mathcal{X}^m$ & Data space of modality $m$ \\
$\mathcal{X}^{m,m'}$ & Paired dataset: $\{(\mathbf{x}_i^{m}, \mathbf{x}_i^{m'})\}_{i=1}^N$ \\
$\mathbf{x}_i^{m}$ & The $i$-th input sample from modality $m$ \\
$\mathbf{z}_i^{m}$ & Hyperbolic embedding of the $i$-th sample from modality $m$, where $\mathbf{z}_i^m \in \mathbb{H}^d_K$ \\
\addlinespace[1pt]
\midrule
\multicolumn{2}{@{}l}{\textbf{Continual learning framework}} \\
$t$ & Task/stage index \\
$T_m^{(t)}$ & Transformation layer for modality $m$ at task $t$ \\
$\mathbf{W}^{m,t}$ & Learnable weight matrix for modality $m$ at stage $t$ \\
$X_t^m$ & Input data for task $t$, modality $m$ \\
$\mathbf{Z}_{t-1}^{m,*}$ & Task $t{-}1$ embeddings from the frozen pre-trained encoder ($*$ indicates pre-trained weights) \\
$\mathbf{Z}_{t-1}^{m,t-1}$ & Task $t{-}1$ embeddings processed by the transformation layer at stage $t{-}1$ \\
$\mathbf{Z}_{t-1}^{m,t}$ & Task $t{-}1$ embeddings processed by the transformation layer at stage $t$ \\
$\mathbf{Z}_{t}^{m,t}$ & Task $t$ embeddings processed by the transformation layer at stage $t$ \\
$f(\mathbf{W}, \cdot)$ & Transformation function ensuring outputs lie on the Lorentz manifold $\mathbb{H}^d_K$ \\
\addlinespace[1pt]
\midrule
\multicolumn{2}{@{}l}{\textbf{Similarity measures and loss functions}} \\
$\mathbf{S}^{m \rightarrow m'}$ & Similarity matrix from modality $m$ to $m'$: $\mathbf{S}^{m \rightarrow m'} \in \mathbb{R}^{N \times N}$ \\
$\mathbf{S}_{ij}^{m \rightarrow m'}$ & Pairwise similarity (negative distance): $-d_\mathcal{L}^K(\mathbf{z}_i^m, \mathbf{z}_j^{m'})$ \\
$\tau$ & Temperature parameter for contrastive loss \\
$\mathcal{L}_{m \to m'}$ & Unidirectional contrastive loss from modality $m$ to $m'$ \\
$\mathcal{L}_{\text{contrast}}$ & Symmetric contrastive loss: $\frac{1}{2}(\mathcal{L}_{m \to m'} + \mathcal{L}_{m' \to m})$ \\
$\text{aper}(\mathbf{z})$ & Entailment cone aperture (half-angle) at embedding $\mathbf{z}$ \\
$\text{ext}(\mathbf{z}, \mathbf{z}')$ & Exterior angle between embeddings $\mathbf{z}$ and $\mathbf{z}'$ \\
$\kappa$ & Boundary constant for entailment cones (typically $\kappa = 0.1$) \\
$\mathcal{L}_{\text{entail}}$ & Entailment loss for enforcing hierarchical relationships \\
\addlinespace[1pt]
\midrule
\multicolumn{2}{@{}l}{\textbf{Theoretical analysis and parameter updates}} \\
$\mathbf{L}$ & General Lorentz transformation, $\mathbf{L} \in \mathrm{SO}^+(1,d)$ \\
$\mathbf{R}$ & Spatial rotation matrix: $\mathbf{R} = \begin{pmatrix} 1 & \mathbf{0}^\top \\ \mathbf{0} & \widetilde{\mathbf{R}} \end{pmatrix}$ with $\widetilde{\mathbf{R}} \in \mathrm{SO}(d)$ \\
$\widetilde{\mathbf{R}}$ & Rotation matrix in $\mathrm{SO}(d)$ (spatial block of $\mathbf{R}$) \\
$\boldsymbol{\delta}^m$ & Unconstrained update direction (e.g., gradient) for modality $m$ \\
$\eta$ & Learning rate / stepsize \\
\end{longtable}
\endgroup

\clearpage
\newpage

\section{Preliminaries}

\subsection{Overview of Hyperbolic Space}

Hyperbolic space is a non-Euclidean geometry characterized by constant negative curvature. Unlike Euclidean space, which has zero curvature (flat), or spherical geometry, which has positive curvature, hyperbolic space curves "away from itself" at every point. This fundamental property gives hyperbolic space several distinctive characteristics:

\textbf{Exponential growth of volume:} In $n$-dimensional hyperbolic space $\mathbb{H}^n$ with curvature $-1$, the volume of a ball of radius $r$ grows as $V_{\mathbb{H}^n}(r) \propto \int_0^r \sinh^{n-1}(t)\,dt$, which behaves asymptotically as $e^{(n-1)r}$ for large $r$. In contrast, the volume of a Euclidean ball grows polynomially as $r^n$. For instance, in the hyperbolic plane ($n=2$), the area of a disk is $A(r) = 2\pi(\cosh(r) - 1)$, compared to $\pi r^2$ in the Euclidean plane.

\textbf{Natural hierarchy representation:} The exponential growth property makes hyperbolic space particularly well-suited for embedding hierarchical structures such as trees, taxonomies, and knowledge graphs with low distortion~\citep{nickel2017poincare,sarkar2011low}.

\subsection{Lorentz Manifold: Formal Definitions}
\label{apdix:lorentz_manifold}

The Lorentz model realizes $d$-dimensional hyperbolic space as a hyperboloid embedded in $(d+1)$-dimensional Minkowski space. We now provide the formal definitions.

\begin{definition}[Minkowski Space and Metric Tensor]
\label{def:minkowski_space}
The $(d+1)$-dimensional \textbf{Minkowski space} is the vector space $\mathbb{R}^{d+1}$ equipped with the indefinite metric tensor
$${\color{black}\mathbf{G} = \operatorname{diag}(-1, 1, \ldots, 1) \in \mathbb{R}^{(d+1) \times (d+1)},}$$
{\color{black}where $K > 0$ denotes the magnitude of the negative curvature. The metric tensor $\mathbf{G}$ has signature $(-,+,\ldots,+)$, with one negative eigenvalue corresponding to the \textbf{time dimension} and $d$ positive eigenvalues corresponding to the \textbf{space dimensions}.}
\end{definition}

\begin{definition}[Lorentzian Inner Product]
\label{def:inner_product}
The \textbf{Lorentzian inner product} (also called the Minkowski inner product) for vectors $\mathbf{x}, \mathbf{y} \in \mathbb{R}^{d+1}$ is defined as:
$${\color{black}\langle\mathbf{x}, \mathbf{y}\rangle_{\mathcal{L}} = \mathbf{x}^{\top} \mathbf{G} \mathbf{y} = -x_0 y_0 + \sum_{i=1}^d x_i y_i.}$$
\end{definition}

\begin{definition}[Lorentz Manifold]
\label{def:lorentz_manifold}
{\color{black}The $d$-dimensional \textbf{Lorentz manifold} (or hyperboloid model of hyperbolic space) with sectional curvature $-K$ $(K>0)$ is defined as:}
$${\color{black}\mathbb{H}^d_K = \{\mathbf{x} \in \mathbb{R}^{d+1} \mid \langle\mathbf{x}, \mathbf{x}\rangle_{\mathcal{L}} = -1/K, \; x_0 > 0\} = \{\mathbf{x} \in \mathbb{R}^{d+1} \mid \mathbf{x}^{\top} \mathbf{G}  \mathbf{x} = -1/K, \; x_0 > 0\}.}$$
We denote the Lorentz manifold with its metric structure as $\mathcal{L}^d_K = (\mathbb{H}^d_K, \mathbf{G})$.
\end{definition}

\begin{remark}
{\color{black}The constraint $\mathbf{x}^{\top} \mathbf{G} \mathbf{x} = -1/K$ expands to}
$${\color{black}-x_0^2 + x_1^2 + x_2^2 + \cdots + x_d^2 = -1/K,}$$
{\color{black}or equivalently}
$${\color{black}x_0^2-(x_1^2+x_2^2+\cdots+x_d^2)=1/K.}$$
{\color{black}This is a two-sheeted hyperboloid in $\mathbb{R}^{d+1}$. The condition $x_0 > 0$ selects the connected upper sheet. For any point $\mathbf{x} = (x_0, x_1, \ldots, x_d)^\top \in \mathbb{H}^d_K$,}
$${\color{black}x_0 = \sqrt{K^{-1} + \|\mathbf{x}_{[1:d]}\|^2} \geq K^{-1/2},}$$
{\color{black}where $\mathbf{x}_{[1:d]} = (x_1, \ldots, x_d)^\top$ denotes the space dimensions.}
\end{remark}

\begin{definition}[\crblue{Induced Norm to the Origin~\cite{desai2023hyperbolic}}]
\label{def:induced_norm}
{\color{black}Given the origin of the Lorentz manifold \(\mathbf{o}= (K^{-1/2},0,\ldots,0) \in \mathcal{L}^d_K\) and \(\mathbf{x}\in\mathbb{H}^d_K\), the induced distance to the origin is}
\begin{equation}
{\color{black}
d_\mathcal{L}^K(\mathbf{x}, \mathbf{o})
= K^{-1/2}\operatorname{arcosh}(\sqrt{K}\,x_0)
= K^{-1/2}\operatorname{arcosh}\left(\sqrt{1+K\|\mathbf{x}_{[1:d]}\|^2}\right).
}
\end{equation}
\end{definition}

A fundamental concept in any geometric space is the notion of distance. In hyperbolic space, the geodesic distance (the length of the shortest path between two points) is given by the following formula.

\begin{definition}[Lorentzian Distance]
\label{def:lorentz_distance}
The \textbf{geodesic distance} between two points $\mathbf{x}, \mathbf{y} \in \mathcal{L}^d_K$ is:
\begin{equation}
    {\color{black}d_\mathcal{L}^K(\mathbf{x}, \mathbf{y}) = K^{-1/2}\operatorname{arcosh}\left(-K\langle \mathbf{x}, \mathbf{y}\rangle_{\mathcal{L}}\right) = K^{-1/2}\operatorname{arcosh}\left(-K\mathbf{x}^{\top} \mathbf{G} \mathbf{y}\right).}
    \label{equ:distance}
\end{equation}
\end{definition}

\begin{remark}
To verify that this formula is well-defined, observe that for any $\mathbf{x}, \mathbf{y} \in \mathbb{H}^d_K$:
\begin{itemize}
    \item {\color{black}The Lorentzian inner product satisfies $\langle \mathbf{x}, \mathbf{y}\rangle_{\mathcal{L}} \leq -1/K$ (with equality if and only if $\mathbf{x} = \mathbf{y}$), by the reverse Cauchy--Schwarz inequality for future-directed time-like vectors.}
    \item {\color{black}Therefore, $-K\langle \mathbf{x}, \mathbf{y}\rangle_{\mathcal{L}} \geq 1$, so the argument of $\operatorname{arcosh}$ lies in $[1, \infty)$.}
    \item {\color{black}When $\mathbf{x} = \mathbf{y}$, $-K\langle \mathbf{x}, \mathbf{x}\rangle_{\mathcal{L}}=1$, giving $d_\mathcal{L}^K(\mathbf{x}, \mathbf{x})=K^{-1/2}\operatorname{arcosh}(1)=0$.}
\end{itemize}
\end{remark}

\begin{remark}[Comparison with Euclidean Distance]
The hyperbolic distance grows much more slowly than the Euclidean distance in the ambient space. For points far from the origin, the hyperbolic distance grows approximately logarithmically with the Euclidean distance. This property allows hyperbolic space to embed exponentially growing structures (like trees) with bounded distortion.
\end{remark}

The symmetries of the Lorentz model of hyperbolic geometry are described by the Lorentz group, which consists of all linear transformations that preserve the Lorentzian inner product.

\begin{definition}[Proper Orthochronous Lorentz Group]
\label{def:lorentz_group}
The \textbf{proper orthochronous Lorentz group} is defined as:
$$
{\color{black}\mathrm{SO}^+(1,d) := \left\{\mathbf{L} \in \mathbb{R}^{(d+1) \times (d+1)} : \mathbf{L}^\top \mathbf{G} \mathbf{L} = \mathbf{G}, \; \det(\mathbf{L}) = 1, \; L_{00} \geq 1\right\}},
$$
where {\color{black}$\mathbf{G} = \mathrm{diag}(-1, 1, \ldots, 1)$ is the Lorentz metric tensor.}
\end{definition}

\begin{remark}
The three conditions in the definition have the following interpretations:
\begin{itemize}
    \item {\color{black}$\mathbf{L}^\top \mathbf{G} \mathbf{L} = \mathbf{G}$}: The transformation preserves the Lorentzian inner product, i.e., $\langle\mathbf{L}\mathbf{x}, \mathbf{L}\mathbf{y}\rangle_{\mathcal{L}} = \langle\mathbf{x}, \mathbf{y}\rangle_{\mathcal{L}}$ for all $\mathbf{x}, \mathbf{y}$.
    \item $\det(\mathbf{L}) = 1$: The transformation is \textbf{proper} (orientation-preserving), excluding reflections.
    \item $L_{00} \geq 1$: The transformation is \textbf{orthochronous} (time-orientation preserving), ensuring that the upper sheet of the hyperboloid is mapped to itself.
\end{itemize}
The Lorentz group acts transitively on $\mathbb{H}^d_K$: for any two points $\mathbf{x}, \mathbf{y} \in \mathbb{H}^d_K$, there exists $\mathbf{L} \in \mathrm{SO}^+(1,d)$ such that $\mathbf{L}\mathbf{x} = \mathbf{y}$.
\end{remark}

\subsection{Lorentz Neural Networks}

\begin{definition}[Lorentz Transformation Layer~\citep{chen2022fully}]
Let $\mathbf{z} \in \mathbb{R}^{d+1}$ be a Lorentz embedding, where $\mathbf{z}_0>0$ denotes the time-like coordinate and $\mathbf{z}_s \in \mathbb{R}^d$ denotes the space-like coordinates.
A Lorentz transformation layer is defined as
\begin{equation}
f(\mathbf{W}; \mathbf{z})
=
\begin{bmatrix}
{\color{black}\sqrt{K^{-1} + \|\mathbf{W} \mathbf{z}\|_2^2}} \\
\mathbf{W} \mathbf{z}
\end{bmatrix},
\qquad
\mathbf{W} \in \mathbb{R}^{d \times (d+1)},
\label{eq:lorentz_layer_app}
\end{equation}
where {\color{black}$K>0$ is the magnitude of the negative curvature, so the sectional curvature is $-K$.}
\end{definition}

\begin{remark}
The time-like coordinate in~\autoref{eq:lorentz_layer_app} is reconstructed deterministically from the spatial component to satisfy the Lorentz constraint.
\end{remark}

We further decompose the transformation matrix as
\begin{equation}
\mathbf{W} = \big[\, \mathbf{w}_0 \;\; \mathbf{W}_s \,\big],
\qquad
\mathbf{w}_0 \in \mathbb{R}^{d \times 1}, \;
\mathbf{W}_s \in \mathbb{R}^{d \times d},
\label{eq:W_block}
\end{equation}
where $\mathbf{w}_0$ and $\mathbf{W}_s$ act on the time-like and space-like components of the input embedding, respectively.

\newpage
\section{Related Work}
\subsection{Hyperbolic Multimodal Learning}
Hyperbolic geometry is a powerful framework for representation learning on hierarchical and scale-free data~\citep{krioukov2010hyperbolic,nickel2017poincare,nickel2018learning,ganea2018hyperbolic,shimizu2020hyperbolic,chami2019hyperbolic,yang2025hyperbolicgnnreview,he2025position}.
Recent work has broadened hyperbolic modeling beyond graph embeddings to recommendation, retrieval, fully hyperbolic transformers, foundation-model fine-tuning, and federated adaptation~\citep{yang2022hrcf,yang2022hicf,qiu2024hihpq,yang2024hypformer,yang2025hyplora,liu2024flatland}.
In parallel, hyperbolic contrastive learning has been studied for graph embeddings and anomaly detection, including analyses of dimensional collapse in curved contrastive spaces~\citep{liu2022enhancinghyperbolic,zhang2025hyperboliccollapse,fu2024hcglad}.
Hyperbolic geometry has recently been introduced to multimodal representation learning to capture the inherent hierarchical structure between visual and linguistic concepts.

Desai et al.~\citep{desai2023hyperbolic} proposed MERU, the first large-scale hyperbolic vision-language model that projects CLIP-style embeddings onto the Lorentz hyperboloid.
Subsequent works have extended hyperbolic multimodal learning along several directions. Pal et al.~\citep{palcompositional} introduced compositional entailment learning to capture intra-modal hierarchies, modeling how complex scenes compose from simpler visual primitives and how compound phrases relate to atomic concepts.
Yang et al.~\citep{yang2024uhcone} further investigated hyperbolic cones for implicit hierarchical learning without requiring explicit parent-child supervision.
For scaling to larger architectures, Mandica et al.~\citep{mandica2024hyperbolic} investigated hyperbolic embeddings in billion-parameter models like BLIP-2.
Kim et al.~\citep{kim2024hype} leveraged hyperbolic entailment cones for data filtering, using image specificity metrics derived from cone apertures to remove underspecified samples from pretraining datasets.
More recently, Poppi et al.~\citep{poppi2025hyperbolic} proposed HySAC, which exploits hyperbolic geometry to model safety hierarchies in vision-language models.
Beyond image-text pretraining, hyperbolic multimodal modeling has also been explored for hierarchical relations between pathology reports and whole-slide images~\citep{xiong2024enhancing}.
Despite these advances in static settings, how hyperbolic multimodal representations behave under continual learning, where the geometric structure encoding semantic hierarchy must be preserved across sequential tasks, remains unexplored.

\subsection{Continual Learning}

Continual learning aims to enable models to sequentially acquire new knowledge while retaining previously learned information~\citep{yu2024recent}. The central challenge is mitigating catastrophic forgetting, where learning new tasks degrades performance on earlier ones. Existing approaches can be broadly categorized into three types: (i) \emph{replay-based methods}, which store and rehearse representative samples from previous tasks, such as Gradient Episodic Memory (GEM)~\citep{lopez2017gradient} and Dark Experience Replay (DER)~\citep{buzzega2020dark}; (ii) \emph{regularization-based methods}, which constrain parameter updates to preserve important weights, exemplified by Elastic Weight Consolidation (EWC)~\citep{kirkpatrick2017overcoming}; and (iii) \emph{parameter isolation methods}, which allocate distinct model components to different tasks to prevent interference, such as Progressive Neural Networks~\citep{rusu2016progressive}, PackNet~\citep{mallya2018packnet}, and Piggyback~\citep{mallya2018piggyback}.

Recent work has extended continual learning to vision-language models, where preserving cross-modal alignment poses additional challenges beyond single-modality forgetting~\citep{ni2023continual}. Ni et al.~\citep{ni2023continual} identified \emph{Spatial Disorder}, which comprises intra-modal rotation and inter-modal deviation, as a key failure mode in continual CLIP training. Subsequent methods have explored knowledge distillation to retain zero-shot capabilities~\citep{zheng2023preventing}, mixture-of-experts adapters for parameter-efficient adaptation~\citep{yu2024boosting}, and contrastive regularization such as C-FLAT~\cite{bian2024make} to align feature representations across tasks. While recent work has explored hyperbolic geometry for continual classification~\citep{ayoughi2025continual,doan2024streamlined}, these methods focus on single-modal image classification and do not address cross-modal geometric preservation. In contrast, our work targets hyperbolic multimodal representations, where semantic hierarchy encoded through radial coordinates and entailment cones requires geometry-aware continual learning strategies.

\newpage
\section{Justifications of Problem Statement}
\label{apdix:problem_statement}

The motivation for this investigation stems from the fundamental challenge in continual learning scenarios where new knowledge must be integrated without catastrophic forgetting of historical information. In hyperbolic spaces, semantic knowledge is encoded through geometric relationships, making the preservation of these geometric structures paramount for maintaining learned representations. That is to maintain the \textbf{semantic similarity} and the \textbf{semantic specificity (hierarchy)}~\cite{desai2023hyperbolic}. Recall the three preservation goals:

\begin{enumerate}[label=(\textbf{P\arabic*})]
\item
\textbf{Intra-modal preservation}:
$\mathbf{Z}_{t-1}^{m,t} \mathbf{G} (\mathbf{Z}_{t-1}^{m,t})^{\top} = \mathbf{Z}_{t-1}^{m,t-1} \mathbf{G} (\mathbf{Z}_{t-1}^{m,t-1})^{\top},
\forall m \in \{1, 2, \ldots, M\}$;
\item
\textbf{Inter-modal preservation}: $\mathbf{Z}_{t-1}^{m,t} \mathbf{G} (\mathbf{Z}_{t-1}^{m',t})^{\top} = \mathbf{Z}_{t-1}^{m,t-1} \mathbf{G} (\mathbf{Z}_{t-1}^{m',t-1})^{\top}, \forall m, m' \in \{1, 2, \ldots, M\}, m \neq m'$;
\item
\textbf{Hierarchical preservation}:
$\|(\mathbf{z}_{t-1}^{m, t})_{[1:d]}\| = \|(\mathbf{z}_{t-1}^{m, t-1})_{[1:d]}\|, ,
\forall m \in \{1, 2, \ldots, M\}$.

\end{enumerate}

\begin{table}[htbp]
\centering
\caption{Correspondence between semantic preservation goals and geometric invariants in hyperbolic space.}
\label{tab:preservation_correspondence}
\begin{tabular}{lcc}
\toprule[1.2pt]
\textbf{Critical Semantic Properties} & \textbf{Geometric Foundation} & \textbf{Preservation Conditions} \\
\midrule[1pt]
\rowcolor{myblue!20}\multicolumn{3}{l}{\textbf{$-$ Primary Goals}} \\
Semantic Similarity & $d_\mathcal{L}(\mathbf{x}, \mathbf{y}) \propto \langle \mathbf{x}, \mathbf{y}\rangle_\mathcal{L}$ & \autoref{P:intra}, \autoref{P:inter} \\
Semantic Specificity (Hierarchy) & $d_\mathcal{L}(\mathbf{x}, \mathbf{o}) \propto \|\mathbf{x}_{[1:d]}\|$ & \autoref{P:hierarchical} \\
\midrule
\rowcolor{myblue!20}\multicolumn{3}{l}{$-$ \textbf{Consequence} (\autoref{prop:partial_order_invariance})} \\
Partial Order Structure & $\text{aper}(\mathbf{x}), \text{ext}(\mathbf{x}, \mathbf{y})$ & \textit{Automatically ensured} \\
\bottomrule[1.2pt]
\end{tabular}
\label{tab:correspondence_goal}
\end{table}

As shown in \autoref{tab:correspondence_goal}, the preservation goals \textit{directly target the geometric foundations} of hyperbolic semantic encoding.

\begin{itemize}
    \item \textbf{Semantic Similarity:} \autoref{P:intra} and \autoref{P:inter} preserve semantic similarity through inner product invariance. Since Lorentzian distances (\autoref{equ:distance}) depend monotonically on inner products $\langle \mathbf{x}, \mathbf{y} \rangle_\mathcal{L}$, preserving these inner products ensures that semantic similarity remain invariant.
    \item \textbf{Semantic Hierarchy:} \autoref{P:hierarchical} preserves semantic specificity through spatial norm invariance. The induced norm (\autoref{def:induced_norm}) indicates that distance to origin depends uniquely on spatial norm $\|\mathbf{x}_{[1:d]}\|$. Larger spatial norms position concepts farther from the generic origin, encoding greater specificity in the hierarchical structure.
\end{itemize}

\begin{thmbox}
a
\begin{proposition}[Concepts Partial Order Invariance]
\label{prop:partial_order_invariance}
{\color{black}Let $\mathbf{z}_i^m, \mathbf{z}_i^{m'} \in \mathbb{H}^d_K$ be hyperbolic embeddings. If both spatial norms $\|(\mathbf{z}_i^m)_{[1:d]}\|$ and $\|(\mathbf{z}_i^{m'})_{[1:d]}\|$, together with the Lorentzian inner product $\langle \mathbf{z}_i^m, \mathbf{z}_i^{m'} \rangle_\mathcal{L}$, are preserved, then the entailment relationship $\mathbf{z}_i^m \sqsupseteq \mathbf{z}_i^{m'}$ remains invariant.}
\end{proposition}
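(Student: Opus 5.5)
The plan is to show that both quantities entering the entailment loss, the aperture $\text{aper}(\mathbf{z}_i^m)$ and the exterior angle $\text{ext}(\mathbf{z}_i^m,\mathbf{z}_i^{m'})$, are functions only of the preserved data: the two spatial norms and the inner product $\ell_i^{m,m'}=\langle \mathbf{z}_i^m,\mathbf{z}_i^{m'}\rangle_{\mathcal{L}}$. Once this is established, the entailment relation, defined as $\text{ext}(\mathbf{z}_i^m,\mathbf{z}_i^{m'})\le \text{aper}(\mathbf{z}_i^m)$ (equivalently $\mathcal{L}_{\text{entail}}=0$), is a predicate on invariant numbers and is therefore unchanged. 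The same reasoning shows that the entailment loss value itself is invariant.

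\textbf{Aperture.} By its definition, $\text{aper}(\mathbf{z}_i^m)=\sin^{-1}\bigl(2\kappa/(\sqrt{K}\,\|(\mathbf{z}_i^m)_{[1:d]}\|_2)\bigr)$ depends only on the spatial norm of $\mathbf{z}_i^m$, so it is immediately invariant.

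\textbf{Time-like coordinates.} The exterior angle formula involves $(\mathbf{z}_i^{m'})_0$ and $(\mathbf{z}_i^m)_0$. I would invoke the hyperboloid constraint from the remark after the definition of the Lorentz manifold, $x_0=\sqrt{K^{-1}+\|\mathbf{x}_{[1:d]}\|^2}$ with $x_0>0$. It shows that each time-like coordinate is uniquely determined by the corresponding spatial norm, so preserving both spatial norms preserves both time-like coordinates.

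\textbf{Exterior angle.} Substituting the invariant quantities, the numerator $(\mathbf{z}_i^{m'})_0+K(\mathbf{z}_i^m)_0\ell_i^{m,m'}$ and the denominator $\|(\mathbf{z}_i^m)_{[1:d]}\|_2\sqrt{(K\ell_i^{m,m'})^2-1}$ are each unchanged. Hence the argument of $\cos^{-1}$ is unchanged, and so is $\text{ext}$.

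\textbf{Main obstacle: well-definedness.} The only real difficulty is making sure the invariance claim does not hide degenerate cases where one of the expressions is undefined. These are $\|(\mathbf{z}_i^m)_{[1:d]}\|=0$, $K\ell_i^{m,m'}=-1$ (the two points coincide, by the reverse Cauchy--Schwarz remark), and aperture arguments exceeding $1$. I would handle them by noting that each of these conditions is itself expressed purely in terms of the preserved quantities. Thus it holds before the update if and only if it holds after. In those cases the same boundary convention (for example, clamping as in the standard implementation) applies on both sides, and the relation is still invariant.
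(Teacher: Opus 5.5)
Your proposal is correct and takes essentially the same route as the paper. The paper likewise notes that $\text{aper}$ depends only on the spatial norm of $\mathbf{z}_i^m$ and that each time-like coordinate is fixed by its spatial norm via $z_0=\sqrt{K^{-1}+\|\mathbf{z}_{[1:d]}\|^2}$, so both $\text{aper}$ and $\text{ext}$, and hence the condition $\text{ext}\le\text{aper}$, are functions of the preserved quantities; your remark that the degenerate cases are themselves determined by those same quantities is a sound refinement that the paper does not spell out.
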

\end{thmbox}

\begin{proof}
The entailment relationship is determined by the condition $\mathcal{L}_{\text{entail}}(\mathbf{z}_i^m, \mathbf{z}_i^{m'}) = 0$, which occurs when:
$$\text{ext}(\mathbf{z}_i^m, \mathbf{z}_i^{m'}) \leq \text{aper}(\mathbf{z}_i^m)$$

From the given definitions:
\begin{align}
{\color{black}\text{aper}(\mathbf{z}_i^m)} &= {\color{black}\sin^{-1}\left(\frac{2\kappa}{\sqrt{K}\,\|(\mathbf{z}_i^m)_{[1:d]}\|}\right)} \\
{\color{black}\text{ext}(\mathbf{z}_i^m, \mathbf{z}_i^{m'})} &= {\color{black}\cos^{-1}\left(\frac{(\mathbf{z}_i^{m'})_{0} + K(\mathbf{z}_i^m)_{0} \langle \mathbf{z}_i^m,\mathbf{z}_i^{m'} \rangle_{\mathcal L}}{\|(\mathbf{z}_i^m)_{[1:d]}\| \sqrt{\left(K\langle \mathbf{z}_i^m,\mathbf{z}_i^{m'} \rangle_{\mathcal L}\right)^2 - 1}}\right)}
\end{align}

{\color{black}Since the time-like coordinates are determined by the corresponding spatial norms through $z_0=\sqrt{K^{-1}+\|\mathbf z_{[1:d]}\|^2}$, the aperture and exterior angle depend only on the two preserved spatial norms and the preserved Lorentzian inner product. Hence the entailment condition, and therefore the partial-order relation, remains unchanged.}
\end{proof}

\section{Supplementary for Theoretical Characterization}

\begin{definition}[Joint Embedding Matrix]
\label{def:joint_emb_matrix}
Given $N$ paired samples from task {\color{black}$t \in \mathcal{T}$} processed at stage $s \in \mathcal{S}$ with modalities $m_1, m_2$, and individual embedding matrices $\mathbf{Z}_{t}^{m,s} \in \mathbb{R}^{N \times (d+1)}$ for each modality $m \in \{m_1, m_2\}$. Let
$\mathbf{z}_{t,i}^{m,s}, \mathbf{z}_{t,i}^{m,s'} \in \mathbb{H}^d$
for all $i=1,\dots,N$ and $m \in \{m_1,m_2\}$. the joint embedding matrix is defined as:
\[
\mathbf{Z}_{t,\mathrm{all}}^{s} :=
\begin{pmatrix}
\mathbf{Z}_{t}^{m_1,s} \\[2pt]
\mathbf{Z}_{t}^{m_2,s}
\end{pmatrix} \in \mathbb{R}^{2N \times (d+1)},
\]
where the vertical concatenation preserves the correspondence between paired samples across modalities.
\end{definition}

\begin{definition}[Extended Lorentz Gram Matrix]
\label{def:extended_lorentz_gram_matrix}
Given the joint embedding matrix $\mathbf{Z}_{t,\mathrm{all}}^{s} \in \mathbb{R}^{2N \times (d+1)}$ and the Lorentz metric tensor $\mathbf{G} := \mathrm{diag}(-1, 1, \ldots, 1) \in \mathbb{R}^{(d+1) \times (d+1)}$, we define the extended Lorentz Gram matrix as:
\begin{align}
    \mathbf{G}_{t,\mathrm{ext}}^{s} &:= \mathbf{Z}_{t,\mathrm{all}}^{s}\mathbf{G}(\mathbf{Z}_{t,\mathrm{all}}^{s})^\top \\
    &= \begin{pmatrix}
\mathbf{Z}_{t}^{m_1,s}\mathbf{G}(\mathbf{Z}_{t}^{m_1,s})^\top &
\mathbf{Z}_{t}^{m_1,s}\mathbf{G}(\mathbf{Z}_{t}^{m_2,s})^\top \\[2pt]
\mathbf{Z}_{t}^{m_2,s}\mathbf{G}(\mathbf{Z}_{t}^{m_1,s})^\top &
\mathbf{Z}_{t}^{m_2,s}\mathbf{G}(\mathbf{Z}_{t}^{m_2,s})^\top
\end{pmatrix} \in \mathbb{R}^{2N \times 2N}.
\end{align}

The diagonal blocks encode intra-modal Lorentzian inner products and the off-diagonal blocks capture cross-modal geometric relationships in the hyperbolic space.
\end{definition}

\begin{definition}[Lorentz Boost]
\label{def:lorentz_boost}
A Lorentz boost represents relative motion at constant velocity without any rotation of the spatial coordinate axes. Given a velocity vector \(\mathbf{v} \in \mathbb{R}^n\) (in units of the speed of light), where \(\|\mathbf{v}\| < 1\) and \(\gamma = \frac{1}{\sqrt{1 - \|\mathbf{v}\|^2}}\), the Lorentz boost matrix is expressed as:

\[
\mathbf{B} = \begin{bmatrix}
\gamma & -\gamma \mathbf{v}^\top \\
-\gamma \mathbf{v} & I + \frac{\gamma^2}{1 + \gamma} \mathbf{v} \mathbf{v}^\top
\end{bmatrix}
\]
\end{definition}

\begin{definition}[Lorentz Rotation]
\label{def:lorentz_rotation}
A Lorentz rotation refers to the rotation of the spatial coordinates. The corresponding Lorentz rotation matrices are given by:

\[
\mathbf{R} = \begin{bmatrix}
1^\top & 0^\top \\
0^\top & \tilde{\mathbf{R}}
\end{bmatrix}, \quad \text{where } \tilde{\mathbf{R}}^\top \tilde{\mathbf{R}} = I \text{ and } \det(\tilde{\mathbf{R}}) = 1,
\]

\noindent i.e., \(\tilde{\mathbf{R}} \in \text{SO}(n)\) is a special orthogonal matrix.
\end{definition}

\subsection{Proof of Geometric Characterization}
\label{apdx:proof_theorem1}

\begin{lemma}[Witt's Extension Theorem~\cite{lam2005introduction}]
\label{lem:witt}
Let $(\mathcal{V}, Q)$ be a non-degenerate quadratic space, where $\mathcal{V}$ is a vector space and $Q: \mathcal{V} \times \mathcal{V} \to \mathbb{R}$ is a non-degenerate quadratic form. If $\phi: \mathcal{U} \to \mathcal{V}'$ is an isometry between subspaces $\mathcal{U}, \mathcal{V}' \subset \mathcal{V}$, then $\phi$ extends to an isometry of the entire space $\mathcal{V}$.
\end{lemma}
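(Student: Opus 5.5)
The plan is the classical route through Witt cancellation. Write $B$ for the symmetric bilinear form attached to $Q$, and use that $\mathbb{R}$ has characteristic $\neq 2$, so $B(x,y)=\tfrac12\big(Q(x+y)-Q(x)-Q(y)\big)$ and $Q$ and $B$ determine each other. The argument has three steps.
\begin{enumerate}
\item[(i)] Reduce to the case where $\mathcal{U}$ is non-degenerate, by enlarging $\mathcal{U}$ and $\phi$ simultaneously.
\item[(ii)] For a non-degenerate $\mathcal{U}$, the isometry $\phi$ extends as soon as $\mathcal{U}^{\perp}\cong\phi(\mathcal{U})^{\perp}$.
\item[(iii)] Prove this isomorphism of orthogonal complements, which is Witt cancellation, by induction on dimension using reflections.
\end{enumerate}

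\emph{Step (i): removing the radical.} Suppose $\operatorname{rad}\mathcal{U}=\mathcal{U}\cap\mathcal{U}^{\perp}\neq 0$. Write $\mathcal{U}=\mathcal{U}_0\oplus\langle u\rangle$ with $u\in\operatorname{rad}\mathcal{U}$ nonzero.
\begin{enumerate}
\item Since $B$ is non-degenerate on $\mathcal{V}$, the map $\mathcal{V}\to\mathcal{U}^{*}$, $v\mapsto B(v,\cdot)|_{\mathcal{U}}$, is surjective. Hence there is $v\in\mathcal{V}$ with $B(v,\mathcal{U}_0)=0$ and $B(v,u)=1$.
\item Replace $v$ by $v-\tfrac{Q(v)}{2}u$. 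This keeps the two conditions above and makes $Q(v)=0$. Now $\langle u,v\rangle$ is a hyperbolic plane and $\mathcal{U}+\langle v\rangle$ has strictly smaller radical than $\mathcal{U}$.
\item Perform the same construction on the image side with $\phi(\mathcal{U}_0)$ and $\phi(u)$. Note that $\phi(u)\in\operatorname{rad}\phi(\mathcal{U})$ because $\phi$ is an isometry. This produces a vector $v'$ with the same properties.
\item Extend $\phi$ by setting $\phi(v)=v'$. One checks directly that this extension is still an isometry.
\end{enumerate}
After finitely many such steps $\mathcal{U}$ is non-degenerate.

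\emph{Step (ii): the non-degenerate case.} Now $\mathcal{V}=\mathcal{U}\oplus\mathcal{U}^{\perp}=\phi(\mathcal{U})\oplus\phi(\mathcal{U})^{\perp}$. If $\psi:\mathcal{U}^{\perp}\to\phi(\mathcal{U})^{\perp}$ is any isometry, then $\phi\oplus\psi$ is an isometry of $\mathcal{V}$ extending $\phi$. So it suffices to show $\mathcal{U}^{\perp}\cong\phi(\mathcal{U})^{\perp}$ as quadratic spaces. This is exactly cancellation: $\mathcal{U}\oplus\mathcal{U}^{\perp}\cong\mathcal{U}\oplus\phi(\mathcal{U})^{\perp}$ should imply $\mathcal{U}^{\perp}\cong\phi(\mathcal{U})^{\perp}$.

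\emph{Step (iii): cancellation.} Diagonalize $\mathcal{U}$ orthogonally into anisotropic lines; this uses characteristic $\neq 2$. Then cancel one line at a time, so the claim reduces to: if $x,y\in\mathcal{V}$ with $Q(x)=Q(y)\neq0$, there is an isometry $\sigma$ of $\mathcal{V}$ with $\sigma x=y$.
\begin{enumerate}
\item Use the identity $Q(x+y)+Q(x-y)=4Q(x)\neq0$, so at least one of $x\pm y$ is anisotropic.
\item If $Q(x-y)\neq0$, the reflection $\tau_{x-y}(w)=w-\dfrac{2B(w,x-y)}{Q(x-y)}(x-y)$ sends $x$ to $y$.
\item Otherwise $Q(x+y)\neq0$, and $\tau_y\tau_{x+y}$ sends $x$ to $y$.
\item Given such a $\sigma$, it maps $\langle x\rangle^{\perp}$ isometrically onto $\langle y\rangle^{\perp}$. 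This lets the induction on $\dim\mathcal{U}$ proceed.
\end{enumerate}

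The main obstacle is step (i), the degenerate case. One must verify carefully that the hyperbolic-partner construction can be done compatibly on both sides and really yields an isometric extension, which requires keeping track of $B(v,\mathcal{U}_0)=0$, $Q(v)=0$ and $B(v,u)=1$ at every stage. In the paper's application the relevant spans are non-degenerate Lorentzian subspaces, so only steps (ii) and (iii) are actually needed there.
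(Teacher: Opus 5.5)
Your proof is correct. Enlarging $\mathcal{U}$ by hyperbolic partners to remove the radical, then reducing the non-degenerate case to Witt cancellation via reflections and the identity $Q(x+y)+Q(x-y)=4Q(x)$, is the classical textbook argument for this result; the paper gives no proof of its own and simply cites Lam, and your individual computations (the correction $v-\tfrac{Q(v)}{2}u$, the strict drop of the radical, the reflection formulas) all check out.

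One small calibration of your closing remark: under the paper's full-rank hypothesis the rows already span all of $\mathbb{R}^{d+1}$. Equal Gram matrices plus non-degeneracy then make the induced linear map a well-defined isometry of the whole space directly, so none of your extension steps is actually needed in that application.
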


\begin{remark}
    The matrix setting of \autoref{lem:witt} can be rewritten as: given matrices $\mathbf{Z}_1, \mathbf{Z}_2 \in \mathbb{R}^{m \times n}$ and a non-degenerate symmetric matrix $\mathbf{Q} \in \mathbb{R}^{n \times n}$ satisfying $\mathbf{Z}_1\mathbf{Q}\mathbf{Z}_1^\top = \mathbf{Z}_2\mathbf{Q}\mathbf{Z}_2^\top$, there exists an orthogonal transformation $\mathbf{T} \in O(\mathbf{Q})$ such that $\mathbf{Z}_2 = \mathbf{Z}_1\mathbf{T}^\top$, where $O(\mathbf{Q}) := \{\mathbf{T} \in \mathbb{R}^{n \times n} : \mathbf{T}^\top\mathbf{Q}\mathbf{T} = \mathbf{Q}\}$.
\end{remark}

\begin{lemma}[Hyperboloid Isometries are Proper Orthochronous]
\label{lem:hyperboloid_isometry}
Let $\mathbf{L} \in O(1,d)$ be an isometry that maps the hyperbolic space $\mathbb{H}^d := \{\mathbf{x} \in \mathbb{R}^{d+1} : \mathbf{x}^\top\mathbf{G}\mathbf{x} = -1, x_0 > 0\}$ to itself. Then $\mathbf{L} \in \mathrm{SO}^+(1,d)$.
\end{lemma}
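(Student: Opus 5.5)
The plan is to separate the two conditions in \autoref{def:lorentz_group}: the orthochronous condition $L_{00}\ge 1$ and the properness condition $\det(\mathbf{L})=1$. The first can be derived from the hypothesis. The second cannot, so the statement needs an extra hypothesis or a weaker conclusion.

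\emph{Step 1 (orthochronous part).} Let $\mathbf{e}_0=(1,0,\ldots,0)^\top\in\mathbb{H}^d$. Its image $\mathbf{L}\mathbf{e}_0$ is the first column $(L_{00},L_{10},\ldots,L_{d0})^\top$ of $\mathbf{L}$. Because $\mathbf{L}^\top\mathbf{G}\mathbf{L}=\mathbf{G}$, the image keeps Lorentzian norm $-1$, so
\[
L_{00}^2 = 1+\sum_{i=1}^d L_{i0}^2 \ge 1 .
\]
Because $\mathbf{L}$ maps $\mathbb{H}^d$ into itself, the time coordinate of $\mathbf{L}\mathbf{e}_0$ is positive, i.e.\ $L_{00}>0$. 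Together these give $L_{00}\ge 1$. The curvature $K$ plays no role here: rescaling $\mathbf{e}_0$ by $K^{-1/2}$ gives the same conclusion.

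\emph{Step 2 (determinant).} Taking determinants in $\mathbf{L}^\top\mathbf{G}\mathbf{L}=\mathbf{G}$ gives $\det(\mathbf{L})^2=1$, so $\det(\mathbf{L})=\pm1$. This is where the argument breaks down, and it is a genuine obstruction rather than a technical gap. The spatial reflection $\mathbf{L}=\operatorname{diag}(1,-1,1,\ldots,1)$ satisfies $\mathbf{L}^\top\mathbf{G}\mathbf{L}=\mathbf{G}$ and fixes $x_0$, so it maps $\mathbb{H}^d$ onto itself. Yet $\det(\mathbf{L})=-1$. Hence the hypotheses as written yield only $\mathbf{L}\in \mathrm{O}^+(1,d)$, the orthochronous Lorentz group.

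There are two ways I would repair this.
\begin{itemize}
\item \textbf{Add a hypothesis.} Assume $\mathbf{L}$ is orientation-preserving, for instance because it lies in the identity component, being reached continuously from $\mathbf{I}$ during training. Then $\det(\mathbf{L})=1$, and with Step 1 this gives $\mathbf{L}\in\mathrm{SO}^+(1,d)$.
\item \textbf{Weaken the conclusion.} Replace the conclusion by $\mathbf{L}\in\mathrm{O}^+(1,d)$. Downstream, in \autoref{coro:p1_p3_synergy}, the spatial block $\widetilde{\mathbf{R}}$ would then lie in $\mathrm{O}(d)$ rather than $\mathrm{SO}(d)$. This costs nothing for \textbf{\autoref{P:intra}}--\textbf{\autoref{P:hierarchical}}, since reflections preserve all Lorentzian inner products and spatial norms.
\end{itemize}

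Under either repair, the remaining argument is the two-line computation in Step 1.
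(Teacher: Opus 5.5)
Your proposal is correct. It shows that the paper's proof is wrong at the determinant step, so the disagreement is not a gap on your side.

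Your Step 1 is the paper's own argument for the orthochronous part. You evaluate $\mathbf{L}$ on $(1,0,\ldots,0)^\top$ to get $L_{00}>0$, then read off the $(0,0)$ entry of $\mathbf{L}^\top\mathbf{G}\mathbf{L}=\mathbf{G}$ to get $L_{00}^2=1+\sum_i L_{i0}^2\ge 1$. For the determinant, the paper says only that $\mathbf{L}$ ``maps the connected manifold $\mathbb{H}^d$ to itself and preserves its orientation as a hypersurface,'' and concludes $\det(\mathbf{L})=+1$. Orientation preservation does not follow from mapping $\mathbb{H}^d$ onto itself. Your reflection $\operatorname{diag}(1,-1,1,\ldots,1)$ is a valid counterexample: it satisfies $\mathbf{L}^\top\mathbf{G}\mathbf{L}=\mathbf{G}$, fixes $x_0$, and has determinant $-1$. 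So for $d\ge 1$ the lemma as stated is false, and the hypotheses only yield $\mathbf{L}\in\mathrm{O}^+(1,d)$.

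Your first repair makes explicit the assumption hidden in the paper's sentence. At $\mathbf{x}\in\mathbb{H}^d$, the vector $\mathbf{x}$ is transverse to $T_{\mathbf{x}}\mathbb{H}^d=\mathbf{x}^{\perp_{\mathcal{L}}}$. Any $\mathbf{L}$ preserving $\mathbb{H}^d$ carries it to the corresponding transverse vector $\mathbf{L}\mathbf{x}$ at the image point. Hence $\det(\mathbf{L})=+1$ exactly when $\mathbf{L}|_{\mathbb{H}^d}$ is orientation-preserving. With that stated as a hypothesis, the paper's proof goes through.

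Your downstream remark is also right, and the consequence for the main theorem is real. Under the rank-$(d+1)$ non-degeneracy assumption, full column rank forces $\mathbf{Z}\mathbf{L}_1^\top=\mathbf{Z}\mathbf{L}_2^\top\Rightarrow\mathbf{L}_1=\mathbf{L}_2$ for arbitrary matrices. Now suppose the stage-$t$ embeddings are a spatial reflection of the stage-$(t-1)$ ones. Then \textbf{(P1)}--\textbf{(P3)} all hold, yet no $\widetilde{\mathbf{R}}\in\mathrm{SO}(d)$ realizes the map. The ``only if'' direction of \autoref{coro:p1_p3_synergy} therefore needs $\widetilde{\mathbf{R}}\in\mathrm{O}(d)$, or the orientation hypothesis added explicitly.
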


\begin{proof}
Since $\mathbf{L}$ maps $\mathbb{H}^d$ to itself, for any $\mathbf{z} = (z_0, z_1, \ldots, z_d)^\top \in \mathbb{H}^d$ with $z_0 > 0$ and $\mathbf{z}^\top\mathbf{G}\mathbf{z} = -1$, we have $\mathbf{L}\mathbf{z} \in \mathbb{H}^d$, which requires $
(\mathbf{L}\mathbf{z})_0 = L_{00}z_0 + \sum_{i=1}^d L_{0i}z_i > 0 $ and
$(\mathbf{L}\mathbf{z})^\top\mathbf{G}(\mathbf{L}\mathbf{z}) = -1$

First, consider any $\mathbf{z} \in \mathbb{H}^d$. Since $\mathbf{z}^\top\mathbf{G}\mathbf{z} = -1$, we have $z_0^2 = 1 + \sum_{i=1}^d z_i^2 \geq 1$, so $z_0 \geq 1$ (using $z_0 > 0$). For the constraint $(\mathbf{L}\mathbf{z})_0 > 0$ to hold for all such $\mathbf{z}$, consider the limiting case where spatial components approach zero. Taking $\mathbf{z} = (1, 0, \ldots, 0)^\top \in \mathbb{H}^d$, we get $(\mathbf{L}\mathbf{z})_0 = L_{00} > 0$.

From $\mathbf{L} \in O(1,d)$, we have $\mathbf{L}^\top\mathbf{G}\mathbf{L} = \mathbf{G}$. Comparing the (0,0) entries:
\begin{equation}
(\mathbf{L}^\top\mathbf{G}\mathbf{L})_{00} = -L_{00}^2 + \sum_{i=1}^d L_{i0}^2 = -1
\end{equation}
which rearranges to $L_{00}^2 = 1 + \sum_{i=1}^d L_{i0}^2 \geq 1$. Since $L_{00} > 0$, we get $L_{00} \geq 1$.

Next, $\mathbf{L}$ maps the connected manifold $\mathbb{H}^d$ to itself and preserves its orientation as a hypersurface in $\mathbb{R}^{d+1}$. Therefore, we have $\det(\mathbf{L}) = +1$.
Finally, combining $\mathbf{L} \in O(1,d)$, $\det(\mathbf{L}) = +1$, and $L_{00} \geq 1$, we conclude $\mathbf{L} \in \mathrm{SO}^+(1,d)$.
\end{proof}

\begin{thmbox}
a
\begin{theorem}[Lorentzian Isometry Uniqueness]
\label{thm:witt_multimodal}

Let $\mathbf{Z}_{t,\mathrm{all}}^{s}, \mathbf{Z}_{t',\mathrm{all}}^{s'} \in \mathbb{R}^{2N \times (d+1)}$ be two joint embedding matrices. If the following conditions hold:
(1) Gram matrix equivalence:
$\mathbf{G}_{t,\mathrm{ext}}^{s} = \mathbf{G}_{t',\mathrm{ext}}^{s'}$,
where $\mathbf{G}_{t,\mathrm{ext}}^{s} = \mathbf{Z}_{t,\mathrm{all}}^{s}\mathbf{G}(\mathbf{Z}_{t,\mathrm{all}}^{s})^\top$;
(2) Non-degeneracy:
$\mathrm{rank}(\mathbf{Z}_{t,\mathrm{all}}^{s}) = \mathrm{rank}(\mathbf{Z}_{t',\mathrm{all}}^{s'}) = d+1$.
Then there exists a \textbf{unique} $\mathbf{L} \in \mathrm{SO}^+(1,d)$ such that: $\mathbf{Z}_{t',\mathrm{all}}^{s'} = \mathbf{Z}_{t,\mathrm{all}}^{s}\mathbf{L}^\top$,
where $\mathrm{SO}^+(1,d) := \{\mathbf{L} \in \mathbb{R}^{(d+1) \times (d+1)} : \mathbf{L}^\top\mathbf{G}\mathbf{L} = \mathbf{G}, \det(\mathbf{L}) = 1, L_{00} \geq 1\}$ is the proper orthochronous Lorentz group.

\end{theorem}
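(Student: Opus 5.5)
Write $\mathbf{A}:=\mathbf{Z}_{t,\mathrm{all}}^{s}$ and $\mathbf{B}:=\mathbf{Z}_{t',\mathrm{all}}^{s'}$, both in $\mathbb{R}^{2N\times(d+1)}$. The plan is to construct $\mathbf{L}$ explicitly from the rank condition, so that we do not need the abstract form of \autoref{lem:witt}. Afterwards we verify the Lorentz-group membership conditions and prove uniqueness.

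\textbf{Construction.} Since $\mathrm{rank}(\mathbf{A})=d+1$, the matrix $\mathbf{A}^\top\mathbf{A}$ is invertible. This gives a left inverse $\mathbf{A}^{+}=(\mathbf{A}^\top\mathbf{A})^{-1}\mathbf{A}^\top$ with $\mathbf{A}^{+}\mathbf{A}=\mathbf{I}$. Define the candidate
\[
\mathbf{L}^\top := \mathbf{A}^{+}\mathbf{B}.
\]
The central step is to show that $\mathbf{A}\mathbf{L}^\top=\mathbf{B}$, i.e.\ that $\mathbf{B}$ has the same row space as $\mathbf{A}$ and that this linear map is consistent. I would argue this through the kernels of the Gram matrix. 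Set $\mathcal{G}:=\mathbf{A}\mathbf{G}\mathbf{A}^\top=\mathbf{B}\mathbf{G}\mathbf{B}^\top$. Because $\mathbf{G}$ is invertible and $\mathbf{A}$ has full column rank, $\mathbf{G}\mathbf{A}^\top$ maps $\mathbb{R}^{2N}$ onto $\mathbb{R}^{d+1}$. Hence $\ker\mathcal{G}=\ker\mathbf{A}^\top$, and likewise $\ker\mathcal{G}=\ker\mathbf{B}^\top$. So $\mathbf{A}^\top$ and $\mathbf{B}^\top$ have the same kernel, and therefore $\mathbf{A}$ and $\mathbf{B}$ have the same column space. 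This means $\mathbf{B}=\mathbf{A}\mathbf{M}$ for some $\mathbf{M}$. Since $\mathbf{A}$ has full column rank, $\mathbf{M}=\mathbf{A}^{+}\mathbf{B}=\mathbf{L}^\top$.

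\textbf{Isometry.} Substituting $\mathbf{B}=\mathbf{A}\mathbf{M}$ into the Gram equality gives $\mathbf{A}(\mathbf{M}\mathbf{G}\mathbf{M}^\top-\mathbf{G})\mathbf{A}^\top=\mathbf{0}$. Multiplying on the left by $\mathbf{A}^{+}$ and on the right by $(\mathbf{A}^{+})^\top$ yields $\mathbf{M}\mathbf{G}\mathbf{M}^\top=\mathbf{G}$, which is equivalent to $\mathbf{L}^\top\mathbf{G}\mathbf{L}=\mathbf{G}$. Hence $\mathbf{L}\in O(1,d)$. The same left-inverse argument proves \textbf{uniqueness}: if also $\mathbf{B}=\mathbf{A}\mathbf{L}'^\top$, then $\mathbf{A}(\mathbf{L}^\top-\mathbf{L}'^\top)=\mathbf{0}$, and full column rank forces $\mathbf{L}=\mathbf{L}'$.

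\textbf{Group membership.} It remains to show $\mathbf{L}\in\mathrm{SO}^+(1,d)$. For orthochronicity, each row of $\mathbf{A}$ lies on the upper sheet of the hyperboloid, and each row of $\mathbf{B}$ is $\mathbf{L}$ applied to the corresponding row of $\mathbf{A}$ and also lies on the upper sheet. So $\mathbf{L}$ maps some future-directed timelike vector to a future-directed one. An element of $O(1,d)$ either preserves the two sheets or swaps them, so $L_{00}\ge 1$, exactly as in \autoref{lem:hyperboloid_isometry} (the curvature scaling $-1/K$ is irrelevant here).

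\textbf{Main obstacle: the sign of the determinant.} I expect $\det\mathbf{L}=1$ to be the hard part, and it may not follow from the hypotheses as stated. A spatial reflection $\mathrm{diag}(1,-1,1,\ldots,1)$ preserves the Gram matrix and the upper sheet but has determinant $-1$. By uniqueness, if the data $\mathbf{B}$ happens to be such a reflection of $\mathbf{A}$, no element of $\mathrm{SO}^+(1,d)$ can work. I would first try to obtain the determinant from \autoref{lem:hyperboloid_isometry}'s orientation argument. If that fails, I would state the conclusion as $\mathbf{L}\in O^+(1,d)$, or equivalently add an orientation hypothesis: $\det$ of some $(d+1)\times(d+1)$ row-submatrix of $\mathbf{A}$ and the corresponding submatrix of $\mathbf{B}$ have the same sign. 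Under that hypothesis, $\det\mathbf{L}=1$ follows directly from the identity $\mathbf{B}_I=\mathbf{A}_I\mathbf{L}^\top$ on the selected rows $I$.
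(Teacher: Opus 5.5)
Your suspicion about the determinant is correct. It is a defect of the statement, not of your method: as written, the theorem is false, and your reflection example already settles this, so the hedging is unnecessary.

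\textbf{The counterexample.} The example works verbatim for curvature $-K$. Take $\mathbf{P}=\operatorname{diag}(1,-1,1,\ldots,1)$ and $\mathbf{B}=\mathbf{A}\mathbf{P}$. Then $\mathbf{P}^\top\mathbf{G}\mathbf{P}=\mathbf{G}$, and the rows of $\mathbf{B}$ keep their time-like coordinate and Lorentzian norm, so they stay on $\mathbb{H}^d_K$. The Gram matrix and rank are unchanged, and your uniqueness argument forces $\mathbf{L}=\mathbf{P}$, with $\det\mathbf{L}=-1$. A concrete instance with $d=1$, $K=1$: rows $(1,0),(\cosh 1,\sinh 1)$ versus $(1,0),(\cosh 1,-\sinh 1)$.

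\textbf{Where the paper goes wrong.} The paper obtains $\det\mathbf{L}=1$ only through Lemma~\ref{lem:hyperboloid_isometry}. Its step ``$\mathbf{L}$ maps the connected manifold $\mathbb{H}^d$ to itself and preserves its orientation, therefore $\det\mathbf{L}=+1$'' is unsupported. The same $\mathbf{P}$ refutes it: $\mathbf{P}$ maps $\mathbb{H}^d$ onto itself while reversing orientation, and the isometry group of $\mathbb{H}^d$ is $O^{+}(1,d)$. So do not try to extract the determinant from that lemma. What can be proved is existence and uniqueness in $O^{+}(1,d)$; membership in $\mathrm{SO}^{+}(1,d)$ needs an extra hypothesis.

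\textbf{Repairing the conclusion.} Your orientation hypothesis works once you pick $I$ with $\mathbf{A}_I$ invertible, which the rank condition allows. Then $\det\mathbf{L}=\det\mathbf{B}_I/\det\mathbf{A}_I\in\{\pm1\}$ is forced to be $+1$. An alternative closer to the paper's infinitesimal use is to assume $\mathbf{B}$ is joined to $\mathbf{A}$ by a continuous path $\mathbf{B}(\tau)$ of Gram-preserving configurations. Then $\det(\mathbf{A}^{+}\mathbf{B}(\tau))$ is continuous, takes values in $\{\pm1\}$, and equals $1$ at $\tau=0$. The same correction propagates downstream: $\widetilde{\mathbf{R}}\in\mathrm{SO}(d)$ in Theorem~\ref{coro:p1_p3_synergy} should read $\widetilde{\mathbf{R}}\in O(d)$.

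\textbf{Comparison with the paper's existence argument.} For existence you take a genuinely different route. The paper maps rows to rows, notes that inner products are preserved, and invokes Witt's extension theorem (Lemma~\ref{lem:witt}); uniqueness is the same left-cancellation you use. Your explicit $\mathbf{L}^\top=\mathbf{A}^{+}\mathbf{B}$, together with $\ker\mathbf{A}^\top=\ker\mathbf{B}^\top$, is more elementary and fills a step the paper skips.
\begin{itemize}
\item When $2N>d+1$ the rows are linearly dependent. The row assignment then defines a linear map only if $\mathbf{A}^\top\mathbf{c}=\mathbf{0}$ implies $\mathbf{B}^\top\mathbf{c}=\mathbf{0}$, which is exactly your kernel identity.
\item Since the rows already span $\mathbb{R}^{d+1}$, nothing remains to be extended, so Witt's theorem is not actually needed.
\item Your sheet argument (an element of $O(1,d)$ either preserves or swaps the two sheets) justifies the paper's unargued claim that $\mathbf{L}$ maps all of $\mathbb{H}^d$ to itself, given only that it maps the rows there.
\end{itemize}

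\textbf{One small fix.} In the kernel step, the property to cite is injectivity of $\mathbf{A}$, not surjectivity of $\mathbf{G}\mathbf{A}^\top$. From $\mathbf{A}(\mathbf{G}\mathbf{A}^\top\mathbf{x})=\mathbf{0}$, full column rank gives $\mathbf{G}\mathbf{A}^\top\mathbf{x}=\mathbf{0}$, hence $\mathbf{A}^\top\mathbf{x}=\mathbf{0}$.
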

\end{thmbox}

\begin{proof}

The proof established follows: (i) the existence of an isometry between the row vectors, (ii) its extension to the whole space (i.e., a global isometry), (iii) restriction of the resulting transformation to the orthochronous Lorentz group $\mathrm{SO}^+(1,d)$, and (iv) uniqueness.

    By condition (2), both $\mathbf{Z}_{t,\mathrm{all}}^{s}$ and $\mathbf{Z}_{t',\mathrm{all}}^{s'}$ have full rank $(d+1)$. Hence, their rows span the entire Lorentzian space $\mathbb{R}^{1,d}$. Define a correspondence between row vectors
\[
\phi : \mathbf{z}_{t,i}^{m,s} \mapsto \mathbf{z}_{t',i}^{m,s'} \quad (1 \leq i \leq N, \, m \in \{m_1,m_2\}).
\]
Condition (1) guarantees that $\phi$ preserves the Lorentzian inner product:
\[
\langle \mathbf{z}_{t,i}^{m,s}, \mathbf{z}_{t,j}^{m',s} \rangle_{\mathcal{L}}
=
\langle \mathbf{z}_{t',i}^{m,s'}, \mathbf{z}_{t',j}^{m',s'} \rangle_{\mathcal{L}},
\quad \forall i,j,m,m'.
\]
Thus $\phi$ is an \textit{isometry} between two generating sets of $\mathbb{R}^{1,d}$.

Since the space $\mathbb{R}^{d+1}$ is non-degenerate, \autoref{lem:witt} ensures that $\phi$ extends uniquely to a \textit{global isometry} of the ambient quadratic space (i.e., any isometry between subspaces of a non-degenerate quadratic space can be extended to an isometry of the entire space). Consequently, there exists $\mathbf{L} \in O(1,d)$ such that
\[
\mathbf{Z}_{t',\mathrm{all}}^{s'} = \mathbf{Z}_{t,\mathrm{all}}^{s}\mathbf{L}^\top.
\]

It remains to identify the proper subgroup. The orthogonal group $O(1,d)$ consists of four connected components, classified by determinant sign and preservation or reversal of time orientation.

Next, according to the definition of joint embedding matrices~(\autoref{def:joint_emb_matrix}), $\mathbf{z}_{i}^{m,s}, \mathbf{z}_{i}^{m,s'} \in \mathbb{H}^d$
for all $i=1,\dots,N$ and $m \in \{m_1,m_2\}$. This ensures that all embeddings lie on the forward sheet $\mathbb{H}^d$  (\autoref{def:lorentz_manifold})
and $\mathbf{L}$ transform $\mathbb{H}^d$ to itself, so $\mathbf{L}\in \mathrm{SO}^+(1,d)$ (\autoref{lem:hyperboloid_isometry}).

Finally, uniqueness follows from full rank. Suppose both $\mathbf{L}_1,\mathbf{L}_2 \in \mathrm{SO}^+(1,d)$ satisfy the required relation, i.e., $\mathbf{Z}_{t',\mathrm{all}}^{s'} = \mathbf{Z}_{t,\mathrm{all}}^{s}\mathbf{L}_1^\top = \mathbf{Z}_{t,\mathrm{all}}^{s}\mathbf{L}_2^\top $. Then
\[
\mathbf{Z}_{t,\mathrm{all}}^{s}(\mathbf{L}_1^\top - \mathbf{L}_2^\top) = 0.
\]
Since $\mathbf{Z}_{t,\mathrm{all}}^{s}$ has full rank $(d+1)$, it follows that $\mathbf{L}_1 = \mathbf{L}_2$. This proves the uniqueness claim.

Combining the existence, subgroup restriction, and uniqueness arguments, we obtain the desired result and complete the proof.
\end{proof}

\begin{corollary}
\label{coro:shared_L}
Under the conditions of \autoref{thm:witt_multimodal}, the unique Lorentz transformation $\mathbf{L} \in \mathrm{SO}^+(1,d)$ acts identically on both modalities:
\[
\mathbf{Z}_{t'}^{m_1,s'} = \mathbf{Z}_{t}^{m_1,s}\mathbf{L}^\top \quad \text{and} \quad \mathbf{Z}_{t'}^{m_2,s'} = \mathbf{Z}_{t}^{m_2,s}\mathbf{L}^\top.
\]
In particular, if $\mathbf{L}_1, \mathbf{L}_2 \in \mathrm{SO}^+(1,d)$ are transformations such that
$\mathbf{Z}_{t'}^{m_i,s'} = \mathbf{Z}_{t}^{m_i,s}\mathbf{L}_i^\top$ for $i \in \{1,2\}$, then $\mathbf{L}_1 = \mathbf{L}_2$.
\end{corollary}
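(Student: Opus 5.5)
The corollary follows directly from \autoref{thm:witt_multimodal} by reading the joint identity block-wise; we also derive a uniqueness statement for modality-specific maps.

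\textbf{Step 1 (shared map).} Under the hypotheses of \autoref{thm:witt_multimodal}, there is a unique $\mathbf{L}\in\mathrm{SO}^+(1,d)$ with $\mathbf{Z}_{t',\mathrm{all}}^{s'}=\mathbf{Z}_{t,\mathrm{all}}^{s}\mathbf{L}^\top$. By \autoref{def:joint_emb_matrix}, the joint matrix is the vertical concatenation of the two modality blocks. Right multiplication by $\mathbf{L}^\top$ acts row by row, so the first $N$ rows give $\mathbf{Z}_{t'}^{m_1,s'}=\mathbf{Z}_{t}^{m_1,s}\mathbf{L}^\top$ and the last $N$ rows give $\mathbf{Z}_{t'}^{m_2,s'}=\mathbf{Z}_{t}^{m_2,s}\mathbf{L}^\top$. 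This proves the first claim.

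\textbf{Step 2 (uniqueness for modality-specific maps).} Suppose $\mathbf{L}_1,\mathbf{L}_2\in\mathrm{SO}^+(1,d)$ satisfy $\mathbf{Z}_{t'}^{m_i,s'}=\mathbf{Z}_{t}^{m_i,s}\mathbf{L}_i^\top$ for $i\in\{1,2\}$. Let $\mathbf{L}$ be the shared map from Step 1. Subtracting the Step 1 identities gives
\[
\mathbf{Z}_{t}^{m_i,s}(\mathbf{L}_i-\mathbf{L})^\top=0 \quad (i=1,2),
\]
so $\mathbf{L}_i=\mathbf{L}$ whenever $\mathbf{Z}_{t}^{m_i,s}$ has rank $d+1$. \autoref{thm:witt_multimodal} only guarantees this rank for the \emph{joint} matrix, so this step needs care.

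\textbf{Step 3 (main obstacle: rank-deficient blocks).} If a single block is rank-deficient, $\mathbf{L}_i$ is pinned down only on the row span $V_i$ of $\mathbf{Z}_{t}^{m_i,s}$. The argument then proceeds as follows.
\begin{itemize}
\item The cross-modal block of the Gram equality, $\mathbf{Z}^{m_1}\mathbf{G}(\mathbf{Z}^{m_2})^\top$ preserved, gives $\langle \mathbf{L}_1 x,\mathbf{L}_2 y\rangle_{\mathcal L}=\langle x,y\rangle_{\mathcal L}=\langle \mathbf{L}_2 x,\mathbf{L}_2 y\rangle_{\mathcal L}$ for $x\in V_1$ and $y\in V_2$.
\item Hence $\mathbf{L}_1 x-\mathbf{L}_2 x$ is $\mathcal{L}$-orthogonal to $\mathbf{L}_2 V_2$.
\item The same identity with the roles of the modalities exchanged handles $V_2$.
\item Since $V_1+V_2=\mathbb{R}^{d+1}$ by joint full rank and the form is non-degenerate, the two maps agree on the span of the data.
\item Linearity then extends this agreement to all of $\mathbb{R}^{d+1}$, giving $\mathbf{L}_1=\mathbf{L}_2$.
\end{itemize}

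If this orthogonality argument does not close, the fallback is to state the corollary under per-modality full rank. In that case Step 2 alone suffices, and this is the reading intended when combining \autoref{thm:witt_multimodal} with the block structure.
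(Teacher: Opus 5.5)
Your Step~1 is correct and is all the first claim needs, and Step~2 is correct as a conditional. The gap is in Step~3, and it cannot be closed. The cross-modal identity you use carries no information beyond what the per-block hypotheses already give. Those hypotheses say $\mathbf{L}_1x=\mathbf{L}x$ for $x\in V_1$ and $\mathbf{L}_2y=\mathbf{L}y$ for $y\in V_2$. Hence $\langle\mathbf{L}_1x,\mathbf{L}_2y\rangle_{\mathcal L}=\langle\mathbf{L}x,\mathbf{L}y\rangle_{\mathcal L}=\langle x,y\rangle_{\mathcal L}$ holds automatically, and it says nothing about $\mathbf{L}_1$ off $V_1$ or $\mathbf{L}_2$ off $V_2$. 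Your orthogonality relation only places $(\mathbf{L}_1-\mathbf{L}_2)x$ in $(\mathbf{L}V_2)^{\perp}$, which is nonzero whenever $V_2$ is a proper subspace. Neither non-degeneracy of the form nor $V_1+V_2=\mathbb{R}^{d+1}$ forces it to vanish. So the bullet ``the two maps agree on the span of the data'' does not follow.

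In fact the ``in particular'' clause is false under the joint-rank hypothesis alone. Take $d\ge 2$ and $\mathbf{Z}_{t'}^{m_i,s'}=\mathbf{Z}_{t}^{m_i,s}$, so $\mathbf{L}=\mathbf{I}$. Let $\mathbf{Z}_{t}^{m_1,s}$ have rank $d+1$, and let every row of $\mathbf{Z}_{t}^{m_2,s}$ be the origin $\mathbf{o}$. All hypotheses of the theorem hold. Yet $\mathbf{L}_1=\mathbf{I}$ and $\mathbf{L}_2=\mathrm{diag}(1,\widetilde{\mathbf{R}})$, with any $\widetilde{\mathbf{R}}\in\mathrm{SO}(d)\setminus\{\mathbf{I}\}$, both satisfy the per-modality identities, because $\mathbf{L}_2\mathbf{o}=\mathbf{o}$.

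More generally, each row space $V_i$ is Lorentzian, since it contains timelike rows. The valid $\mathbf{L}_i$ are exactly the products $\mathbf{L}\mathbf{S}$ with $\mathbf{S}$ fixing $V_i$ pointwise. These $\mathbf{S}$ are the rotations of the spacelike complement $V_i^{\perp}$, a group isomorphic to $\mathrm{SO}(d+1-r_i)$ with $r_i=\mathrm{rank}\,\mathbf{Z}_{t}^{m_i,s}$. So $\mathbf{L}_1=\mathbf{L}_2$ is forced exactly when $r_1,r_2\ge d$.

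The paper's proof does not supply the missing idea. It argues by contradiction that $\mathbf{L}_1\neq\mathbf{L}_2$ would break the invariance of the cross-modal Gram block, which is precisely the step that fails here, and it asserts this without justification. It also conflates uniqueness of a single joint map with uniqueness of per-block maps.

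So abandon Step~3 and commit to your fallback: state the uniqueness clause under per-modality full rank, where Step~2 proves it. The sharp condition is rank $\ge d$ for each block. This holds because the only nontrivial Lorentz isometry fixing a $d$-dimensional Lorentzian subspace pointwise is a reflection, which has determinant $-1$ and is excluded from $\mathrm{SO}^+(1,d)$.
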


\begin{proof}
We prove by contradiction. Suppose there exist distinct transformations $\mathbf{L}_1, \mathbf{L}_2 \in \mathrm{SO}^+(1,d)$ with $\mathbf{L}_1 \neq \mathbf{L}_2$ such that:
\begin{align}
\mathbf{Z}_{t'}^{m_1,s'} &= \mathbf{Z}_{t}^{m_1,s}\mathbf{L}_1^\top \\
\mathbf{Z}_{t'}^{m_2,s'} &= \mathbf{Z}_{t}^{m_2,s}\mathbf{L}_2^\top
\end{align}

Then the joint embedding transformation becomes:
\[
\mathbf{Z}_{t',\mathrm{all}}^{s'} =
\begin{pmatrix}
\mathbf{Z}_{t'}^{m_1,s'} \\[2pt]
\mathbf{Z}_{t'}^{m_2,s'}
\end{pmatrix} =
\begin{pmatrix}
\mathbf{Z}_{t}^{m_1,s}\mathbf{L}_1^\top \\[2pt]
\mathbf{Z}_{t}^{m_2,s}\mathbf{L}_2^\top
\end{pmatrix}
\]

However, this contradicts \autoref{thm:witt_multimodal}, which guarantees the existence of a \emph{unique} $\mathbf{L} \in \mathrm{SO}^+(1,d)$ such that:
\[
\mathbf{Z}_{t',\mathrm{all}}^{s'} = \mathbf{Z}_{t,\mathrm{all}}^{s}\mathbf{L}^\top =
\begin{pmatrix}
\mathbf{Z}_{t}^{m_1,s}\mathbf{L}^\top \\[2pt]
\mathbf{Z}_{t}^{m_2,s}\mathbf{L}^\top
\end{pmatrix}
\]

This implies that the transformation matrices for the two modalities must be the same to maintain consistency across the entire embedding. Specifically, the assumption that $\mathbf{L}_1 \neq \mathbf{L}_2$ leads to a contradiction, as it violates the invariance of the cross-modal blocks. These cross-modal blocks, represented by the off-diagonal blocks of the extended Gram matrix, would no longer remain consistent (i.e., $\mathbf{Z}_t^{m_1,s} \mathbf{L}_1^\top \mathbf{G} \mathbf{L}_2 \mathbf{Z}_t^{m_2,s} \neq \mathbf{Z}_{t'}^{m_1,s'} \mathbf{G} \mathbf{Z}_{t'}^{m_2,s'}$), \textit{breaking the requirement that the inter-modal relationships are preserved}. Hence, any valid transformation matrices for the individual modalities must be identical: $\mathbf{L}_1 = \mathbf{L}_2$.

\end{proof}

\begin{remark}
    Based on \autoref{def:extended_lorentz_gram_matrix}, the preservation conditions \autoref{P:intra} and \autoref{P:inter} can be equivalently expressed as $
    \mathbf{G}_{t-1,\mathrm{ext}}^{t} = \mathbf{G}_{t-1,\mathrm{ext}}^{t-1},
    $ where $\mathbf{G}_{t-1,\mathrm{ext}}^{t}$ represents the extended Lorentz Gram matrix for task $t-1$ data processed at the {\color{black} stage} $t$. \autoref{thm:witt_multimodal} and \autoref{coro:shared_L} indicate that \textbf{this preservation uniquely determines a single transformation $\mathbf{L}\in \mathrm{SO}^+(1,d)$ applying uniformly to both modalities.}
\end{remark}

\begin{lemma}[Generation of $\mathrm{SO}^+(1,d)$~\cite{moretti2002interplay}]
\label{lem:SOplus_generation}

Every element of $\mathrm{SO}^+(1,d)$ can be expressed as a finite composition of Lorentz rotations and boosts.

\end{lemma}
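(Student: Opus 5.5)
The plan is to prove the lemma with a one-step \emph{boost--rotation decomposition}: every $\mathbf{L}\in\mathrm{SO}^+(1,d)$ factors as $\mathbf{L}=\mathbf{B}\mathbf{R}$, with $\mathbf{B}$ a Lorentz boost (\autoref{def:lorentz_boost}) and $\mathbf{R}$ a Lorentz rotation (\autoref{def:lorentz_rotation}). A composition of two generators is in particular a finite composition, so this suffices. The idea is to let the boost carry the time axis $\mathbf{e}_0=(1,0,\ldots,0)^\top$ to wherever $\mathbf{L}$ sends it. What remains then fixes $\mathbf{e}_0$ and is forced to be a spatial rotation.

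\textbf{Step 1 (boosts are admissible).} For $\|\mathbf{v}\|<1$, I would verify directly that the matrix $\mathbf{B}(\mathbf{v})$ of \autoref{def:lorentz_boost} satisfies $\mathbf{B}^\top\mathbf{G}\mathbf{B}=\mathbf{G}$. The only identity needed is $\gamma^2\|\mathbf{v}\|^2=\gamma^2-1$, which turns the blocks into routine algebra. Its $(0,0)$ entry is $\gamma\ge 1$. Since $\det\mathbf{B}(\mathbf{v})$ is continuous in $\mathbf{v}$, takes values in $\{\pm1\}$, and equals $1$ at $\mathbf{v}=\mathbf{0}$ on the connected ball $\|\mathbf{v}\|<1$, we get $\det\mathbf{B}=1$. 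Hence $\mathbf{B}(\mathbf{v})\in\mathrm{SO}^+(1,d)$. The same block computation shows $\mathbf{B}(\mathbf{v})\mathbf{B}(-\mathbf{v})=\mathbf{I}$, so the inverse of a boost is again a boost.

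\textbf{Step 2 (matching the time axis).} Let $\mathbf{u}=\mathbf{L}\mathbf{e}_0=(u_0,\mathbf{u}_s)$. Then $\mathbf{u}^\top\mathbf{G}\mathbf{u}=-1$ and $u_0=L_{00}\ge 1$. Set $\mathbf{v}=-\mathbf{u}_s/u_0$. Then $\|\mathbf{v}\|<1$, and the definition of $\gamma$ gives $\gamma=u_0/\sqrt{u_0^2-\|\mathbf{u}_s\|^2}=u_0$. Consequently the first column of $\mathbf{B}(\mathbf{v})$ is $(\gamma,-\gamma\mathbf{v})=(u_0,\mathbf{u}_s)=\mathbf{u}$. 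In the case $\mathbf{u}_s=\mathbf{0}$ this reduces to $\mathbf{B}=\mathbf{I}$.

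\textbf{Step 3 (the remainder is a rotation).} Put $\mathbf{M}=\mathbf{B}(\mathbf{v})^{-1}\mathbf{L}=\mathbf{B}(-\mathbf{v})\mathbf{L}$, which lies in $\mathrm{SO}^+(1,d)$ as a product of elements of the group. By construction $\mathbf{M}\mathbf{e}_0=\mathbf{e}_0$. From $\mathbf{M}^\top\mathbf{G}\mathbf{M}=\mathbf{G}$ we get $\mathbf{M}^\top\mathbf{G}\mathbf{e}_0=\mathbf{M}^\top\mathbf{G}\mathbf{M}\mathbf{e}_0=\mathbf{G}\mathbf{e}_0$, so the first row of $\mathbf{M}$ is also $\mathbf{e}_0^\top$. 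Therefore $\mathbf{M}=\operatorname{diag}(1,\widetilde{\mathbf{R}})$. The spatial block of $\mathbf{M}^\top\mathbf{G}\mathbf{M}=\mathbf{G}$ gives $\widetilde{\mathbf{R}}^\top\widetilde{\mathbf{R}}=\mathbf{I}$, and $\det\widetilde{\mathbf{R}}=\det\mathbf{M}=1$. So $\mathbf{M}$ is a Lorentz rotation and $\mathbf{L}=\mathbf{B}(\mathbf{v})\mathbf{M}$, as required.

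The main obstacle is Step 1, and specifically the determinant and orthochronous bookkeeping. The $\mathbf{G}$-isometry check is mechanical. The determinant, however, must be handled by the connectedness argument rather than by expanding the matrix, and Step 3 relies on $\det\mathbf{B}=1$ to conclude $\widetilde{\mathbf{R}}\in\mathrm{SO}(d)$ rather than merely $\mathrm{O}(d)$.
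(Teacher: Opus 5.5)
Your proof is correct. The paper gives no argument for this lemma; it only cites Moretti, whose route is the matrix polar decomposition $\mathbf{L}=(\mathbf{L}\mathbf{L}^\top)^{1/2}\mathbf{U}$, followed by a check that the symmetric positive-definite factor is a boost and the orthogonal factor is a spatial rotation.

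You instead use an orbit--stabilizer argument. Boosts carry $\mathbf{e}_0$ to any future unit time-like vector (Step 2), and the stabilizer of $\mathbf{e}_0$ in $\mathrm{SO}^+(1,d)$ is exactly the rotation subgroup (Step 3). This avoids matrix square roots entirely. It also gives the same sharp conclusion as the polar route: $\mathbf{L}=\mathbf{B}(\mathbf{v})\mathbf{R}$ with one factor of each kind, unique because $\mathbf{v}$ is determined by $\mathbf{L}\mathbf{e}_0$.

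Keep that two-factor form. The Boost Elimination proof uses precisely a splitting $\mathbf{L}=\mathbf{R}\mathbf{B}$, which the lemma as literally stated (``a finite composition'') does not provide. To get that order, use $\mathbf{B}(\mathbf{v})\mathbf{R}=\mathbf{R}\,\mathbf{B}(\widetilde{\mathbf{R}}^\top\mathbf{v})$. Alternatively, apply your construction to $\mathbf{L}^\top\in\mathrm{SO}^+(1,d)$ and transpose, using $\mathbf{B}(\mathbf{v})^\top=\mathbf{B}(\mathbf{v})$.

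Step 1 is also less of an obstacle than you suggest. For $\mathbf{v}\neq\mathbf{0}$, $\mathbf{B}(\mathbf{v})$ fixes every $(0,\mathbf{w})$ with $\mathbf{w}\perp\mathbf{v}$. In the orthonormal basis $\mathbf{e}_0,(0,\mathbf{v}/\|\mathbf{v}\|)$ it acts by the symmetric $2\times 2$ block with diagonal $\gamma$ and off-diagonal $-\gamma\|\mathbf{v}\|$, whose determinant is $\gamma^2(1-\|\mathbf{v}\|^2)=1$. Your connectedness argument is valid, but this one-line computation suffices.
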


\begin{remark}
Both the Lorentz boost and the Lorentz rotation are linear transformations defined directly within the Lorentz manifold. Specifically, for any \(\mathbf{x} \in \mathbb{H}^d\), the transformations \(\mathbf{B} \mathbf{x}\) and \(\mathbf{R} \mathbf{x}\) both result in vectors in \(\mathbb{H}^d\).
\end{remark}

\begin{thmbox}
a
\begin{theorem}[Boost Elimination, Restatement]
Let $\mathbf{L} \in \mathrm{SO}^+(1,d)$ satisfy $\mathbf{z}_{t-1}^{m,t} = \mathbf{L} \mathbf{z}_{t-1}^{m,t-1}$ for embeddings from task $t-1$. If condition \autoref{P:hierarchical} holds:
\[
\|(\mathbf{z}_{t-1}^{m,t})_{[1:d]}\| = \|(\mathbf{z}_{t-1}^{m,t-1})_{[1:d]}\|
\]
for all embeddings, then $\mathbf{L}$ must be a pure spatial rotation:
\[
\mathbf{L} = \begin{bmatrix}
1 & \mathbf{0}^\top \\
\mathbf{0} & \tilde{\mathbf{R}}
\end{bmatrix}, \quad \tilde{\mathbf{R}} \in \mathrm{SO}(d).
\]
\end{theorem}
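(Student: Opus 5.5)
The plan is to turn \autoref{P:hierarchical} into a \emph{linear} condition on $\mathbf{L}$, and then read off the block structure from $\mathbf{L}^\top\mathbf{G}\mathbf{L}=\mathbf{G}$.

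First, since every embedding lies on $\mathbb{H}^d_K$, its time-like coordinate is determined by its spatial norm through $z_0=\sqrt{K^{-1}+\|\mathbf{z}_{[1:d]}\|^2}$. So \autoref{P:hierarchical} is equivalent to $(\mathbf{z}^{m,t}_{t-1})_0=(\mathbf{z}^{m,t-1}_{t-1})_0$ for every old-task embedding. Writing $\mathbf{e}_0=(1,0,\ldots,0)^\top$, this reads
\[
\mathbf{e}_0^\top\mathbf{L}\,\mathbf{z}=\mathbf{e}_0^\top\mathbf{z}
\]
for every row $\mathbf{z}$ of the joint embedding matrix $\mathbf{Z}^{t-1}_{t-1,\mathrm{all}}$. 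The identity is linear in $\mathbf{z}$, so it extends to the span of these rows. I would invoke the same non-degeneracy assumption used in \autoref{thm:witt_multimodal}, namely $\mathrm{rank}(\mathbf{Z}^{t-1}_{t-1,\mathrm{all}})=d+1$, so that the rows span $\mathbb{R}^{d+1}$. This gives $\mathbf{L}^\top\mathbf{e}_0=\mathbf{e}_0$: the first row of $\mathbf{L}$ is $(1,0,\ldots,0)$, so $L_{00}=1$ and $L_{0i}=0$ for $i\ge1$.

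Second, compare the $(0,0)$ entries of $\mathbf{L}^\top\mathbf{G}\mathbf{L}=\mathbf{G}$, exactly as in the proof of \autoref{lem:hyperboloid_isometry}:
\[
-L_{00}^2+\sum_{i=1}^d L_{i0}^2=-1 .
\]
With $L_{00}=1$ this forces $L_{i0}=0$ for all $i$, so the first column is also $\mathbf{e}_0$. Hence $\mathbf{L}=\mathrm{diag}(1,\widetilde{\mathbf{R}})$. The spatial $d\times d$ block of $\mathbf{L}^\top\mathbf{G}\mathbf{L}=\mathbf{G}$ then gives $\widetilde{\mathbf{R}}^\top\widetilde{\mathbf{R}}=\mathbf{I}$. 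Finally, $\det\mathbf{L}=1$ (from $\mathbf{L}\in\mathrm{SO}^+(1,d)$) gives $\det\widetilde{\mathbf{R}}=1$, so $\widetilde{\mathbf{R}}\in\mathrm{SO}(d)$.

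The step I expect to be the main obstacle is the passage from pointwise preservation of $z_0$ to $\mathbf{L}^\top\mathbf{e}_0=\mathbf{e}_0$. That step genuinely needs the spanning hypothesis; without it one only gets $\mathbf{L}^\top\mathbf{e}_0-\mathbf{e}_0\perp\mathrm{span}(\text{rows})$, and a boost along a collapsed direction could survive. I would therefore state the result explicitly under the non-degeneracy condition of \autoref{coro:p1_p3_synergy}, consistent with the paper's remark on the role of non-degeneracy. The converse direction, that such an $\mathbf{R}$ preserves \autoref{P:intra}--\autoref{P:hierarchical}, is immediate, since $\mathbf{R}^\top\mathbf{G}\mathbf{R}=\mathbf{G}$ and $\widetilde{\mathbf{R}}$ is orthogonal. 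Combined with \autoref{thm:witt_multimodal} and \autoref{coro:shared_L}, this completes \autoref{coro:p1_p3_synergy}.
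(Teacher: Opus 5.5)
Your proof is correct, and it takes a different route from the paper.

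\textbf{The paper's route.} The paper writes $\mathbf{L}=\mathbf{R}\mathbf{B}$, a single rotation times a single boost. It cites \autoref{lem:SOplus_generation} for this, but that lemma only gives finite products; what is actually used is the polar decomposition. It then kills the boost by testing \autoref{P:hierarchical} at the one point $\mathbf{o}$, where $(\mathbf{B}\mathbf{o})_{[1:d]}=-\gamma\mathbf{v}\,o_0\neq\mathbf{0}$ whenever $\mathbf{v}\neq\mathbf{0}$. This is short and geometrically transparent: a boost is exactly what moves the origin. However, it tacitly reads ``for all embeddings'' as including the origin, i.e.\ as \autoref{P:hierarchical} holding on all of $\mathbb{H}^d_K$. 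That is not available for a finite set of old-task embeddings.

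\textbf{Your route.} You linearize \autoref{P:hierarchical} into $(\mathbf{L}^\top\mathbf{e}_0-\mathbf{e}_0)^\top\mathbf{z}=0$. You then use the rank-$(d+1)$ hypothesis of \autoref{thm:witt_multimodal} to pin down the first row. The first column, the orthogonality of $\widetilde{\mathbf{R}}$, and $\det\widetilde{\mathbf{R}}=1$ then follow directly from $\mathbf{L}^\top\mathbf{G}\mathbf{L}=\mathbf{G}$ and $\det\mathbf{L}=1$, with no decomposition of $\mathbf{L}$. Your argument therefore applies verbatim to finitely many task embeddings. It also covers the paper's universal reading, since points of the hyperboloid span $\mathbb{R}^{d+1}$.

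\textbf{Sharpening the non-degeneracy remark.} Your closing remark about what survives without spanning can be made precise. Because $\mathbf{L}$ preserves $\langle\cdot,\cdot\rangle_{\mathcal{L}}$, preservation of $z_0$ is equivalent to $d_{\mathcal{L}}^{K}(\mathbf{z},\mathbf{L}^{-1}\mathbf{o})=d_{\mathcal{L}}^{K}(\mathbf{z},\mathbf{o})$ for every old-task embedding $\mathbf{z}$. So a non-rotation $\mathbf{L}$ survives exactly when all old-task embeddings lie on the perpendicular bisector of $\mathbf{o}$ and some other point. Full rank rules this out, because such a bisector sits inside a $d$-dimensional linear subspace. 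Full rank is therefore sufficient but not sharp. The same observation explains why the paper's single test point works: $\mathbf{o}$ never lies on such a bisector.
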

\end{thmbox}

\begin{proof}
By \autoref{lem:SOplus_generation}, write $\mathbf{L} = \mathbf{R}\mathbf{B}$ where $\mathbf{R}$ is a spatial rotation and $\mathbf{B}$ is a boost.

For any embedding $\mathbf{z} = [z_0, \mathbf{z}_{\text{[1:d]}}]^\top$, according to \autoref{def:lorentz_boost} we have:
\[
(\mathbf{L}\mathbf{z})_{[1:d]} = (\mathbf{R}\mathbf{B}\mathbf{z})_{[1:d]} = \tilde{\mathbf{R}}(\mathbf{B}\mathbf{z})_{[1:d]} = \tilde{\mathbf{R}}\left(-\gamma \mathbf{v} z_0 + \left(\mathbf{I} + \frac{\gamma^2}{1+\gamma}\mathbf{v}\mathbf{v}^\top\right)\mathbf{z}_{\text{[1:d]}}\right).
\]

Since $\tilde{\mathbf{R}}$ preserves norms, condition \autoref{P:hierarchical} requires:
\[
\left\|-\gamma \mathbf{v} z_0 + \left(\mathbf{I} + \frac{\gamma^2}{1+\gamma}\mathbf{v}\mathbf{v}^\top\right)\mathbf{z}_{\text{[1:d]}}\right\| = \|\mathbf{z}_{\text{[1:d]}}\|.
\]

\textbf{Case 1}: If $\mathbf{v} = \mathbf{0}$, then $\gamma =1$. Hence, $\mathbf{B} = \mathbf{I}$ and the condition trivially holds.

\textbf{Case 2}: If $\mathbf{v} \neq \mathbf{0}$, consider the specific embedding $\mathbf{z} = [1, \mathbf{0}]^\top$ (i.e., origin $\mathbf{o}$). Then:

\begin{align}
(\mathbf{B}\mathbf{z})_{[1:d]} &= -\gamma \mathbf{v} \cdot 1 + \left(\mathbf{I} + \frac{\gamma^2}{1+\gamma}\mathbf{v}\mathbf{v}^\top\right) \cdot \mathbf{0} \notag \\
&= -\gamma \mathbf{v} + \mathbf{0} \notag \\
&= -\gamma \mathbf{v}.
\end{align}

The spatial norm after transformation is $\|-\gamma \mathbf{v}\| = \gamma \|\mathbf{v}\| > 0$, but the original spatial norm is $\|\mathbf{0}\| = 0$. Therefore, $
\|(\mathbf{B}\mathbf{z})_{[1:d]}\| = \gamma \|\mathbf{v}\| \neq 0 = \|\mathbf{z}_{[1:d]}\|$,
which violates condition \autoref{P:hierarchical}.

Therefore, we must have $\mathbf{v} = \mathbf{0}$, implying $\mathbf{B} = \mathbf{I}$ and $\mathbf{L} = \mathbf{R}$. Since this argument applies to any embedding satisfying condition \autoref{P:hierarchical}, we conclude that $\mathbf{L}$ must be a pure spatial rotation of the form given in the theorem statement.

\end{proof}

\subsection{Proof of Hierarchy Preservation}

This subsection provides technical proofs supporting the analysis in \autoref{subsec:hierarchy_preservation}. In particular, the following
lemma establishes a first-order relationship between spatial and time-like
variations of Lorentz embeddings, which is subsequently used to derive the
update constraint in ~\autoref{cor:timelike_stability}.

\begin{thmbox}
a
\begin{lemma}[First-order variation of the time-like coordinate]
\label{lem:timelike_variation}
Let $\mathbf{z}\in\mathbb{R}^{d+1}$ be a Lorentz embedding satisfying
{\color{black}$-\mathbf{z}_{[0]}^2+\|\mathbf{z}_{[1{:}d]}\|_2^2=-\frac{1}{K}$ with $\mathbf{z}_{[0]}>0$.}
Then, for a first-order perturbation $\Delta \mathbf{z}$,
the variation of the time-like coordinate satisfies
\begin{equation}
\label{eq:timelike_first_order}
\Delta \mathbf{z}_{[0]}
=
\left(
\frac{\mathbf{z}_{[1{:}d]}^\top}{\mathbf{z}_{[0]}}
\right)\Delta \mathbf{z}_{[1{:}d]}.
\end{equation}
\end{lemma}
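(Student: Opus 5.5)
Since $\mathbf{z}$ lies on the upper sheet, the time-like coordinate is an explicit smooth function of the spatial part. The plan is to differentiate this relation, either implicitly or explicitly.

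By the remark following \autoref{def:lorentz_manifold}, the constraint $-\mathbf{z}_{[0]}^2+\|\mathbf{z}_{[1:d]}\|_2^2=-1/K$ with $\mathbf{z}_{[0]}>0$ gives
\[
\mathbf{z}_{[0]}=h(\mathbf{z}_{[1:d]}):=\sqrt{K^{-1}+\|\mathbf{z}_{[1:d]}\|_2^2}.
\]
This is exactly how the Lorentz transformation layer of \autoref{sec:pre_hyperbolic} reconstructs the time-like output. Throughout, ``first-order perturbation'' means a perturbation $\Delta\mathbf{z}$ that keeps $\mathbf{z}+\Delta\mathbf{z}$ on $\mathbb{H}^d_K$, with terms of order $\|\Delta\mathbf{z}\|^2$ discarded. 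Equivalently, $\Delta\mathbf{z}$ is a tangent vector at $\mathbf{z}$.

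\textbf{Route 1: differentiate $h$.} Because $h$ is smooth and $h\ge K^{-1/2}>0$, its gradient is
\[
\nabla h=\frac{\mathbf{z}_{[1:d]}}{\sqrt{K^{-1}+\|\mathbf{z}_{[1:d]}\|^2}}=\frac{\mathbf{z}_{[1:d]}}{\mathbf{z}_{[0]}}.
\]
Taylor expansion then gives
\[
\Delta\mathbf{z}_{[0]}=\nabla h^\top\Delta\mathbf{z}_{[1:d]}+O(\|\Delta\mathbf{z}_{[1:d]}\|^2),
\]
which is \autoref{eq:timelike_first_order}.

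\textbf{Route 2: implicit differentiation, as a cross-check.} Differentiate the constraint $\langle\mathbf{z},\mathbf{z}\rangle_{\mathcal{L}}=-1/K$ to get
\[
2\langle\mathbf{z},\Delta\mathbf{z}\rangle_{\mathcal{L}}=0,\qquad\text{i.e.}\qquad -\mathbf{z}_{[0]}\Delta\mathbf{z}_{[0]}+\mathbf{z}_{[1:d]}^\top\Delta\mathbf{z}_{[1:d]}=0.
\]
Dividing by $\mathbf{z}_{[0]}>0$ gives the same identity. This route also shows the statement is the tangent-space condition $\langle\mathbf{z},\Delta\mathbf{z}\rangle_{\mathcal{L}}=0$, which is the interpretation used in the key takeaway after \autoref{cor:timelike_stability}.

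\textbf{Main obstacle.} There is no real difficulty; the only delicate point is interpretive. The lemma must be read as a statement about perturbations constrained to the manifold, since an arbitrary ambient $\Delta\mathbf{z}\in\mathbb{R}^{d+1}$ would not satisfy it. I will state this explicitly and justify division by $\mathbf{z}_{[0]}$ via $\mathbf{z}_{[0]}\ge K^{-1/2}$.
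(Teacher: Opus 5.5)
Your proposal is correct, and your Route 2 is exactly the paper's argument: the paper linearizes the constraint $\phi(\mathbf z)=-\mathbf z_{[0]}^2+\|\mathbf z_{[1:d]}\|_2^2+1/K$, sets $\nabla\phi(\mathbf z)^\top\Delta\mathbf z=0$ because the perturbed point stays on the manifold to first order, and divides by $\mathbf z_{[0]}>0$. Your Route 1, which differentiates the explicit graph $\mathbf z_{[0]}=\sqrt{K^{-1}+\|\mathbf z_{[1:d]}\|_2^2}$, is the explicit-function form of the same linearization, and your remark that $\Delta\mathbf z$ must be tangent rather than an arbitrary ambient vector matches the assumption the paper makes implicitly.
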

\end{thmbox}

\begin{proof}
Consider the Lorentz-hyperboloid constraint
\begin{equation}
\label{eq:proof_constraint}
{\color{black}\phi(\mathbf z)\;:=\;-\mathbf z_{[0]}^{2}+\|\mathbf z_{[1{:}d]}\|_2^{2}+\frac{1}{K}=0.}
\end{equation}
For a perturbation $\widetilde{\mathbf z}=\mathbf z+\Delta\mathbf z$,
a first-order Taylor expansion of $\phi$ at $\mathbf z$ yields
\begin{align}
\phi(\widetilde{\mathbf z})
&=\phi(\mathbf z)+\nabla\phi(\mathbf z)^\top\Delta\mathbf z
+O(\|\Delta\mathbf z\|_2^{2}) \notag \\
&=\nabla\phi(\mathbf z)^\top\Delta\mathbf z
+O(\|\Delta\mathbf z\|_2^{2}). \notag
\end{align}

Since $\widetilde{\mathbf z}$ remains on the manifold up to first order, i.e., $\phi(\widetilde{\mathbf z}) = 0$, the linear term must vanish, i.e.,
\begin{equation}
\label{eq:phi_linear_zero}
\nabla\phi(\mathbf z)^\top\Delta\mathbf z=0.
\end{equation}

The gradient of $\phi$ is given by
$
{\color{black}\nabla\phi(\mathbf z)
=
\begin{bmatrix}
 -2\mathbf z_{[0]} \\
 2\mathbf z_{[1{:}d]}
\end{bmatrix}.
} $
Substituting into \autoref{eq:phi_linear_zero} yields
$
{\color{black}-2\mathbf z_{[0]}\Delta\mathbf z_{[0]}
+2\mathbf z_{[1{:}d]}^\top\Delta\mathbf z_{[1{:}d]}=0.}
$
Dividing by $2$ and using $\mathbf z_{[0]}>0$ gives
\[
\Delta\mathbf z_{[0]}
=
\left(\frac{\mathbf z_{[1{:}d]}^\top}{\mathbf z_{[0]}}\right)\Delta\mathbf z_{[1{:}d]},
\]
which proves \autoref{eq:timelike_first_order}.

\end{proof}

\begin{thmbox}
a
\begin{corollary}[Updates under hierarchical preservation, Restatement]
Consider an update $\Delta \mathbf{z} = \mathbf{z}_{t-1}^{m,t} - \mathbf{z}_{t-1}^{m,t-1}$ applied to the old-task embedding $\mathbf{z}_{t-1}^{m,t-1}$.
If the time-like coordinate is required to be first-order invariant,
i.e.~\autoref{P:hierarchical},
then the induced spatial update must satisfy
\begin{equation}
\label{eq:radial_free_app}
\mathbf{z}_{[1{:}d]}^\top \Delta \mathbf{z}_{[1{:}d]} = 0.
\end{equation}
\end{corollary}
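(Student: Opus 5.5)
The plan is to derive the constraint directly from \autoref{lem:timelike_variation}, together with the fact that on the hyperboloid the time-like coordinate and the spatial norm determine each other. Write $\mathbf z=\mathbf z_{t-1}^{m,t-1}$ and $\Delta\mathbf z=\mathbf z_{t-1}^{m,t}-\mathbf z_{t-1}^{m,t-1}$. Both points lie on $\mathbb H^d_K$, so the perturbed point satisfies the hyperboloid constraint $\phi(\mathbf z+\Delta\mathbf z)=0$ exactly, hence in particular to first order. The hypotheses of \autoref{lem:timelike_variation} are therefore met, and at first order
\[
\Delta\mathbf z_{[0]}=\frac{\mathbf z_{[1:d]}^\top\Delta\mathbf z_{[1:d]}}{\mathbf z_{[0]}}.
\]

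Next I will translate \autoref{P:hierarchical} into a statement about the time-like coordinate. By the remark after \autoref{def:lorentz_manifold},
\[
z_0=\sqrt{K^{-1}+\|\mathbf z_{[1:d]}\|^2}.
\]
Preserving the spatial norm is therefore equivalent to preserving $z_0$, and also to preserving the induced distance to the origin (\autoref{def:induced_norm}). At first order this becomes the requirement $\Delta\mathbf z_{[0]}=0$, which is exactly the ``first-order invariance of the time-like coordinate'' named in the statement.

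Substituting $\Delta\mathbf z_{[0]}=0$ into the identity above and multiplying by $\mathbf z_{[0]}$ gives $\mathbf z_{[1:d]}^\top\Delta\mathbf z_{[1:d]}=0$; this step is legitimate because $\mathbf z_{[0]}\ge K^{-1/2}>0$. This is \autoref{eq:radial_free_app}. As a cross-check I would expand $\|\mathbf z_{[1:d]}+\Delta\mathbf z_{[1:d]}\|^2-\|\mathbf z_{[1:d]}\|^2=2\mathbf z_{[1:d]}^\top\Delta\mathbf z_{[1:d]}+O(\|\Delta\mathbf z\|^2)$. Norm invariance then forces the linear term to vanish directly, with no need for the time-like route.

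The only delicate point is bookkeeping about orders. \autoref{P:hierarchical} is an exact equality, while the conclusion is a first-order statement. I will therefore treat the update as a small perturbation, for instance $\Delta\mathbf z=O(\eta)$ under the infinitesimal parameter update, and match the coefficients of the $O(\eta)$ terms, discarding the $O(\eta^2)$ remainder. With that convention stated, the argument is just two substitutions.
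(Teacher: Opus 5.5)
Your proposal is correct and follows essentially the same route as the paper. Both arguments read \autoref{P:hierarchical} as first-order invariance $\Delta\mathbf z_{[0]}=0$, substitute this into \autoref{lem:timelike_variation}, and divide by $\mathbf z_{[0]}>0$; your additions (why spatial-norm and time-like preservation coincide via $z_0=\sqrt{K^{-1}+\|\mathbf z_{[1:d]}\|^2}$, the direct norm-expansion cross-check, and the explicit $O(\eta)$ bookkeeping) only spell out what the paper's short proof leaves implicit.
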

\end{thmbox}

\begin{proof}
Let $\mathbf z\in\mathbb R^{d+1}$ denote an old-task embedding and
$\Delta\mathbf z$ the induced update. Condition~\autoref{P:hierarchical}
enforces first-order invariance of the time-like coordinate, i.e.,
$\Delta \mathbf z_{[0]}=0$. By \autoref{lem:timelike_variation},
any first-order perturbation on the Lorentz hyperboloid satisfies
$
\Delta \mathbf z_{[0]}
=
\left(\frac{\mathbf z_{[1{:}d]}^\top}{\mathbf z_{[0]}}\right)
\Delta \mathbf z_{[1{:}d]}.
$
Combining the two relations yields
$\mathbf z_{[1{:}d]}^\top \Delta \mathbf z_{[1{:}d]}=0$, since
$\mathbf z_{[0]}>0$.
\end{proof}

\section{Deviations for Methodology}

\subsection{Proof of Proposition~\ref{prop:blockwise_admissible}}
\label{appendix:blockwise}

\begin{thmbox}
a
\begin{proposition}[Admissible Parameter Updates, Restatement of ~\autoref{prop:blockwise_admissible}]
Consider the Lorentz transformation layer in \autoref{sec:pre_hyperbolic},
with parameter decomposition
\(
\mathbf{W}^{m,t}=[\,\mathbf{w}^{m,t}_{0}\;\;\mathbf{W}^{m,t}_{s}\,].
\)
Let
\(
\mathbf{z}^{m,t}_{t-1}
=
f(\mathbf{W}^{m,t};\mathbf{z}^{m,*}_{t-1})
\)
denote the representation of task \(t\!-\!1\) data at stage \(t\).
Assume \autoref{coro:p1_p3_synergy} holds at stage \(t-1\), so that the induced isometry preserves the
time-like coordinate and acts as a spatial rotation.
For an infinitesimal update
\(
\mathbf{W}^{m,t}=\mathbf{W}^{m,t-1}-\eta\,\Delta\mathbf{W}^m
\),
\(\eta\to0^+\). {\color{black}For the block-wise HMCL update, \(\Delta\mathbf w_0^m=\mathbf0\).} Ignoring second-order terms, \(\Delta\mathbf{W}_s^m\) is admissible
if and only if there exists \(\boldsymbol{\Omega}^\top=-\boldsymbol{\Omega}\) such that
\begin{equation}
{\color{black}\mathbf{Z}^{m,*}_{t-1}[1\!:\!d]}\big(\Delta\mathbf{W}^{m}_{s}\big)^\top
=
\mathbf{Z}^{m,t-1}_{t-1}[1\!:\!d]\boldsymbol{\Omega}^\top .
\label{eq:prop_blockwise_admissible_spatial_app}
\end{equation}
{\color{black}Because the time-like output is reconstructed from the spatial output, the spatial rotation also preserves it to first order.}
\end{proposition}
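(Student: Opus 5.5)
The plan is to read ``admissible'' through \autoref{coro:p1_p3_synergy}. An update is admissible exactly when the stage-$t$ old-task embeddings arise from the stage-$(t-1)$ ones by a shared spatial rotation $\mathbf{R}(\eta)=\mathrm{diag}(1,\widetilde{\mathbf{R}}(\eta))$ with $\widetilde{\mathbf{R}}(\eta)\in\mathrm{SO}(d)$. Because the update is infinitesimal, this rotation must satisfy $\widetilde{\mathbf{R}}(0)=\mathbf{I}$ and depend smoothly on $\eta$. I would then linearize both sides in $\eta$ and match the first-order terms.

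\textbf{Step 1 (layer side).} By \autoref{eq:lorentz_layer}, the spatial output of the layer is linear in the weights: $\mathbf{Z}^{m,t}_{t-1}[1\!:\!d]=\mathbf{Z}^{m,*}_{t-1}(\mathbf{W}^{m,t})^\top$. Substituting $\mathbf{W}^{m,t}=\mathbf{W}^{m,t-1}-\eta\,\Delta\mathbf{W}^m$ and splitting $\Delta\mathbf{W}^m=[\Delta\mathbf{w}_0^m\;\;\Delta\mathbf{W}_s^m]$ gives a first-order change of
\[
-\eta\big(\mathbf{Z}^{m,*}_{t-1}[0]\,(\Delta\mathbf w_0^m)^\top+\mathbf{Z}^{m,*}_{t-1}[1\!:\!d]\,(\Delta\mathbf W_s^m)^\top\big).
\]
The block-wise choice $\Delta\mathbf w_0^m=\mathbf 0$ removes the first term. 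This change is exact in $\eta$, so no second-order terms arise on this side.

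\textbf{Step 2 (geometry side).} A smooth curve in $\mathrm{SO}(d)$ through $\mathbf{I}$ has the expansion $\widetilde{\mathbf{R}}(\eta)=\mathbf{I}-\eta\boldsymbol{\Omega}+O(\eta^2)$. Here $\boldsymbol{\Omega}$ ranges over the Lie algebra $\mathfrak{so}(d)$, i.e.\ $\boldsymbol\Omega^\top=-\boldsymbol\Omega$; skew-symmetry follows from differentiating $\widetilde{\mathbf R}^\top\widetilde{\mathbf R}=\mathbf I$ at $\eta=0$. Conversely, every skew $\boldsymbol{\Omega}$ is realized, for example by the curve $\exp(-\eta\boldsymbol\Omega)$. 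Hence the rotated spatial block is $\mathbf{Z}^{m,t-1}_{t-1}[1\!:\!d]-\eta\,\mathbf{Z}^{m,t-1}_{t-1}[1\!:\!d]\boldsymbol{\Omega}^\top+O(\eta^2)$.

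\textbf{Step 3 (matching and reconstruction).} Equating the $O(\eta)$ terms of Steps 1 and 2 gives \autoref{eq:prop_blockwise_admissible_spatial_app}, which proves necessity. For sufficiency, given such an $\boldsymbol\Omega$, the curve $\exp(-\eta\boldsymbol\Omega)$ reproduces the layer output to first order. Two further checks complete the argument:
\begin{itemize}
\item The single $\boldsymbol\Omega$ must be shared across modalities. This follows from the uniqueness in \autoref{thm:witt_multimodal} and \autoref{coro:shared_L} under non-degeneracy, and I would state it explicitly.
\item The time-like output is preserved to first order. Write $\mathbf z=\mathbf z^{m,t-1}_{t-1}[1\!:\!d]$ for one old-task spatial embedding, so its spatial update is $\Delta\mathbf z=-\eta\,\boldsymbol\Omega\mathbf z$. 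Then $\mathbf z^\top\Delta\mathbf z=-\eta\,\mathbf z^\top\boldsymbol\Omega\mathbf z=0$ by skewness, so \autoref{cor:timelike_stability} and \autoref{lem:timelike_variation} give $\Delta z_{[0]}=0$. Since $z_0=\sqrt{K^{-1}+\|\mathbf z\|^2}$ is reconstructed from the spatial norm, the time-like coordinate is first-order invariant.
\end{itemize}

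The main obstacle is making precise what ``admissible to first order'' means. The conditions \textbf{\autoref{P:intra}}--\textbf{\autoref{P:hierarchical}} are exact equalities, but the update is only controlled up to $O(\eta^2)$. I would \emph{define} admissibility as the existence of a differentiable family of shared rotations $\mathbf R(\eta)$ agreeing with the layer output up to $O(\eta^2)$, and invoke \autoref{coro:p1_p3_synergy} pointwise in $\eta$ to justify why that family is the right notion. Once admissibility is fixed this way, the Lie-algebra linearization in Step 2 is routine.
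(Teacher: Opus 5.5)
Your proposal is correct and follows the paper's proof essentially step for step. The spatial output of the layer is affine in $\eta$, the shared rotation is expanded as $\widetilde{\mathbf R}=\mathbf I-\eta\boldsymbol\Omega+O(\eta^2)$ with $\boldsymbol\Omega$ skew, the $O(\eta)$ terms are matched, and $\mathbf z^\top\boldsymbol\Omega\mathbf z=0$ shows the reconstructed time-like coordinate is unchanged to first order; your additions (an explicit first-order definition of admissibility, the converse via $\exp(-\eta\boldsymbol\Omega)$ where the paper argues only necessity, and requiring one $\boldsymbol\Omega$ shared across modalities) make precise points the paper leaves implicit.
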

\end{thmbox}

\begin{proof}
Consider the old-task representations evaluated at stage \(t\):
\begin{align}
\mathbf{Z}^{m,t}_{t-1}[1\!:\!d]
&= f_{[1:d]}\!\left(\mathbf{W}^{m,t},\,\mathbf{Z}^{m,*}_{t-1}\right) \nonumber\\
&= f_{[1:d]}\!\left(\mathbf{W}^{m,t-1}-\eta\,\Delta\mathbf{W}^{m},\,\mathbf{Z}^{m,*}_{t-1}\right) \\
&= f_{[1:d]}\!\left(\mathbf{W}^{m,t-1},\,\mathbf{Z}^{m,*}_{t-1}\right)
-\eta\,\mathrm{D}_{\mathbf{W}} f_{[1:d]}\!\left(\mathbf{W}^{m,t-1},\mathbf{Z}^{m,*}_{t-1}\right)\!\big[\Delta\mathbf{W}^m\big]
+\mathcal{O}(\eta^2) \\
&= \mathbf{Z}^{m,t-1}_{t-1}[1\!:\!d]
-\eta\,{\color{black}\mathbf{Z}^{m,*}_{t-1}[1\!:\!d]}\big(\Delta\mathbf{W}^{m}_{s}\big)^\top
+\mathcal{O}(\eta^2).
\label{eq:block_taylor}
\end{align}

On the other hand, by \autoref{coro:p1_p3_synergy}, preservation of old-task representations implies the existence of
a spatial isometry \(\mathbf{R}\in SO(d)\) such that
\begin{align}
\mathbf{Z}^{m,t}_{t-1}[1\!:\!d]
&=
\mathbf{Z}^{m,t-1}_{t-1}[1\!:\!d]\mathbf{R}^\top,
\qquad
\mathbf{R}^\top=
\mathbf{I}
-
\eta\,\boldsymbol{\Omega}^\top
+
\mathcal{O}(\eta^2),
\quad
\boldsymbol{\Omega}^\top=-\boldsymbol{\Omega} \\
&=
\mathbf{Z}^{m,t-1}_{t-1}[1\!:\!d]
-
\eta\,\mathbf{Z}^{m,t-1}_{t-1}[1\!:\!d]\boldsymbol{\Omega}^\top
+
\mathcal{O}(\eta^2).
\label{eq:block_theorem1}
\end{align}

Comparing the \(O(\eta)\) terms in \autoref{eq:block_taylor} and \autoref{eq:block_theorem1}
yields
\begin{equation}
{\color{black}\mathbf{Z}^{m,*}_{t-1}[1\!:\!d]}\big(\Delta\mathbf{W}^{m}_{s}\big)^\top
=
\mathbf{Z}^{m,t-1}_{t-1}[1\!:\!d]\boldsymbol{\Omega}^\top,
\qquad
\boldsymbol{\Omega}^\top=-\boldsymbol{\Omega}.
\label{eq:block_admissible_spatial}
\end{equation}

{\color{black}Finally, for \(y_0=\sqrt{K^{-1}+\|\mathbf y_s\|_2^2}\) and
\(\Delta\mathbf y_s=\mathbf y_s\boldsymbol\Omega^\top\),
\(\Delta y_0=\mathbf y_s^\top\boldsymbol\Omega^\top\mathbf y_s/y_0=0\)
because \(\boldsymbol\Omega\) is skew-symmetric. Thus the reconstructed time-like coordinate is preserved to first order.}
This completes the proof.
\end{proof}

\subsection{Proof for Corollary~\ref{cor:canonical_admissible}}
\label{appendix:canonical}

\begin{thmbox}
    a
    \begin{corollary}[Canonical Admissible Update, Restatement of ~\autoref{cor:canonical_admissible}]
Under the admissibility condition of~\autoref{prop:blockwise_admissible},
consider the admissible update associated with a given
unconstrained update direction \(\boldsymbol{\delta}^m_s\),
while penalizing the induced spatial rotation.
Then, in the minimal-rotation regime, the admissible solution satisfies
\(\boldsymbol{\Omega}^\star=\mathbf{0}\),
and admissibility reduces to
\begin{equation}
{\color{black}\mathbf{Z}^{m,*}_{t-1,s}}\big(\Delta\mathbf{W}^{m}_{s}\big)^\top=\mathbf{0},
\qquad
\Delta\mathbf{w}^{m}_{0}=\mathbf{0}.
\label{eq:cor_optimal_null}
\end{equation}
The resulting canonical admissible update is given by the null-space projection
\begin{equation}
\Delta\mathbf{W}^{m}_{s}
=
\boldsymbol{\delta}^{m}_{s}\big(\mathbf{I}-\mathbf{P}_{t-1}\big),
\qquad
\mathbf{P}_{t-1}
=
{\color{black}(\mathbf{Z}^{m,*}_{t-1,s})^\top
\Big(\mathbf{Z}^{m,*}_{t-1,s}(\mathbf{Z}^{m,*}_{t-1,s})^\top\Big)^{\dagger}
\mathbf{Z}^{m,*}_{t-1,s}}.
\label{eq:optimal_projection}
\end{equation}
\end{corollary}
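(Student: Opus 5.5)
The proof has two steps. First, a minimal-rotation argument forces $\boldsymbol{\Omega}^\star=\mathbf{0}$. Second, the resulting linear constraint is solved as a least-squares projection. Fix a modality $m$ and the block-wise choice $\Delta\mathbf w_0^m=\mathbf 0$ from \autoref{prop:blockwise_admissible}. Write $\mathbf A:=\mathbf{Z}^{m,*}_{t-1,s}$ and $\mathbf B:=\mathbf{Z}^{m,t-1}_{t-1}[1\!:\!d]$. Admissibility is then the linear constraint $\mathbf A(\Delta\mathbf W_s^m)^\top=\mathbf B\boldsymbol\Omega^\top$ with $\boldsymbol\Omega$ skew-symmetric.

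I would formalize ``penalizing the induced spatial rotation'' as the joint problem
\[
\min_{\Delta\mathbf W_s,\ \boldsymbol\Omega=-\boldsymbol\Omega^\top}\ \tfrac12\|\Delta\mathbf W_s-\boldsymbol\delta_s^m\|_F^2+\tfrac{\lambda}{2}\|\boldsymbol\Omega\|_F^2
\quad\text{s.t.}\quad \mathbf A\Delta\mathbf W_s^\top=\mathbf B\boldsymbol\Omega^\top .
\]
The minimal-rotation regime is the limit $\lambda\to\infty$, or equivalently a lexicographic ordering that minimizes $\|\boldsymbol\Omega\|_F$ first. The feasible set always contains $\boldsymbol\Omega=\mathbf 0$ with $\Delta\mathbf W_s=\mathbf 0$, so the rotation penalty attains its minimum value $0$ on a nonempty set. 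In the lexicographic reading this already gives $\boldsymbol\Omega^\star=\mathbf 0$. In the penalized reading, the optimal $\boldsymbol\Omega_\lambda$ is bounded in norm by $O(\lambda^{-1/2})$, since the fidelity term is bounded by $\tfrac12\|\boldsymbol\delta_s^m\|_F^2$ at the feasible point $(\mathbf 0,\mathbf 0)$. It therefore tends to $\mathbf 0$. This step yields \autoref{eq:cor_optimal_null}: $\mathbf A(\Delta\mathbf W_s^m)^\top=\mathbf 0$, together with $\Delta\mathbf w_0^m=\mathbf 0$.

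Next I would solve $\min\|\Delta\mathbf W_s-\boldsymbol\delta_s^m\|_F$ subject to $\Delta\mathbf W_s\mathbf A^\top=\mathbf 0$. The constraint says every row of $\Delta\mathbf W_s$ is orthogonal to $\mathrm{row}(\mathbf A)$. The problem therefore decouples row by row into Euclidean projections of each row of $\boldsymbol\delta_s^m$ onto $\mathrm{row}(\mathbf A)^\perp$. The orthogonal projector onto $\mathrm{row}(\mathbf A)$ is $\mathbf P_{t-1}=\mathbf A^\top(\mathbf A\mathbf A^\top)^\dagger\mathbf A$. I would verify this with the Moore--Penrose identities: $\mathbf P$ is symmetric and idempotent because $\mathbf A^\dagger=\mathbf A^\top(\mathbf A\mathbf A^\top)^\dagger$, so $\mathbf P=\mathbf A^\dagger\mathbf A$. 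It also satisfies $\mathbf A\mathbf P=\mathbf A$. These facts give $\Delta\mathbf W_s^m=\boldsymbol\delta_s^m(\mathbf I-\mathbf P_{t-1})$. Feasibility follows from $(\mathbf I-\mathbf P)\mathbf A^\top=\mathbf 0$. Optimality follows from the Pythagorean decomposition $\boldsymbol\delta=\boldsymbol\delta\mathbf P+\boldsymbol\delta(\mathbf I-\mathbf P)$ with Frobenius-orthogonal summands. Since the problem is strictly convex, the minimizer is unique.

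The main obstacle is the first step. The statement's ``minimal-rotation regime'' is informal, and I must choose an interpretation in which $\boldsymbol\Omega^\star=\mathbf 0$ is genuinely forced rather than merely allowed. I would adopt the lexicographic/limit formulation above and note that any $\boldsymbol\Omega\neq\mathbf 0$ strictly increases the penalty. I would also remark, using \autoref{prop:blockwise_admissible}, that the time-like output is then unchanged to first order, consistent with \autoref{cor:timelike_stability}.
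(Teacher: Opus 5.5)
Your argument is correct. It uses the same rotation-penalized problem as the paper but solves it by a different technique.

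\textbf{The paper's route.} The paper introduces a multiplier $\boldsymbol\Lambda$ and derives the stationarity conditions $\Delta\mathbf W_s=\boldsymbol\delta_s-\boldsymbol\Lambda^\top\mathbf A$ and $\boldsymbol\Omega=-\lambda^{-1}\operatorname{Skew}(\mathbf B^\top\boldsymbol\Lambda)$ (in your notation). Substituting these into the constraint gives $\mathbf A\mathbf A^\top\boldsymbol\Lambda=\mathbf A\boldsymbol\delta_s^\top+\mathcal O(\lambda^{-1})$. Letting $\lambda\to\infty$, it reads off $\boldsymbol\Omega^\star=\mathbf 0$ and $\boldsymbol\Lambda=(\mathbf A\mathbf A^\top)^\dagger\mathbf A\boldsymbol\delta_s^\top$, hence the projection formula.

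\textbf{Your route.} You avoid multipliers altogether. You bound $\|\boldsymbol\Omega_\lambda\|_F$ by comparison with the feasible point $(\mathbf 0,\mathbf 0)$, or take the lexicographic reading. You then solve the reduced problem as a row-wise orthogonal projection onto $\ker\mathbf A$, identifying $\mathbf P_{t-1}=\mathbf A^\dagger\mathbf A$.

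\textbf{Comparison.} Your route is more self-contained. The paper's limit discards an $\mathcal O(\lambda^{-1})$ term that is small only if $\boldsymbol\Lambda$ stays bounded, which it does not verify, and it works with one particular choice of multiplier. Your Pythagorean decomposition proves optimality and uniqueness directly, and it makes transparent that $\mathbf P_{t-1}$ is the orthogonal projector onto $\mathrm{row}(\mathbf A)$. In exchange, the paper's stationarity condition exhibits $\boldsymbol\Omega_\lambda$ explicitly as $\lambda^{-1}$ times a skew part. That gives $\mathcal O(\lambda^{-1})$ once $\boldsymbol\Lambda$ is bounded, sharper than your $\mathcal O(\lambda^{-1/2})$.

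\textbf{A step to add under the penalized reading.} Knowing $\boldsymbol\Omega_\lambda\to\mathbf 0$ does not by itself identify the limit of $\Delta\mathbf W_{s,\lambda}$. Compare instead with the feasible point $(\boldsymbol\delta_s(\mathbf I-\mathbf P_{t-1}),\mathbf 0)$, which gives $\|\Delta\mathbf W_{s,\lambda}-\boldsymbol\delta_s\|_F\le\|\boldsymbol\delta_s\mathbf P_{t-1}\|_F$. Since $\mathbf A\Delta\mathbf W_{s,\lambda}^\top=\mathbf B\boldsymbol\Omega_\lambda^\top\to\mathbf 0$, every limit point satisfies the reduced constraint with optimal fidelity. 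It therefore equals the unique minimizer $\boldsymbol\delta_s(\mathbf I-\mathbf P_{t-1})$. Your lexicographic reading needs no such step.
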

\end{thmbox}

\begin{proof}
We start from the general admissibility condition in
\autoref{prop:blockwise_admissible},
\begin{equation}
\mathbf{Z}^{m,*}_{t-1,s}\big(\Delta\mathbf{W}^{m}_{s}\big)^\top
=
\mathbf{Z}^{m,t-1}_{t-1}[1\!:\!d]\boldsymbol{\Omega}^\top,
\qquad
\boldsymbol{\Omega}^\top=-\boldsymbol{\Omega}.
\label{eq:app_admissible_general}
\end{equation}
Let \(\boldsymbol{\delta}^m_s\) denote a given unconstrained update direction
for the spatial block.
We consider the rotation-regularized admissible update defined by
\begin{equation}
\min_{\Delta\mathbf{W}^{m}_{s},\,\boldsymbol{\Omega}}
\ \frac{1}{2}\big\|\Delta\mathbf{W}^{m}_{s}-\boldsymbol{\delta}^m_s\big\|_F^2
+
\frac{\lambda}{2}\|\boldsymbol{\Omega}\|_F^2
\quad
\text{s.t.}\quad
\eqref{eq:app_admissible_general}.
\label{eq:app_opt_problem}
\end{equation}

Introducing a Lagrange multiplier \(\boldsymbol{\Lambda}\),
the Lagrangian is
\begin{align}
\mathcal{L}
&=
\frac{1}{2}\big\|\Delta\mathbf{W}^{m}_{s}-\boldsymbol{\delta}^m_s\big\|_F^2
+
\frac{\lambda}{2}\|\boldsymbol{\Omega}\|_F^2 \\
&\quad+
\big\langle
\boldsymbol{\Lambda},\,
\mathbf{Z}^{m,*}_{t-1,s}(\Delta\mathbf{W}^{m}_{s})^\top
-
\mathbf{Z}^{m,t-1}_{t-1}[1\!:\!d]\boldsymbol{\Omega}^\top
\big\rangle .
\end{align}

Taking first-order optimality conditions yields
\begin{align}
\Delta\mathbf{W}^{m}_{s}
&=
\boldsymbol{\delta}^m_s
-
\boldsymbol{\Lambda}^\top \mathbf{Z}^{m,*}_{t-1,s},
\label{eq:app_stationary_W}\\
\boldsymbol{\Omega}
&=
-\frac{1}{\lambda}
\operatorname{Skew}\!\big(
(\mathbf{Z}^{m,t-1}_{t-1}[1\!:\!d])^\top\boldsymbol{\Lambda}
\big),
\label{eq:app_stationary_Omega}
\end{align}
where \(\operatorname{Skew}(\mathbf{A})=(\mathbf{A}-\mathbf{A}^\top)/2\).
Substituting \autoref{eq:app_stationary_W} into the admissibility constraint
\autoref{eq:app_admissible_general} gives
\begin{equation}
\mathbf{Z}^{m,*}_{t-1,s}(\mathbf{Z}^{m,*}_{t-1,s})^\top\boldsymbol{\Lambda}
=
\mathbf{Z}^{m,*}_{t-1,s}(\boldsymbol{\delta}^m_s)^\top
+
\mathcal{O}(\lambda^{-1}).
\label{eq:app_lambda_eq}
\end{equation}

In the minimal-rotation regime \(\lambda\to\infty\),
the second term vanishes, yielding
\begin{equation}
\boldsymbol{\Lambda}
=
\big(\mathbf{Z}^{m,*}_{t-1,s}(\mathbf{Z}^{m,*}_{t-1,s})^\top\big)^{\dagger}
\mathbf{Z}^{m,*}_{t-1,s}(\boldsymbol{\delta}^m_s)^\top.
\label{eq:app_lambda_sol}
\end{equation}
Substituting into \autoref{eq:app_stationary_W} gives
\begin{equation}
\Delta\mathbf{W}^{m}_{s}
=
\boldsymbol{\delta}^m_s
\big(
\mathbf{I}
-
(\mathbf{Z}^{m,*}_{t-1,s})^\top
\big(\mathbf{Z}^{m,*}_{t-1,s}(\mathbf{Z}^{m,*}_{t-1,s})^\top\big)^{\dagger}
\mathbf{Z}^{m,*}_{t-1,s}
\big),
\end{equation}
while \autoref{eq:app_stationary_Omega} implies
\(\boldsymbol{\Omega}^\star=\mathbf{0}\).
Together with ~\autoref{prop:blockwise_admissible},
this completes the proof.
\end{proof}

\subsection{\texorpdfstring{\crblue{Finite-step optimization with AdamW}}{Finite-step optimization with AdamW}}
\label{apdix:finite_step_adamw}

\crblue{The derivation in the main text characterizes local admissibility, while practical training applies the projected update repeatedly with a finite learning rate. In our implementation, the projection matrix is fixed within each task stage and every raw spatial update is first mapped to \(\boldsymbol{\delta}^{m}_{s}(\mathbf{I}-\mathbf{V}\mathbf{V}^{\top})\), so the protected old-task subspace is removed before the optimizer step. For Adam-style momentum, this means the first-moment estimate remains in the projected subspace by induction as long as the stage projector is fixed. Adaptive coordinate-wise preconditioning and finite step sizes can introduce higher-order approximation error, so we do not claim exact global preservation for arbitrary trajectories; instead, HMCL suppresses the dominant first-order leakage into old-task directions, leaving only optimizer-induced and higher-order residuals. Empirically, the robustness analysis in \autoref{apdix:robustness_drift} shows that the method remains effective under AdamW with nonzero weight decay.}

\definecolor{headergray}{RGB}{220, 225, 230}

\definecolor{darkblue}{RGB}{100, 150, 230}
\definecolor{middleblue}{RGB}{160, 200, 245}
\definecolor{lightblue}{RGB}{220, 240, 255}

\definecolor{darkgreen}{RGB}{136, 199, 91}
\definecolor{middlegreen}{RGB}{177, 226, 157}
\definecolor{lightgreen}{RGB}{222, 255, 228}

\newcolumntype{Z}{>{\centering\arraybackslash}X}

\definecolor{headergray}{RGB}{220,225,230}
\definecolor{captionyellow}{RGB}{255,248,190}

\definecolor{hmclshade1}{RGB}{120,180,95}
\definecolor{hmclshade2}{RGB}{140,195,115}
\definecolor{hmclshade3}{RGB}{160,210,140}
\definecolor{hmclshade4}{RGB}{180,222,165}
\definecolor{hmclshade5}{RGB}{198,232,185}
\definecolor{hmclshade6}{RGB}{215,240,205}
\definecolor{hmclshade7}{RGB}{228,247,220}
\definecolor{hmclshade8}{RGB}{238,251,232}
\definecolor{hmclshade9}{RGB}{246,253,242}

\definecolor{vanillashade1}{RGB}{238,120,115}
\definecolor{vanillashade2}{RGB}{244,145,140}
\definecolor{vanillashade3}{RGB}{248,165,160}
\definecolor{vanillashade4}{RGB}{251,185,180}
\definecolor{vanillashade5}{RGB}{253,202,198}
\definecolor{vanillashade6}{RGB}{255,218,215}
\definecolor{vanillashade7}{RGB}{255,230,228}
\definecolor{vanillashade8}{RGB}{255,240,239}
\definecolor{vanillashade9}{RGB}{255,248,247}

\newcommand{\COCOCase}[7]{%
\begin{minipage}[t]{\linewidth}
    \adjustbox{valign=t}{%
    \begin{minipage}[t]{0.18\linewidth}
        \centering
        \frame{#2}\\[-1pt]
        \scriptsize (#1)
    \end{minipage}
    }%
    \hfill
    \adjustbox{valign=t}{%
    \begin{minipage}[t]{0.82\linewidth}
        \begin{tabularx}{\linewidth}{ZZZZ}
            \rowcolor{headergray}
            \textbf{HMCL $T{=}6$} & \textbf{HMCL $T{=}15$} & \textbf{Vanilla $T{=}6$} & \textbf{Vanilla $T{=}15$} \\
            \midrule
            \rowcolor{captionyellow}
            #3 & #4 & #5 & #6 \\
            \midrule
            #7
            \midrule
            \texttt{\textbf{[ROOT]}} & \texttt{\textbf{[ROOT]}} & \texttt{\textbf{[ROOT]}} & \texttt{\textbf{[ROOT]}} \\
        \end{tabularx}
    \end{minipage}
    }%
\end{minipage}
}

\newcommand{\capcell}[1]{%
    \parbox[c]{0.96\linewidth}{\centering\vspace{1pt}\emph{#1}\vspace{1pt}}%
}

\newpage

\section{Supplementary for Implementation Details}

\subsection{Dataset Details and Task Stream}
\label{apdix:datasets}

Our continual multimodal learning setting consisting of  15 datasets,
covering both image--text classification and cross-modal retrieval tasks.

\paragraph{Classification datasets.}
The classification benchmarks include
CIFAR-10 and CIFAR-100~\citep{krizhevsky2009learning},
Caltech-101~\citep{fei2004learning}, \textbf{Food-101}~\citep{bossard2014food},
\textbf{Flowers-102}~\citep{nilsback2008flowers}, \textbf{EuroSAT}~\citep{helber2019eurosat},
\textbf{DTD}~\citep{cimpoi2013dtd}, \textbf{CLEVR}~\citep{johnson2016clevr},
\textbf{Aircraft}~\citep{maji2013finegrainedaircraft},
\textbf{MNIST}~\citep{lecun1998mnist}, \textbf{PCAM}~\citep{veeling2018pcam},
\textbf{SST-2}~\citep{socher2013sst}, and \textbf{Country211}~\citep{radford2021learning}.
These datasets span diverse visual domains and label granularities,
ranging from coarse-grained object recognition to fine-grained and scene-level classification.
Each dataset is treated as an independent classification task
and evaluated using standard image--text classification protocols.

\paragraph{Retrieval datasets.}
Cross-modal retrieval is evaluated on \textbf{COCO}~\citep{lin2014microsoft}
and \textbf{Flickr30k}~\citep{young2014image},
which consist of paired image--text data and are widely used in multimodal retrieval benchmarks.
Both datasets support bidirectional retrieval,
including image-to-text and text-to-image retrieval.

\paragraph{Task stream construction.}
All datasets are arranged into a fixed task stream and presented sequentially.
At each step, training is performed on the current dataset only,
without access to data from previous tasks.
Classification and retrieval tasks are interleaved in the task stream
to reflect realistic continual multimodal learning scenarios.
The same task order is used across all experiments.

\paragraph{Dataset statistics.}
Key statistics of all datasets, including the number of samples,
the number of classes for classification tasks,
and the number of image--text pairs for retrieval tasks,
are summarized in \autoref{tab:dataset_stats}.

\begin{table}[H]
\centering
\small
\setlength{\tabcolsep}{6pt}
\renewcommand{\arraystretch}{1.15}
\caption{\crblue{Dataset statistics and task types used in the continual multimodal stream.}}
\begin{tabular}{l l c c c l}
\toprule
\textbf{Dataset} & \textbf{Task} & \textbf{Classes} & \textbf{Train} & \textbf{Test} & \textbf{Evaluation} \\
\midrule
\crblue{FGVC Aircraft~\citep{maji2013finegrainedaircraft}}  & Classification & 100 & 3{,}334   & 3{,}333   & Accuracy \\
\crblue{Caltech-101~\citep{fei2004learning}}   & Classification & 102 & 2{,}448   & 6{,}084   & Accuracy \\
\crblue{CIFAR-10~\citep{krizhevsky2009learning}}      & Classification & 10  & 45{,}000  & 10{,}000  & Accuracy \\
\crblue{CIFAR-100~\citep{krizhevsky2009learning}}     & Classification & 100 & 45{,}000  & 10{,}000  & Accuracy \\
\crblue{CLEVR~\citep{johnson2016clevr}}         & Classification & 8   & 4{,}500   & 5{,}000   & Accuracy \\
\crblue{Country211~\citep{radford2021learning}}    & Classification & 211 & 31{,}650  & 21{,}100  & Accuracy \\
\crblue{DTD~\citep{cimpoi2013dtd}}           & Classification & 47  & 1{,}880   & 1{,}880   & Accuracy \\
\crblue{EuroSAT~\citep{helber2019eurosat}}       & Classification & 10  & 5{,}000   & 5{,}000   & Accuracy \\
\crblue{Flowers-102~\citep{nilsback2008flowers}}   & Classification & 102 & 1{,}020   & 6{,}149   & Accuracy \\
\crblue{Food-101~\citep{bossard2014food}}      & Classification & 101 & 68{,}175  & 25{,}250  & Accuracy \\
\crblue{MNIST~\citep{lecun1998mnist}}         & Classification & 10  & 48{,}000  & 10{,}000  & Accuracy \\
\crblue{PCAM~\citep{veeling2018pcam}}          & Classification & 2   & 262{,}144 & 32{,}768  & Accuracy \\
\crblue{SST-2~\citep{socher2013sst}}         & Classification & 2   & 6{,}920   & 1{,}821   & Accuracy \\
\midrule
\crblue{COCO~\citep{lin2014microsoft}}           & Retrieval & -- & 118{,}287 & 5{,}000 & Recall@K \\
\crblue{Flickr30K~\citep{young2014image}}      & Retrieval & -- & 29{,}000  & 1{,}000 & Recall@K \\
\bottomrule
\end{tabular}
\label{tab:dataset_stats}
\end{table}

\subsection{\texorpdfstring{\crblue{Supplementary Ablation Study}}{Supplementary Ablation Study}}
\label{apdix:ablation}

\begin{figure}[t]
\centering
\includegraphics[width=0.92\textwidth]{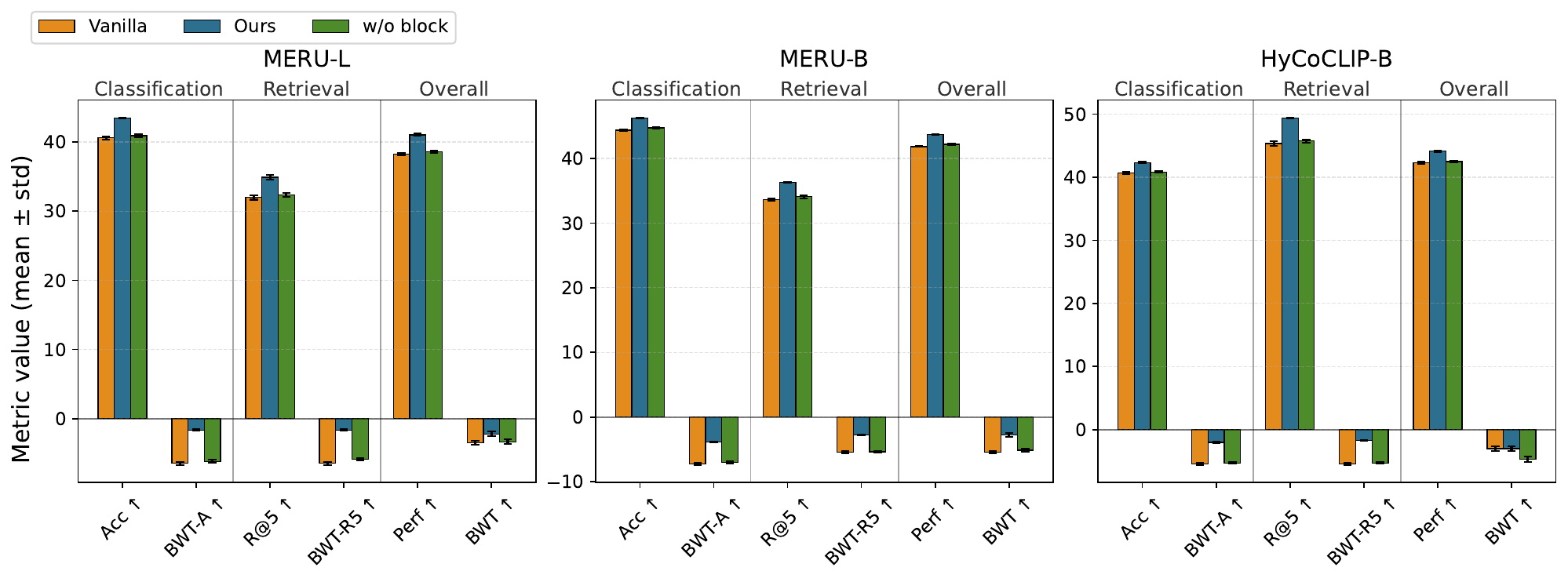}
\caption{\crblue{\textbf{Ablation of block-wise admissible updates.}
Results comparing Vanilla, HMCL (\emph{w/ block}), and a variant without block-wise decomposition (\emph{w/o block}).
Removing the separation between time-like and space-like updates leads to consistently worse backward transfer, highlighting the importance of block-wise constraints for stability.}}
\label{fig:ablation_schemeA}
\end{figure}

{\color{black}
\autoref{fig:ablation_schemeA} compares the full HMCL model with a variant that removes the block-wise constraint (\emph{w/o block}) across three hyperbolic backbones.
While the ablated variant still outperforms the Vanilla baseline, it consistently underperforms the full model in terms of backward transfer (BWT) for both classification and retrieval.
This gap is especially pronounced on MERU-L and MERU-B, indicating that block-wise constraints play a central role.
These ablation results confirm that explicitly enforcing geometry-aware constraints at the block level is essential for preserving hyperbolic structure during continual learning.
Without such constraints, continual updates remain vulnerable to geometric drift, even when trained with hyperbolic losses.
We provide additional optimizer robustness and shuffled task-order analyses in \autoref{apdix:robustness_drift}.
}

\subsection{\texorpdfstring{\crblue{Supplementary Robustness and Drift Analyses}}{Supplementary Robustness and Drift Analyses}}
\label{apdix:robustness_drift}

{\color{black}
We provide the additional analyses requested during rebuttal to clarify the implementation--theory connection and to test whether the observed gains are robust beyond the default optimizer and task order. These experiments are intended as stress tests rather than new assumptions: the AdamW experiment checks optimizer mismatch under nonzero weight decay, the task-order experiment checks sensitivity to shuffled streams, and the drift analysis directly measures whether HMCL reduces both hierarchy-related and relation-related representation changes.
}

\begin{table}[t]
\centering
\color{black}
\small
\setlength{\tabcolsep}{6pt}
\renewcommand{\arraystretch}{1.12}
\caption{Robustness under AdamW with nonzero weight decay (\(10^{-2}\)).}
\label{tab:adamw_weight_decay}
\begin{tabular}{l c c c c c c}
\toprule
\textbf{Method} & \textbf{Acc} \(\uparrow\) & \textbf{BWT\textsubscript{A}} \(\uparrow\) & \textbf{R@5} \(\uparrow\) & \textbf{BWT\textsubscript{R5}} \(\uparrow\) & \textbf{Performance} \(\uparrow\) & \textbf{BWT} \(\uparrow\) \\
\midrule
Vanilla & 41.22 & -11.41 & 24.67 & -15.41 & 39.01 & -11.94 \\
HMCL (ours) & \textbf{45.25} & \textbf{-3.31} & \textbf{35.50} & \textbf{-5.66} & \textbf{43.95} & \textbf{-3.62} \\
\deltagain{\(\boldsymbol{\Delta}\) vs. Vanilla} & \deltagain{+9.8\%} & \deltagain{+71.0\%} & \deltagain{+43.9\%} & \deltagain{+63.3\%} & \deltagain{+12.7\%} & \deltagain{+69.7\%} \\
\bottomrule
\end{tabular}
\end{table}

\begin{table}[t]
\centering
\small
\setlength{\tabcolsep}{5.5pt}
\renewcommand{\arraystretch}{1.18}
\caption{Stability under shuffled task orders. \crblue{These are fresh seed-42 runs from the released implementation, using step pullback with $\beta=0.10$ for MERU and $\beta=0.20$ for HyCoCLIP. The retrieval column uses image-to-text R@$K$ ($K=5$ for MERU and $K=1$ for HyCoCLIP), while Overall retains text-to-image R@5.}}
\label{tab:task_order_stability}
\begin{tabular}{@{} l l l c c c c c c @{}}
\toprule[1.2pt]
& & & \multicolumn{2}{c}{\textbf{Classification}}
  & \multicolumn{2}{c}{\textbf{Retrieval}}
  & \multicolumn{2}{c}{\textbf{Overall}} \\
\cmidrule(lr){4-5}\cmidrule(lr){6-7}\cmidrule(lr){8-9}
\crblue{\textbf{Stream}} & \textbf{Backbone} & \textbf{Method}
& {\textbf{Acc} \(\uparrow\)} & {\textbf{BWT\textsubscript{A}} \(\uparrow\)}
& {\crblue{\textbf{R@$K$}} \(\uparrow\)} & {\crblue{\textbf{BWT\textsubscript{R$K$}}} \(\uparrow\)}
& {\textbf{Performance} \(\uparrow\)} & {\textbf{BWT} \(\uparrow\)} \\
\midrule[1pt]
\crblue{seq2} & MERU-L & Vanilla & 41.00 & -11.00 & 29.67 & -10.94 & 38.40 & -11.00 \\
\rowcolor{myblue!20}
\crblue{seq2} & MERU-L & HMCL (ours) & \crblue{\textbf{43.16}} & \crblue{\textbf{-1.45}} & \crblue{\textbf{39.38}} & \crblue{\textbf{-5.73}} & \crblue{\textbf{41.22}} & \crblue{\textbf{-2.14}} \\
\textcolor{improvementgreen}{seq2} & \textcolor{improvementgreen}{MERU-L} & \deltagain{\(\boldsymbol{\Delta}\) vs. Vanilla} & \deltagain{+5.3\%} & \deltagain{+86.8\%} & \deltagain{+32.7\%} & \deltagain{+47.6\%} & \deltagain{+7.4\%} & \deltagain{+80.6\%} \\
\crblue{seq2} & MERU-B & Vanilla & 43.46 & -10.49 & 25.29 & -17.09 & 40.03 & -10.49 \\
\rowcolor{myblue!20}
\crblue{seq2} & MERU-B & HMCL (ours) & \crblue{\textbf{45.97}} & \crblue{\textbf{-3.13}} & \crblue{\textbf{37.71}} & \crblue{\textbf{-6.80}} & \crblue{\textbf{43.93}} & \crblue{\textbf{-3.51}} \\
\textcolor{improvementgreen}{seq2} & \textcolor{improvementgreen}{MERU-B} & \deltagain{\(\boldsymbol{\Delta}\) vs. Vanilla} & \deltagain{+5.8\%} & \deltagain{+70.2\%} & \deltagain{+49.1\%} & \deltagain{+60.2\%} & \deltagain{+9.7\%} & \deltagain{+66.5\%} \\
\crblue{seq2} & HyCoCLIP-B & Vanilla & 42.49 & -8.32 & 32.28 & -15.13 & 43.59 & -8.32 \\
\rowcolor{myblue!20}
\crblue{seq2} & HyCoCLIP-B & HMCL (ours) & \crblue{\textbf{42.99}} & \crblue{\textbf{-0.41}} & \crblue{\textbf{55.94}} & \crblue{\textbf{-2.95}} & \crblue{\textbf{46.08}} & \crblue{\textbf{-1.02}} \\
\textcolor{improvementgreen}{seq2} & \textcolor{improvementgreen}{HyCoCLIP-B} & \deltagain{\(\boldsymbol{\Delta}\) vs. Vanilla} & \deltagain{+1.2\%} & \deltagain{+95.1\%} & \deltagain{+73.3\%} & \deltagain{+80.5\%} & \deltagain{+5.7\%} & \deltagain{+87.8\%} \\
\midrule[1pt]
\crblue{seq3} & MERU-L & Vanilla & 43.83 & -7.84 & 28.43 & -13.47 & 40.73 & -7.84 \\
\rowcolor{myblue!20}
\crblue{seq3} & MERU-L & HMCL (ours) & \crblue{43.20} & \crblue{\textbf{-1.15}} & \crblue{\textbf{39.33}} & \crblue{\textbf{-5.88}} & \crblue{\textbf{41.16}} & \crblue{\textbf{-1.94}} \\
\textcolor{improvementgreen}{seq3} & \textcolor{improvementgreen}{MERU-L} & \deltagain{\(\boldsymbol{\Delta}\) vs. Vanilla} & \deltagain{-1.4\%} & \deltagain{+85.3\%} & \deltagain{+38.3\%} & \deltagain{+56.3\%} & \deltagain{+1.0\%} & \deltagain{+75.3\%} \\
\crblue{seq3} & MERU-B & Vanilla & 43.23 & -9.29 & 22.09 & -20.00 & 39.53 & -9.29 \\
\rowcolor{myblue!20}
\crblue{seq3} & MERU-B & HMCL (ours) & \crblue{\textbf{45.87}} & \crblue{\textbf{-2.76}} & \crblue{\textbf{38.14}} & \crblue{\textbf{-5.93}} & \crblue{\textbf{43.85}} & \crblue{\textbf{-3.14}} \\
\textcolor{improvementgreen}{seq3} & \textcolor{improvementgreen}{MERU-B} & \deltagain{\(\boldsymbol{\Delta}\) vs. Vanilla} & \deltagain{+6.1\%} & \deltagain{+70.3\%} & \deltagain{+72.7\%} & \deltagain{+70.3\%} & \deltagain{+10.9\%} & \deltagain{+66.2\%} \\
\crblue{seq3} & HyCoCLIP-B & Vanilla & 43.32 & -6.38 & 29.99 & -17.53 & 44.23 & -6.38 \\
\rowcolor{myblue!20}
\crblue{seq3} & HyCoCLIP-B & HMCL (ours) & \crblue{42.34} & \crblue{\textbf{-0.49}} & \crblue{\textbf{55.38}} & \crblue{\textbf{-4.54}} & \crblue{\textbf{45.45}} & \crblue{\textbf{-1.14}} \\
\textcolor{improvementgreen}{seq3} & \textcolor{improvementgreen}{HyCoCLIP-B} & \deltagain{\(\boldsymbol{\Delta}\) vs. Vanilla} & \deltagain{-2.3\%} & \deltagain{+92.3\%} & \deltagain{+84.7\%} & \deltagain{+74.1\%} & \deltagain{+2.8\%} & \deltagain{+82.2\%} \\
\bottomrule[1.2pt]
\end{tabular}
\end{table}

\begin{table}[t]
\centering
\color{black}
\small
\setlength{\tabcolsep}{6pt}
\renewcommand{\arraystretch}{1.12}
\caption{Summary of quantitative representation-drift analysis. Reductions are averaged over old tasks and compare HMCL against Vanilla.}
\label{tab:drift_analysis}
\begin{tabular}{l p{0.58\textwidth} c}
\toprule
\textbf{Drift metric} & \textbf{What it measures} & \textbf{Reduction} \\
\midrule
Radial drift & Changes in hyperbolic radius / time-like hierarchy of old-task embeddings. & 61.0\% \\
Angular drift & Within-modality directional distortion after removing magnitude effects. & 57.8\% \\
Cross-modal drift & Changes in image--text relational structure for old-task representations. & 73.8\% \\
Paired-distance drift & Increase in geodesic distance between matched image--text pairs. & 59.5\% \\
\bottomrule
\end{tabular}

\vspace{3pt}
\raggedright\footnotesize The reductions are statistically significant across old tasks (\(p \leq 3.1\times 10^{-4}\)). The analysis supports the revised interpretation that forgetting includes both hierarchy-related distortion and relation-level drift, rather than being caused solely by radial changes.
\end{table}

\subsection{Supplementary Materials for Case Study}
\label{apdix:case_study}

To further examine the structural advantages of hyperbolic representations in preserving visual--semantic hierarchies, we provide additional geodesic-interpolation case studies on Flickr30K and COCO val2017. For each query image, we interpolate from the image embedding to the \texttt{[ROOT]} node and retrieve the nearest textual item or concept along the geodesic path. This traversal allows us to inspect whether the learned representation moves smoothly from image-specific semantics toward more abstract concepts near the origin. We compare our method, HMCL, with a Vanilla continual fine-tuning baseline.

\paragraph{Flickr30K case study.}
For Flickr30K, we construct a hierarchical text set by extracting nouns and adjectives from the original captions, so that the retrieved items reflect different levels of semantic abstraction. As shown in \autoref{fig:case_study_flickr_page1} and \autoref{fig:case_study_flickr_page2}, cases (1)--(3) perform retrieval over a mixed pool of nouns, adjectives, and captions, while cases (4)--(6) restrict the retrieval pool to captions only. The blue rows correspond to the model state immediately after training on Flickr30K, and the green rows correspond to the model state after subsequent continual-learning tasks. HMCL exhibits fine-grained transitions from image-specific descriptions to more generic concepts and preserves this structure after continual learning, whereas Vanilla is more prone to semantic drift and hierarchical collapse.

\paragraph{COCO case study.}
We further conduct a case study on COCO val2017 to more directly evaluate whether the learned visual--semantic hierarchy remains stable after continual learning. For each COCO image, we first retrieve the nearest caption anchor from the full COCO validation caption pool. We then trace the geodesic path from the image embedding to the origin and retrieve the nearest concepts from a curated three-level vocabulary, where upper rows correspond to more image-specific concepts and lower rows approach more abstract concepts near \texttt{[ROOT]}.

We compare HMCL and Vanilla at two checkpoints: \(T=6\), immediately after COCO is learned, and \(T=15\), after nine subsequent tasks. The selected cases follow a hard filtering protocol: both models must correctly retrieve the caption and hierarchy anchor at \(T=6\), while at \(T=15\), HMCL must still preserve both, and Vanilla must fail in at least one of them. This protocol isolates cases where both models initially solve the task, but only HMCL maintains the visual--semantic hierarchy after continual learning. \crblue{The vertical paths in \autoref{fig:case_study_coco_page1}--\autoref{fig:case_study_coco_page4} show that HMCL maintains stable semantic anchors and smooth abstraction toward the origin, whereas Vanilla is more likely to exhibit caption drift or hierarchy-anchor collapse after subsequent tasks.}

\begin{figure*}[h!]
    \footnotesize
    \newcolumntype{Z}{>{\centering\arraybackslash}X}
    \setlength{\tabcolsep}{2pt}

\begin{minipage}[t]{\linewidth}
    \adjustbox{valign=t}{
        \begin{minipage}[t]{0.18\linewidth}
            \centering
            \frame{\includegraphics[width=\linewidth]{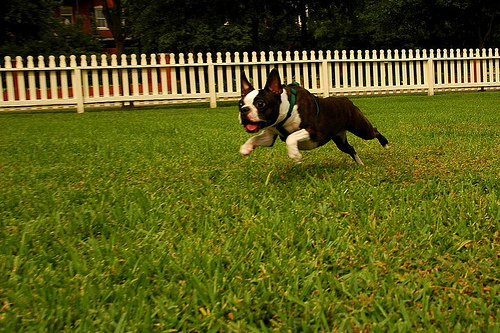}}
            (1)
        \end{minipage}
    }
    \hfill
    \begin{minipage}[t]{0.8\linewidth}
        \begin{tabularx}{\linewidth}[t]{ZZ}
            \rowcolor{headergray} \textbf{HMCL(ours)} & \textbf{Vanilla} \\
            \midrule
            \rowcolor{hmclshade1} \emph{A dog runs on the green grass near a wooden fence .} & \emph{A black and white dog is running in a grassy garden surrounded by a white fence .} \\
            \midrule
            \rowcolor{hmclshade5} \emph{dog} & \emph{handsome} \\
            \midrule
            \rowcolor{hmclshade9} \emph{sincerity} & \emph{fondness} \\
            \midrule
            \texttt{\textbf{[ROOT]}} & \texttt{\textbf{[ROOT]}} \\
        \end{tabularx}
        \vspace{5pt}
        \begin{tabularx}{\linewidth}[t]{ZZ}
            \rowcolor{headergray} \textbf{HMCL(ours)} & \textbf{Vanilla} \\
            \midrule
            \rowcolor{darkgreen} \emph{A dog runs on the green grass near a wooden fence .} & \emph{A black and white dog is running in a grassy garden surrounded by a white fence .} \\
            \midrule
            \rowcolor{middlegreen} \emph{dog} & \emph{aloof} \\
            \midrule
            \rowcolor{lightgreen} \emph{sincerity} & \emph{pleasure} \\
            \midrule
            \texttt{\textbf{[ROOT]}} & \texttt{\textbf{[ROOT]}} \\
        \end{tabularx}
    \end{minipage}
\end{minipage}

\vspace{6pt}

\begin{minipage}[t]{\linewidth}
    \adjustbox{valign=t}{
        \begin{minipage}[t]{0.18\linewidth}
            \centering
            \frame{\includegraphics[width=\linewidth]{Figures/case_flickr30k_1007129816.jpg}}
            (2)
        \end{minipage}
    }
    \hfill
    \begin{minipage}[t]{0.8\linewidth}
        \begin{tabularx}{\linewidth}[t]{ZZ}
            \rowcolor{headergray} \textbf{HMCL(ours)} & \textbf{Vanilla} \\
            \midrule
            \rowcolor{darkblue} \emph{The man with pierced ears is wearing glasses and an orange hat .} & \emph{The man with pierced ears is wearing glasses and an orange hat .} \\
            \midrule
            \rowcolor{middleblue} \emph{hat} & \emph{fashion} \\
            \midrule
            \rowcolor{lightblue} \emph{fashion} & \emph{style} \\
            \midrule
            \texttt{\textbf{[ROOT]}} & \texttt{\textbf{[ROOT]}} \\
        \end{tabularx}
        \vspace{5pt}
        \begin{tabularx}{\linewidth}[t]{ZZ}
            \rowcolor{headergray} \textbf{HMCL(ours)} & \textbf{Vanilla} \\
            \midrule
            \rowcolor{darkgreen} \emph{The man with pierced ears is wearing glasses and an orange hat .} & \emph{The man with pierced ears is wearing glasses and an orange hat .} \\
            \midrule
            \rowcolor{middlegreen} \emph{fashion} & \emph{kind} \\
            \midrule
            \rowcolor{lightgreen} \emph{cousin} & \emph{new} \\
            \midrule
            \texttt{\textbf{[ROOT]}} & \texttt{\textbf{[ROOT]}} \\
        \end{tabularx}
    \end{minipage}

\end{minipage}

\vspace{6pt}

\begin{minipage}[t]{\linewidth}
    \adjustbox{valign=t}{
        \begin{minipage}[t]{0.18\linewidth}
            \centering
            \frame{\includegraphics[width=\linewidth]{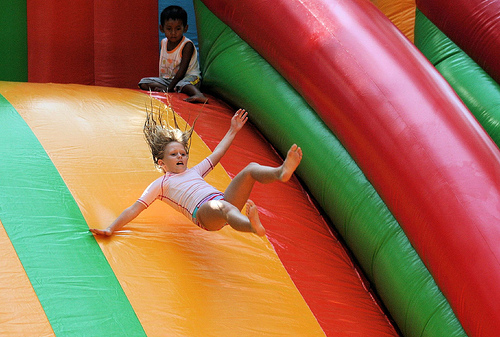}}
            (3)
        \end{minipage}
    }
    \hfill
    \begin{minipage}[t]{0.8\linewidth}
        \begin{tabularx}{\linewidth}[t]{ZZ}
            \rowcolor{headergray} \textbf{HMCL(ours)} & \textbf{Vanilla} \\
            \midrule
            \rowcolor{darkblue} \emph{A girl in a pink shirt slides down an inflatable fun slide .} & \emph{Young girl sliding down an inflated slide .} \\
            \midrule
            \rowcolor{middleblue} \emph{summer} & \emph{summer} \\
            \midrule
            \rowcolor{lightblue} \emph{conventional} & \emph{quality} \\
            \midrule
            \texttt{\textbf{[ROOT]}} & \texttt{\textbf{[ROOT]}} \\
        \end{tabularx}
        \vspace{5pt}
        \begin{tabularx}{\linewidth}[t]{ZZ}
            \rowcolor{headergray} \textbf{HMCL(ours)} & \textbf{Vanilla} \\
            \midrule
            \rowcolor{darkgreen} \emph{A girl in a pink shirt slides down an inflatable fun slide .} & \emph{Young girl sliding down an inflated slide .} \\
            \midrule
            \rowcolor{middlegreen} \emph{summer} & \emph{summer} \\
            \midrule
            \rowcolor{lightgreen} \emph{quick} & \emph{material} \\
            \midrule
            \texttt{\textbf{[ROOT]}} & \texttt{\textbf{[ROOT]}} \\
        \end{tabularx}
    \end{minipage}
\end{minipage}

\caption{\crblue{Flickr30K geodesic-interpolation case studies, page 1. Cases (1)--(3) retrieve over a mixed pool of captions, nouns, and adjectives. Blue rows show the checkpoint immediately after learning Flickr30K, and green rows show the final checkpoint after subsequent continual-learning tasks.}}
\label{fig:case_study_flickr_page1}
\end{figure*}

\begin{figure*}[h!]
    \footnotesize
    \newcolumntype{Z}{>{\centering\arraybackslash}X}
    \setlength{\tabcolsep}{2pt}
\begin{minipage}[t]{\linewidth}
    \adjustbox{valign=t}{
        \begin{minipage}[t]{0.18\linewidth}
            \centering
            \frame{\includegraphics[width=\linewidth]{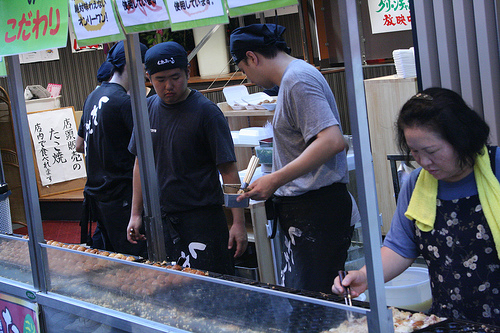}}
            (4)
        \end{minipage}
    }
    \hfill
    \begin{minipage}[t]{0.8\linewidth}
        \begin{tabularx}{\linewidth}[t]{ZZ}
            \rowcolor{headergray} \textbf{HMCL(ours)} & \textbf{Vanilla} \\
            \midrule
            \rowcolor{darkblue} \emph{Numerous Asian people working behind a counter .} & \emph{Numerous Asian people working behind a counter .} \\
            \midrule
            \rowcolor{middleblue} \emph{An Asian food market where a woman is picking out food .} & \emph{An Asian food market where a woman is picking out food .} \\
            \midrule
            \rowcolor{lightblue} \emph{Employees at a sushi restaurant prepare for dinner time rush .} & \emph{A cyclist wearing a black helmet is riding by some black vans .} \\
            \midrule
            \texttt{\textbf{[ROOT]}} & \texttt{\textbf{[ROOT]}} \\
        \end{tabularx}
        \vspace{5pt}
        \begin{tabularx}{\linewidth}[t]{ZZ}
            \rowcolor{headergray} \textbf{HMCL(ours)} & \textbf{Vanilla} \\
            \midrule
            \rowcolor{darkgreen} \emph{Numerous Asian people working behind a counter .} & \emph{Employees at a sushi restaurant prepare for dinner time rush .} \\
            \midrule
            \rowcolor{middlegreen} \emph{An Asian food market where a woman is picking out food .} & \emph{A cyclist wearing a black helmet is riding by some black vans .} \\
            \midrule
            \rowcolor{lightgreen} \emph{A cyclist wearing a black helmet is riding by some black vans .} & \emph{Two wet dogs run into the surf at sunset .} \\
            \midrule
            \texttt{\textbf{[ROOT]}} & \texttt{\textbf{[ROOT]}} \\
        \end{tabularx}
    \end{minipage}
\end{minipage}

\vspace{6pt}

(5)
\begin{minipage}[t]{\linewidth}
    \adjustbox{valign=t}{
        \begin{minipage}[t]{0.18\linewidth}
            \centering
            \frame{\includegraphics[width=\linewidth]{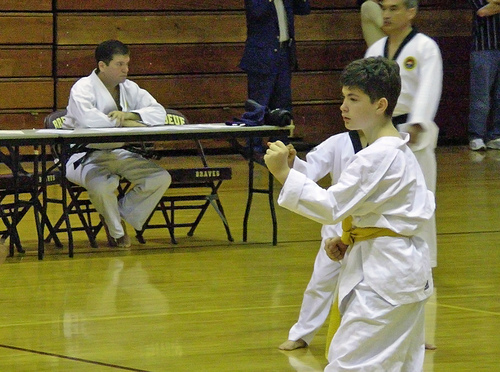}}
            (5)
        \end{minipage}
    }
    \hfill
    \begin{minipage}[t]{0.8\linewidth}
        \begin{tabularx}{\linewidth}[t]{ZZ}
            \rowcolor{headergray} \textbf{HMCL(ours)} & \textbf{Vanilla} \\
            \midrule
            \rowcolor{darkblue} \emph{2 boys in the foreground in a karate competition and coaches in background looking on with another coach sitting at table .} & \emph{A young boy demonstrates karate in a gymnasium .} \\
            \midrule
            \rowcolor{middleblue} \emph{A young boy demonstrates karate in a gymnasium .} & \emph{A little girl dressed in yellow splashes in a shallow pool .} \\
            \midrule
            \rowcolor{lightblue} \emph{Man standing by a poster of religious beliefs .} & \emph{A child plays at a playground .} \\
            \midrule
            \texttt{\textbf{[ROOT]}} & \texttt{\textbf{[ROOT]}} \\
        \end{tabularx}
        \vspace{5pt}
        \begin{tabularx}{\linewidth}[t]{ZZ}
            \rowcolor{headergray} \textbf{HMCL(ours)} & \textbf{Vanilla} \\
            \midrule
            \rowcolor{darkgreen} \emph{2 boys in the foreground in a karate competition and coaches in background looking on with another coach sitting at table .} & \emph{Two people are sitting outdoors on a blanket near a tree .} \\
            \midrule
            \rowcolor{middlegreen} \emph{A young boy demonstrates karate in a gymnasium .} & \emph{A surfer jumps a wave .} \\
            \midrule
            \rowcolor{lightgreen} \emph{Man standing by a poster of religious beliefs .} & \emph{A couple enjoying a glass of white wine .} \\
            \midrule
            \texttt{\textbf{[ROOT]}} & \texttt{\textbf{[ROOT]}} \\
        \end{tabularx}
    \end{minipage}
\end{minipage}

\vspace{6pt}

\begin{minipage}[t]{\linewidth}
    \adjustbox{valign=t}{
        \begin{minipage}[t]{0.18\linewidth}
            \centering
            \frame{\includegraphics[width=\linewidth]{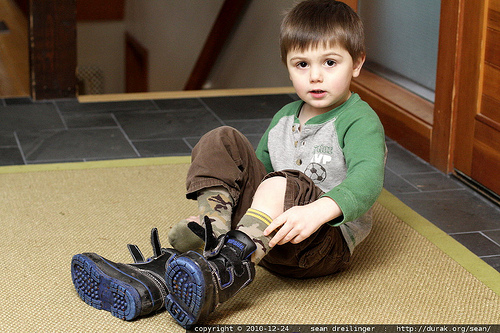}}
            (6)
        \end{minipage}
    }
    \hfill
    \begin{minipage}[t]{0.8\linewidth}
        \begin{tabularx}{\linewidth}[t]{ZZ}
            \rowcolor{headergray} \textbf{HMCL(ours)} & \textbf{Vanilla} \\
            \midrule
            \rowcolor{darkblue} \emph{A young boy is sitting on the floor trying to take off his boots .} & \emph{A woman wearing glasses and an orange dress is standing with another woman wearing glasses and a green dress .} \\
            \midrule
            \rowcolor{middleblue} \emph{A young boy puts on his boots at the top of a flight of stairs .} & \emph{Three young men wearing hooded sweatshirts , standing at a bench .} \\
            \midrule
            \rowcolor{lightblue} \emph{A young boy demonstrates karate in a gymnasium .} & \emph{A couple enjoying a glass of white wine .} \\
            \midrule
            \texttt{\textbf{[ROOT]}} & \texttt{\textbf{[ROOT]}} \\
        \end{tabularx}
        \vspace{5pt}
        \begin{tabularx}{\linewidth}[t]{ZZ}
            \rowcolor{headergray} \textbf{HMCL(ours)} & \textbf{Vanilla} \\
            \midrule
            \rowcolor{darkgreen} \emph{A young boy is sitting on the floor trying to take off his boots .} & \emph{An Indian girl playing a guitar .} \\
            \midrule
            \rowcolor{middlegreen} \emph{A young boy puts on his boots at the top of a flight of stairs .} & \emph{A couple enjoying a glass of white wine .} \\
            \midrule
            \rowcolor{lightgreen} \emph{ couple enjoying a glass of white wine .} & \emph{Two wet dogs run into the surf at sunset .} \\
            \midrule
            \texttt{\textbf{[ROOT]}} & \texttt{\textbf{[ROOT]}} \\
        \end{tabularx}
    \end{minipage}
\end{minipage}

\caption{\crblue{Flickr30K geodesic-interpolation case studies, page 2. Cases (4)--(6) restrict retrieval to the caption pool. Blue rows show the checkpoint immediately after learning Flickr30K, and green rows show the final checkpoint after subsequent continual-learning tasks.}}
\label{fig:case_study_flickr_page2}
\end{figure*}

\begin{figure*}[p]
\centering
\scriptsize
\setlength{\tabcolsep}{2pt}
\renewcommand{\arraystretch}{0.90}

\COCOCase
{1}
{\includegraphics[width=0.95\linewidth]{}}
{\capcell{A windsurfer watches people kite surfing at the beach}}
{\capcell{A windsurfer watches people kite surfing at the beach}}
{\capcell{A windsurfer watches people kite surfing at the beach}}
{\capcell{Some one is flying a kite high on the beach.}}
{
        \cellcolor{hmclshade1} beach & \cellcolor{hmclshade1} beach & \cellcolor{vanillashade1} beach & \cellcolor{vanillashade1} cloudy \\
        \cellcolor{hmclshade2} cloudy & \cellcolor{hmclshade2} $\downarrow$ & \cellcolor{vanillashade2} $\downarrow$ & \cellcolor{vanillashade2} $\downarrow$ \\
        \cellcolor{hmclshade3} sky & \cellcolor{hmclshade3} sky & \cellcolor{vanillashade3} $\downarrow$ & \cellcolor{vanillashade3} $\downarrow$ \\
        \cellcolor{hmclshade4} $\downarrow$ & \cellcolor{hmclshade4} $\downarrow$ & \cellcolor{vanillashade4} sky & \cellcolor{vanillashade4} $\downarrow$ \\
        \cellcolor{hmclshade5} $\downarrow$ & \cellcolor{hmclshade5} $\downarrow$ & \cellcolor{vanillashade5} $\downarrow$ & \cellcolor{vanillashade5} sky \\
        \cellcolor{hmclshade6} object & \cellcolor{hmclshade6} object & \cellcolor{vanillashade6} $\downarrow$ & \cellcolor{vanillashade6} object \\
        \cellcolor{hmclshade7} $\downarrow$ & \cellcolor{hmclshade7} $\downarrow$ & \cellcolor{vanillashade7} object & \cellcolor{vanillashade7} $\downarrow$ \\
        \cellcolor{hmclshade8} $\downarrow$ & \cellcolor{hmclshade8} $\downarrow$ & \cellcolor{vanillashade8} $\downarrow$ & \cellcolor{vanillashade8} $\downarrow$ \\
        \cellcolor{hmclshade9} $\downarrow$ & \cellcolor{hmclshade9} $\downarrow$ & \cellcolor{vanillashade9} $\downarrow$ & \cellcolor{vanillashade9} $\downarrow$ \\
}

\vspace{5pt}

\COCOCase
{2}
{\includegraphics[width=0.95\linewidth]{}}
{\capcell{Two giraffe standing next to each other in a field.}}
{\capcell{Two giraffe standing next to each other on a dirt road next to a grass field.}}
{\capcell{Two giraffe standing next to each other on a dirt road next to a grass field.}}
{\capcell{A giraffe next to a stone fence staring off into the distance.}}
{
        \cellcolor{hmclshade1} giraffe & \cellcolor{hmclshade1} giraffe & \cellcolor{vanillashade1} giraffe & \cellcolor{vanillashade1} dry \\
        \cellcolor{hmclshade2} $\downarrow$ & \cellcolor{hmclshade2} $\downarrow$ & \cellcolor{vanillashade2} $\downarrow$ & \cellcolor{vanillashade2} $\downarrow$ \\
        \cellcolor{hmclshade3} $\downarrow$ & \cellcolor{hmclshade3} area & \cellcolor{vanillashade3} $\downarrow$ & \cellcolor{vanillashade3} $\downarrow$ \\
        \cellcolor{hmclshade4} field & \cellcolor{hmclshade4} $\downarrow$ & \cellcolor{vanillashade4} dry & \cellcolor{vanillashade4} area \\
        \cellcolor{hmclshade5} area & \cellcolor{hmclshade5} image & \cellcolor{vanillashade5} forest & \cellcolor{vanillashade5} $\downarrow$ \\
        \cellcolor{hmclshade6} image & \cellcolor{hmclshade6} $\downarrow$ & \cellcolor{vanillashade6} $\downarrow$ & \cellcolor{vanillashade6} beautiful \\
        \cellcolor{hmclshade7} object & \cellcolor{hmclshade7} object & \cellcolor{vanillashade7} image & \cellcolor{vanillashade7} object \\
        \cellcolor{hmclshade8} $\downarrow$ & \cellcolor{hmclshade8} $\downarrow$ & \cellcolor{vanillashade8} object & \cellcolor{vanillashade8} $\downarrow$ \\
        \cellcolor{hmclshade9} $\downarrow$ & \cellcolor{hmclshade9} $\downarrow$ & \cellcolor{vanillashade9} $\downarrow$ & \cellcolor{vanillashade9} $\downarrow$ \\
}

\vspace{5pt}

\COCOCase
{3}
{\includegraphics[width=0.95\linewidth]{}}
{\capcell{A bowl of tomato soup and two pieces of grilled cheese.}}
{\capcell{A bowl of tomato soup is next to grilled cheese.}}
{\capcell{A bowl of tomato soup and two pieces of grilled cheese.}}
{\capcell{Two cooked hot dogs on a plate with parsley.}}
{
        \cellcolor{hmclshade1} sandwich & \cellcolor{hmclshade1} sandwich & \cellcolor{vanillashade1} sandwich & \cellcolor{vanillashade1} empty \\
        \cellcolor{hmclshade2} $\downarrow$ & \cellcolor{hmclshade2} $\downarrow$ & \cellcolor{vanillashade2} $\downarrow$ & \cellcolor{vanillashade2} $\downarrow$ \\
        \cellcolor{hmclshade3} $\downarrow$ & \cellcolor{hmclshade3} white & \cellcolor{vanillashade3} $\downarrow$ & \cellcolor{vanillashade3} white \\
        \cellcolor{hmclshade4} dinner & \cellcolor{hmclshade4} dinner & \cellcolor{vanillashade4} $\downarrow$ & \cellcolor{vanillashade4} $\downarrow$ \\
        \cellcolor{hmclshade5} object & \cellcolor{hmclshade5} $\downarrow$ & \cellcolor{vanillashade5} white & \cellcolor{vanillashade5} object \\
        \cellcolor{hmclshade6} $\downarrow$ & \cellcolor{hmclshade6} $\downarrow$ & \cellcolor{vanillashade6} object & \cellcolor{vanillashade6} $\downarrow$ \\
        \cellcolor{hmclshade7} $\downarrow$ & \cellcolor{hmclshade7} object & \cellcolor{vanillashade7} $\downarrow$ & \cellcolor{vanillashade7} $\downarrow$ \\
        \cellcolor{hmclshade8} $\downarrow$ & \cellcolor{hmclshade8} $\downarrow$ & \cellcolor{vanillashade8} $\downarrow$ & \cellcolor{vanillashade8} $\downarrow$ \\
        \cellcolor{hmclshade9} $\downarrow$ & \cellcolor{hmclshade9} $\downarrow$ & \cellcolor{vanillashade9} $\downarrow$ & \cellcolor{vanillashade9} $\downarrow$ \\
}

\vspace{5pt}

\COCOCase
{4}
{\includegraphics[width=0.95\linewidth]{}}
{\capcell{A herd of cattle sitting in front\\of a church with a steeple.}}
{\capcell{Four cows stand in the parking lot of a church.}}
{\capcell{A herd of cattle sitting in front\\of a church with a steeple.}}
{\capcell{A horse drawn buggy sits in front\\of pyramid shaped mosques.}}
{
        \cellcolor{hmclshade1} horse & \cellcolor{hmclshade1} horse & \cellcolor{vanillashade1} horse & \cellcolor{vanillashade1} church \\
        \cellcolor{hmclshade2} area & \cellcolor{hmclshade2} $\downarrow$ & \cellcolor{vanillashade2} area & \cellcolor{vanillashade2} $\downarrow$ \\
        \cellcolor{hmclshade3} $\downarrow$ & \cellcolor{hmclshade3} structure & \cellcolor{vanillashade3} $\downarrow$ & \cellcolor{vanillashade3} structure \\
        \cellcolor{hmclshade4} structure & \cellcolor{hmclshade4} $\downarrow$ & \cellcolor{vanillashade4} $\downarrow$ & \cellcolor{vanillashade4} old \\
        \cellcolor{hmclshade5} $\downarrow$ & \cellcolor{hmclshade5} beautiful & \cellcolor{vanillashade5} $\downarrow$ & \cellcolor{vanillashade5} beautiful \\
        \cellcolor{hmclshade6} image & \cellcolor{hmclshade6} $\downarrow$ & \cellcolor{vanillashade6} structure & \cellcolor{vanillashade6} $\downarrow$ \\
        \cellcolor{hmclshade7} $\downarrow$ & \cellcolor{hmclshade7} image & \cellcolor{vanillashade7} image & \cellcolor{vanillashade7} $\downarrow$ \\
        \cellcolor{hmclshade8} object & \cellcolor{hmclshade8} object & \cellcolor{vanillashade8} $\downarrow$ & \cellcolor{vanillashade8} object \\
        \cellcolor{hmclshade9} $\downarrow$ & \cellcolor{hmclshade9} $\downarrow$ & \cellcolor{vanillashade9} object & \cellcolor{vanillashade9} $\downarrow$ \\
}

\vspace{5pt}

\COCOCase
{5}
{\includegraphics[width=0.95\linewidth]{}}
{\capcell{A large Siamese cat sitting on top of a laptop computer.}}
{\capcell{A large Siamese cat sitting on top of a laptop computer.}}
{\capcell{A large Siamese cat sitting on top of a laptop computer.}}
{\capcell{A cat is relaxing on top of an old television.}}
{
        \cellcolor{hmclshade1} cat & \cellcolor{hmclshade1} cat & \cellcolor{vanillashade1} cat & \cellcolor{vanillashade1} window \\
        \cellcolor{hmclshade2} $\downarrow$ & \cellcolor{hmclshade2} $\downarrow$ & \cellcolor{vanillashade2} $\downarrow$ & \cellcolor{vanillashade2} white \\
        \cellcolor{hmclshade3} $\downarrow$ & \cellcolor{hmclshade3} $\downarrow$ & \cellcolor{vanillashade3} $\downarrow$ & \cellcolor{vanillashade3} $\downarrow$ \\
        \cellcolor{hmclshade4} $\downarrow$ & \cellcolor{hmclshade4} white & \cellcolor{vanillashade4} $\downarrow$ & \cellcolor{vanillashade4} $\downarrow$ \\
        \cellcolor{hmclshade5} object & \cellcolor{hmclshade5} object & \cellcolor{vanillashade5} cat & \cellcolor{vanillashade5} object \\
        \cellcolor{hmclshade6} $\downarrow$ & \cellcolor{hmclshade6} $\downarrow$ & \cellcolor{vanillashade6} image & \cellcolor{vanillashade6} $\downarrow$ \\
        \cellcolor{hmclshade7} $\downarrow$ & \cellcolor{hmclshade7} $\downarrow$ & \cellcolor{vanillashade7} object & \cellcolor{vanillashade7} $\downarrow$ \\
        \cellcolor{hmclshade8} $\downarrow$ & \cellcolor{hmclshade8} $\downarrow$ & \cellcolor{vanillashade8} $\downarrow$ & \cellcolor{vanillashade8} $\downarrow$ \\
        \cellcolor{hmclshade9} $\downarrow$ & \cellcolor{hmclshade9} $\downarrow$ & \cellcolor{vanillashade9} $\downarrow$ & \cellcolor{vanillashade9} $\downarrow$ \\
}

\vspace{5pt}

\caption{\crblue{COCO geodesic-interpolation case studies, page 1.}}
\label{fig:case_study_coco_page1}
\end{figure*}

\begin{figure*}[p]
\centering
\scriptsize
\setlength{\tabcolsep}{2pt}
\renewcommand{\arraystretch}{0.90}

\COCOCase
{6}
{\includegraphics[width=0.95\linewidth]{}}
{\capcell{A freight train with a green engine coming down the tracks.}}
{\capcell{A freight train with a green engine coming down the tracks.}}
{\capcell{A green train is going down the tracks in a rural setting.}}
{\capcell{A jet is towed down the tarmac on a dreary gray day.}}
{
        \cellcolor{hmclshade1} locomotive & \cellcolor{hmclshade1} locomotive & \cellcolor{vanillashade1} locomotive & \cellcolor{vanillashade1} train \\
        \cellcolor{hmclshade2} $\downarrow$ & \cellcolor{hmclshade2} $\downarrow$ & \cellcolor{vanillashade2} $\downarrow$ & \cellcolor{vanillashade2} blue \\
        \cellcolor{hmclshade3} train & \cellcolor{hmclshade3} train & \cellcolor{vanillashade3} train & \cellcolor{vanillashade3} bright \\
        \cellcolor{hmclshade4} $\downarrow$ & \cellcolor{hmclshade4} $\downarrow$ & \cellcolor{vanillashade4} large & \cellcolor{vanillashade4} object \\
        \cellcolor{hmclshade5} bright & \cellcolor{hmclshade5} old & \cellcolor{vanillashade5} object & \cellcolor{vanillashade5} $\downarrow$ \\
        \cellcolor{hmclshade6} object & \cellcolor{hmclshade6} object & \cellcolor{vanillashade6} $\downarrow$ & \cellcolor{vanillashade6} $\downarrow$ \\
        \cellcolor{hmclshade7} $\downarrow$ & \cellcolor{hmclshade7} $\downarrow$ & \cellcolor{vanillashade7} $\downarrow$ & \cellcolor{vanillashade7} $\downarrow$ \\
        \cellcolor{hmclshade8} $\downarrow$ & \cellcolor{hmclshade8} $\downarrow$ & \cellcolor{vanillashade8} $\downarrow$ & \cellcolor{vanillashade8} $\downarrow$ \\
        \cellcolor{hmclshade9} $\downarrow$ & \cellcolor{hmclshade9} $\downarrow$ & \cellcolor{vanillashade9} $\downarrow$ & \cellcolor{vanillashade9} $\downarrow$ \\
}

\vspace{5pt}

\COCOCase
{7}
{\includegraphics[width=0.95\linewidth]{}}
{\capcell{Two green stop lights in front of a skyscraper}}
{\capcell{Two green stop lights in front of a skyscraper}}
{\capcell{Two green street lights in front of a building with a lot of windows.}}
{\capcell{Two green street lights in front of a building with a lot of windows.}}
{
        \cellcolor{hmclshade1} building & \cellcolor{hmclshade1} building & \cellcolor{vanillashade1} building & \cellcolor{vanillashade1} light \\
        \cellcolor{hmclshade2} $\downarrow$ & \cellcolor{hmclshade2} $\downarrow$ & \cellcolor{vanillashade2} $\downarrow$ & \cellcolor{vanillashade2} $\downarrow$ \\
        \cellcolor{hmclshade3} light & \cellcolor{hmclshade3} light & \cellcolor{vanillashade3} $\downarrow$ & \cellcolor{vanillashade3} bright \\
        \cellcolor{hmclshade4} bright & \cellcolor{hmclshade4} view & \cellcolor{vanillashade4} bright & \cellcolor{vanillashade4} object \\
        \cellcolor{hmclshade5} object & \cellcolor{hmclshade5} object & \cellcolor{vanillashade5} large & \cellcolor{vanillashade5} $\downarrow$ \\
        \cellcolor{hmclshade6} $\downarrow$ & \cellcolor{hmclshade6} $\downarrow$ & \cellcolor{vanillashade6} object & \cellcolor{vanillashade6} $\downarrow$ \\
        \cellcolor{hmclshade7} $\downarrow$ & \cellcolor{hmclshade7} $\downarrow$ & \cellcolor{vanillashade7} $\downarrow$ & \cellcolor{vanillashade7} $\downarrow$ \\
        \cellcolor{hmclshade8} $\downarrow$ & \cellcolor{hmclshade8} $\downarrow$ & \cellcolor{vanillashade8} $\downarrow$ & \cellcolor{vanillashade8} $\downarrow$ \\
        \cellcolor{hmclshade9} $\downarrow$ & \cellcolor{hmclshade9} $\downarrow$ & \cellcolor{vanillashade9} $\downarrow$ & \cellcolor{vanillashade9} $\downarrow$ \\
}

\COCOCase
{8}
{\includegraphics[width=0.95\linewidth]{}}
{\capcell{A couple of storm troopers standing next to each other.}}
{\capcell{A couple of storm troopers standing next to each other.}}
{\capcell{A couple of storm troopers standing next to each other.}}
{\capcell{A couple of storm troopers standing next to each other.}}
{
        \cellcolor{hmclshade1} group & \cellcolor{hmclshade1} group & \cellcolor{vanillashade1} group & \cellcolor{vanillashade1} empty \\
        \cellcolor{hmclshade2} photo & \cellcolor{hmclshade2} photo & \cellcolor{vanillashade2} $\downarrow$ & \cellcolor{vanillashade2} photo \\
        \cellcolor{hmclshade3} $\downarrow$ & \cellcolor{hmclshade3} $\downarrow$ & \cellcolor{vanillashade3} $\downarrow$ & \cellcolor{vanillashade3} $\downarrow$ \\
        \cellcolor{hmclshade4} $\downarrow$ & \cellcolor{hmclshade4} $\downarrow$ & \cellcolor{vanillashade4} photo & \cellcolor{vanillashade4} $\downarrow$ \\
        \cellcolor{hmclshade5} $\downarrow$ & \cellcolor{hmclshade5} image & \cellcolor{vanillashade5} $\downarrow$ & \cellcolor{vanillashade5} object \\
        \cellcolor{hmclshade6} object & \cellcolor{hmclshade6} object & \cellcolor{vanillashade6} image & \cellcolor{vanillashade6} $\downarrow$ \\
        \cellcolor{hmclshade7} $\downarrow$ & \cellcolor{hmclshade7} $\downarrow$ & \cellcolor{vanillashade7} object & \cellcolor{vanillashade7} $\downarrow$ \\
        \cellcolor{hmclshade8} $\downarrow$ & \cellcolor{hmclshade8} $\downarrow$ & \cellcolor{vanillashade8} $\downarrow$ & \cellcolor{vanillashade8} $\downarrow$ \\
        \cellcolor{hmclshade9} $\downarrow$ & \cellcolor{hmclshade9} $\downarrow$ & \cellcolor{vanillashade9} $\downarrow$ & \cellcolor{vanillashade9} $\downarrow$ \\
}

\vspace{5pt}

\COCOCase
{9}
{\includegraphics[width=0.95\linewidth]{}}
{\capcell{A grilled chicken sandwich sits beside french fries made with real potatoes.}}
{\capcell{A grilled chicken sandwich sits beside french fries made with real potatoes.}}
{\capcell{A grilled chicken sandwich sits beside french fries made with real potatoes.}}
{\capcell{A grilled chicken sandwich sits beside french fries made with real potatoes.}}
{
        \cellcolor{hmclshade1} sandwich & \cellcolor{hmclshade1} sandwich & \cellcolor{vanillashade1} sandwich & \cellcolor{vanillashade1} banana \\
        \cellcolor{hmclshade2} $\downarrow$ & \cellcolor{hmclshade2} $\downarrow$ & \cellcolor{vanillashade2} $\downarrow$ & \cellcolor{vanillashade2} sandwich \\
        \cellcolor{hmclshade3} food & \cellcolor{hmclshade3} $\downarrow$ & \cellcolor{vanillashade3} $\downarrow$ & \cellcolor{vanillashade3} $\downarrow$ \\
        \cellcolor{hmclshade4} $\downarrow$ & \cellcolor{hmclshade4} dinner & \cellcolor{vanillashade4} large & \cellcolor{vanillashade4} object \\
        \cellcolor{hmclshade5} object & \cellcolor{hmclshade5} $\downarrow$ & \cellcolor{vanillashade5} object & \cellcolor{vanillashade5} $\downarrow$ \\
        \cellcolor{hmclshade6} $\downarrow$ & \cellcolor{hmclshade6} object & \cellcolor{vanillashade6} $\downarrow$ & \cellcolor{vanillashade6} $\downarrow$ \\
        \cellcolor{hmclshade7} $\downarrow$ & \cellcolor{hmclshade7} $\downarrow$ & \cellcolor{vanillashade7} $\downarrow$ & \cellcolor{vanillashade7} $\downarrow$ \\
        \cellcolor{hmclshade8} $\downarrow$ & \cellcolor{hmclshade8} $\downarrow$ & \cellcolor{vanillashade8} $\downarrow$ & \cellcolor{vanillashade8} $\downarrow$ \\
        \cellcolor{hmclshade9} $\downarrow$ & \cellcolor{hmclshade9} $\downarrow$ & \cellcolor{vanillashade9} $\downarrow$ & \cellcolor{vanillashade9} $\downarrow$ \\
}

\vspace{5pt}

\COCOCase
{10}
{\includegraphics[width=0.95\linewidth]{}}
{\capcell{A cat curled up in a box with a Pirates hat on.}}
{\capcell{A black cat in a box wearing a pirate hat}}
{\capcell{A cat curled up in a box with a Pirates hat on.}}
{\capcell{A cat curled up in a box with a Pirates hat on.}}
{
        \cellcolor{hmclshade1} cat & \cellcolor{hmclshade1} cat & \cellcolor{vanillashade1} cat & \cellcolor{vanillashade1} black \\
        \cellcolor{hmclshade2} $\downarrow$ & \cellcolor{hmclshade2} black & \cellcolor{vanillashade2} $\downarrow$ & \cellcolor{vanillashade2} $\downarrow$ \\
        \cellcolor{hmclshade3} $\downarrow$ & \cellcolor{hmclshade3} $\downarrow$ & \cellcolor{vanillashade3} $\downarrow$ & \cellcolor{vanillashade3} $\downarrow$ \\
        \cellcolor{hmclshade4} object & \cellcolor{hmclshade4} object & \cellcolor{vanillashade4} $\downarrow$ & \cellcolor{vanillashade4} object \\
        \cellcolor{hmclshade5} $\downarrow$ & \cellcolor{hmclshade5} $\downarrow$ & \cellcolor{vanillashade5} $\downarrow$ & \cellcolor{vanillashade5} $\downarrow$ \\
        \cellcolor{hmclshade6} $\downarrow$ & \cellcolor{hmclshade6} $\downarrow$ & \cellcolor{vanillashade6} object & \cellcolor{vanillashade6} $\downarrow$ \\
        \cellcolor{hmclshade7} $\downarrow$ & \cellcolor{hmclshade7} $\downarrow$ & \cellcolor{vanillashade7} $\downarrow$ & \cellcolor{vanillashade7} $\downarrow$ \\
        \cellcolor{hmclshade8} $\downarrow$ & \cellcolor{hmclshade8} $\downarrow$ & \cellcolor{vanillashade8} $\downarrow$ & \cellcolor{vanillashade8} $\downarrow$ \\
        \cellcolor{hmclshade9} $\downarrow$ & \cellcolor{hmclshade9} $\downarrow$ & \cellcolor{vanillashade9} $\downarrow$ & \cellcolor{vanillashade9} $\downarrow$ \\
}

\caption{\crblue{COCO geodesic-interpolation case studies, page 2.}}
\label{fig:case_study_coco_page2}
\end{figure*}

\begin{figure*}[p]
\centering
\scriptsize
\setlength{\tabcolsep}{2pt}
\renewcommand{\arraystretch}{0.90}

\COCOCase
{11}
{\includegraphics[width=0.95\linewidth]{}}
{\capcell{A computer monitor sitting above a computer keyboard.}}
{\capcell{A computer monitor sitting above a computer keyboard.}}
{\capcell{A computer monitor sitting above a computer keyboard.}}
{\capcell{A black and white photo of a computer keyboard, flash drive, and mouse on a marble tabletop.}}
{
        \cellcolor{hmclshade1} laptop & \cellcolor{hmclshade1} laptop & \cellcolor{vanillashade1} laptop & \cellcolor{vanillashade1} mirror \\
        \cellcolor{hmclshade2} $\downarrow$ & \cellcolor{hmclshade2} $\downarrow$ & \cellcolor{vanillashade2} $\downarrow$ & \cellcolor{vanillashade2} modern \\
        \cellcolor{hmclshade3} $\downarrow$ & \cellcolor{hmclshade3} white & \cellcolor{vanillashade3} dark & \cellcolor{vanillashade3} white \\
        \cellcolor{hmclshade4} photo & \cellcolor{hmclshade4} image & \cellcolor{vanillashade4} $\downarrow$ & \cellcolor{vanillashade4} $\downarrow$ \\
        \cellcolor{hmclshade5} object & \cellcolor{hmclshade5} $\downarrow$ & \cellcolor{vanillashade5} image & \cellcolor{vanillashade5} object \\
        \cellcolor{hmclshade6} $\downarrow$ & \cellcolor{hmclshade6} object & \cellcolor{vanillashade6} $\downarrow$ & \cellcolor{vanillashade6} $\downarrow$ \\
        \cellcolor{hmclshade7} $\downarrow$ & \cellcolor{hmclshade7} $\downarrow$ & \cellcolor{vanillashade7} object & \cellcolor{vanillashade7} $\downarrow$ \\
        \cellcolor{hmclshade8} $\downarrow$ & \cellcolor{hmclshade8} $\downarrow$ & \cellcolor{vanillashade8} $\downarrow$ & \cellcolor{vanillashade8} $\downarrow$ \\
        \cellcolor{hmclshade9} $\downarrow$ & \cellcolor{hmclshade9} $\downarrow$ & \cellcolor{vanillashade9} $\downarrow$ & \cellcolor{vanillashade9} $\downarrow$ \\
}

\vspace{5pt}

\COCOCase
{12}
{\includegraphics[width=0.95\linewidth]{}}
{\capcell{Gray cat lying on seat cushion on outdoor wooden chair.}}
{\capcell{Gray cat lying on seat cushion on outdoor wooden chair.}}
{\capcell{Gray cat lying on seat cushion on outdoor wooden chair.}}
{\capcell{Gray cat lying on seat cushion on outdoor wooden chair.}}
{
        \cellcolor{hmclshade1} cat & \cellcolor{hmclshade1} cat & \cellcolor{vanillashade1} cat & \cellcolor{vanillashade1} water \\
        \cellcolor{hmclshade2} $\downarrow$ & \cellcolor{hmclshade2} $\downarrow$ & \cellcolor{vanillashade2} $\downarrow$ & \cellcolor{vanillashade2} $\downarrow$ \\
        \cellcolor{hmclshade3} $\downarrow$ & \cellcolor{hmclshade3} $\downarrow$ & \cellcolor{vanillashade3} $\downarrow$ & \cellcolor{vanillashade3} small \\
        \cellcolor{hmclshade4} object & \cellcolor{hmclshade4} object & \cellcolor{vanillashade4} $\downarrow$ & \cellcolor{vanillashade4} $\downarrow$ \\
        \cellcolor{hmclshade5} $\downarrow$ & \cellcolor{hmclshade5} $\downarrow$ & \cellcolor{vanillashade5} object & \cellcolor{vanillashade5} object \\
        \cellcolor{hmclshade6} $\downarrow$ & \cellcolor{hmclshade6} $\downarrow$ & \cellcolor{vanillashade6} $\downarrow$ & \cellcolor{vanillashade6} $\downarrow$ \\
        \cellcolor{hmclshade7} $\downarrow$ & \cellcolor{hmclshade7} $\downarrow$ & \cellcolor{vanillashade7} $\downarrow$ & \cellcolor{vanillashade7} $\downarrow$ \\
        \cellcolor{hmclshade8} $\downarrow$ & \cellcolor{hmclshade8} $\downarrow$ & \cellcolor{vanillashade8} $\downarrow$ & \cellcolor{vanillashade8} $\downarrow$ \\
        \cellcolor{hmclshade9} $\downarrow$ & \cellcolor{hmclshade9} $\downarrow$ & \cellcolor{vanillashade9} $\downarrow$ & \cellcolor{vanillashade9} $\downarrow$ \\
}

\vspace{5pt}

\COCOCase
{13}
{\includegraphics[width=0.95\linewidth]{}}
{\capcell{A stack of random books are placed in front of a computer.}}
{\capcell{A stack of books sit on a desk next to a bottle of beer.}}
{\capcell{A stack of books sitting on a desk with a bottle of beer.}}
{\capcell{A stack of books sit on a desk next to a bottle of beer.}}
{
        \cellcolor{hmclshade1} book & \cellcolor{hmclshade1} book & \cellcolor{vanillashade1} book & \cellcolor{vanillashade1} banana \\
        \cellcolor{hmclshade2} $\downarrow$ & \cellcolor{hmclshade2} $\downarrow$ & \cellcolor{vanillashade2} $\downarrow$ & \cellcolor{vanillashade2} $\downarrow$ \\
        \cellcolor{hmclshade3} $\downarrow$ & \cellcolor{hmclshade3} $\downarrow$ & \cellcolor{vanillashade3} $\downarrow$ & \cellcolor{vanillashade3} object \\
        \cellcolor{hmclshade4} object & \cellcolor{hmclshade4} object & \cellcolor{vanillashade4} $\downarrow$ & \cellcolor{vanillashade4} $\downarrow$ \\
        \cellcolor{hmclshade5} $\downarrow$ & \cellcolor{hmclshade5} $\downarrow$ & \cellcolor{vanillashade5} image & \cellcolor{vanillashade5} $\downarrow$ \\
        \cellcolor{hmclshade6} $\downarrow$ & \cellcolor{hmclshade6} $\downarrow$ & \cellcolor{vanillashade6} object & \cellcolor{vanillashade6} $\downarrow$ \\
        \cellcolor{hmclshade7} $\downarrow$ & \cellcolor{hmclshade7} $\downarrow$ & \cellcolor{vanillashade7} $\downarrow$ & \cellcolor{vanillashade7} $\downarrow$ \\
        \cellcolor{hmclshade8} $\downarrow$ & \cellcolor{hmclshade8} $\downarrow$ & \cellcolor{vanillashade8} $\downarrow$ & \cellcolor{vanillashade8} $\downarrow$ \\
        \cellcolor{hmclshade9} $\downarrow$ & \cellcolor{hmclshade9} $\downarrow$ & \cellcolor{vanillashade9} $\downarrow$ & \cellcolor{vanillashade9} $\downarrow$ \\
}

\vspace{5pt}

\COCOCase
{14}
{\includegraphics[width=0.95\linewidth]{}}
{\capcell{A white bowl of broccoli cheese soup looks delicious.}}
{\capcell{A white bowl of broccoli cheese soup looks delicious.}}
{\capcell{A white bowl of broccoli cheese soup looks delicious.}}
{\capcell{A white bowl of broccoli cheese soup looks delicious.}}
{
        \cellcolor{hmclshade1} broccoli & \cellcolor{hmclshade1} broccoli & \cellcolor{vanillashade1} broccoli & \cellcolor{vanillashade1} dish \\
        \cellcolor{hmclshade2} $\downarrow$ & \cellcolor{hmclshade2} dish & \cellcolor{vanillashade2} $\downarrow$ & \cellcolor{vanillashade2} $\downarrow$ \\
        \cellcolor{hmclshade3} dish & \cellcolor{hmclshade3} $\downarrow$ & \cellcolor{vanillashade3} dish & \cellcolor{vanillashade3} $\downarrow$ \\
        \cellcolor{hmclshade4} $\downarrow$ & \cellcolor{hmclshade4} $\downarrow$ & \cellcolor{vanillashade4} $\downarrow$ & \cellcolor{vanillashade4} $\downarrow$ \\
        \cellcolor{hmclshade5} $\downarrow$ & \cellcolor{hmclshade5} $\downarrow$ & \cellcolor{vanillashade5} $\downarrow$ & \cellcolor{vanillashade5} $\downarrow$ \\
        \cellcolor{hmclshade6} object & \cellcolor{hmclshade6} $\downarrow$ & \cellcolor{vanillashade6} object & \cellcolor{vanillashade6} object \\
        \cellcolor{hmclshade7} $\downarrow$ & \cellcolor{hmclshade7} $\downarrow$ & \cellcolor{vanillashade7} $\downarrow$ & \cellcolor{vanillashade7} $\downarrow$ \\
        \cellcolor{hmclshade8} $\downarrow$ & \cellcolor{hmclshade8} object & \cellcolor{vanillashade8} $\downarrow$ & \cellcolor{vanillashade8} $\downarrow$ \\
        \cellcolor{hmclshade9} $\downarrow$ & \cellcolor{hmclshade9} $\downarrow$ & \cellcolor{vanillashade9} $\downarrow$ & \cellcolor{vanillashade9} $\downarrow$ \\
}

\vspace{5pt}

\COCOCase
{15}
{\includegraphics[width=0.95\linewidth]{}}
{\capcell{The red, double decker bus is driving past other buses.}}
{\capcell{The red, double decker bus is driving past other buses.}}
{\capcell{The red, double decker bus is driving past other buses.}}
{\capcell{The red, double decker bus is driving past other buses.}}
{
        \cellcolor{hmclshade1} bus & \cellcolor{hmclshade1} bus & \cellcolor{vanillashade1} bus & \cellcolor{vanillashade1} sidewalk \\
        \cellcolor{hmclshade2} $\downarrow$ & \cellcolor{hmclshade2} $\downarrow$ & \cellcolor{vanillashade2} $\downarrow$ & \cellcolor{vanillashade2} red \\
        \cellcolor{hmclshade3} red & \cellcolor{hmclshade3} $\downarrow$ & \cellcolor{vanillashade3} $\downarrow$ & \cellcolor{vanillashade3} $\downarrow$ \\
        \cellcolor{hmclshade4} $\downarrow$ & \cellcolor{hmclshade4} vintage & \cellcolor{vanillashade4} red & \cellcolor{vanillashade4} $\downarrow$ \\
        \cellcolor{hmclshade5} photo & \cellcolor{hmclshade5} image & \cellcolor{vanillashade5} photo & \cellcolor{vanillashade5} old \\
        \cellcolor{hmclshade6} object & \cellcolor{hmclshade6} object & \cellcolor{vanillashade6} $\downarrow$ & \cellcolor{vanillashade6} object \\
        \cellcolor{hmclshade7} $\downarrow$ & \cellcolor{hmclshade7} $\downarrow$ & \cellcolor{vanillashade7} object & \cellcolor{vanillashade7} $\downarrow$ \\
        \cellcolor{hmclshade8} $\downarrow$ & \cellcolor{hmclshade8} $\downarrow$ & \cellcolor{vanillashade8} $\downarrow$ & \cellcolor{vanillashade8} $\downarrow$ \\
        \cellcolor{hmclshade9} $\downarrow$ & \cellcolor{hmclshade9} $\downarrow$ & \cellcolor{vanillashade9} $\downarrow$ & \cellcolor{vanillashade9} $\downarrow$ \\
}

\caption{\crblue{COCO geodesic-interpolation case studies, page 3.}}
\label{fig:case_study_coco_page3}
\end{figure*}

\begin{figure*}[p]
\centering
\scriptsize
\setlength{\tabcolsep}{2pt}
\renewcommand{\arraystretch}{0.90}

\COCOCase
{16}
{\includegraphics[width=0.95\linewidth]{}}
{\capcell{A cardboard box with a teddy bear in the street.}}
{\capcell{A cardboard box with a teddy bear in the street.}}
{\capcell{A cardboard box with a teddy bear in the street.}}
{\capcell{A cardboard box with a teddy bear in the street.}}
{
        \cellcolor{hmclshade1} bear & \cellcolor{hmclshade1} bear & \cellcolor{vanillashade1} bear & \cellcolor{vanillashade1} dog \\
        \cellcolor{hmclshade2} $\downarrow$ & \cellcolor{hmclshade2} $\downarrow$ & \cellcolor{vanillashade2} $\downarrow$ & \cellcolor{vanillashade2} $\downarrow$ \\
        \cellcolor{hmclshade3} $\downarrow$ & \cellcolor{hmclshade3} $\downarrow$ & \cellcolor{vanillashade3} $\downarrow$ & \cellcolor{vanillashade3} $\downarrow$ \\
        \cellcolor{hmclshade4} animal & \cellcolor{hmclshade4} object & \cellcolor{vanillashade4} animal & \cellcolor{vanillashade4} object \\
        \cellcolor{hmclshade5} object & \cellcolor{hmclshade5} $\downarrow$ & \cellcolor{vanillashade5} $\downarrow$ & \cellcolor{vanillashade5} $\downarrow$ \\
        \cellcolor{hmclshade6} $\downarrow$ & \cellcolor{hmclshade6} $\downarrow$ & \cellcolor{vanillashade6} object & \cellcolor{vanillashade6} $\downarrow$ \\
        \cellcolor{hmclshade7} $\downarrow$ & \cellcolor{hmclshade7} $\downarrow$ & \cellcolor{vanillashade7} $\downarrow$ & \cellcolor{vanillashade7} $\downarrow$ \\
        \cellcolor{hmclshade8} $\downarrow$ & \cellcolor{hmclshade8} $\downarrow$ & \cellcolor{vanillashade8} $\downarrow$ & \cellcolor{vanillashade8} $\downarrow$ \\
        \cellcolor{hmclshade9} $\downarrow$ & \cellcolor{hmclshade9} $\downarrow$ & \cellcolor{vanillashade9} $\downarrow$ & \cellcolor{vanillashade9} $\downarrow$ \\
}

\vspace{5pt}

\COCOCase
{17}
{\includegraphics[width=0.95\linewidth]{}}
{\capcell{The toilet paper containers in this public restroom are empty and need to be refilled.}}
{\capcell{Two empty toilet paper roll holders hang in the stall while the empties lay around and a nice roll sets on the bar.}}
{\capcell{The toilet paper containers in this public restroom are empty and need to be refilled.}}
{\capcell{Two empty toilet paper roll holders hang in the stall while the empties lay around and a nice roll sets on the bar.}}
{
        \cellcolor{hmclshade1} empty & \cellcolor{hmclshade1} empty & \cellcolor{vanillashade1} empty & \cellcolor{vanillashade1} metal \\
        \cellcolor{hmclshade2} $\downarrow$ & \cellcolor{hmclshade2} small & \cellcolor{vanillashade2} $\downarrow$ & \cellcolor{vanillashade2} $\downarrow$ \\
        \cellcolor{hmclshade3} small & \cellcolor{hmclshade3} $\downarrow$ & \cellcolor{vanillashade3} $\downarrow$ & \cellcolor{vanillashade3} $\downarrow$ \\
        \cellcolor{hmclshade4} $\downarrow$ & \cellcolor{hmclshade4} thing & \cellcolor{vanillashade4} small & \cellcolor{vanillashade4} small \\
        \cellcolor{hmclshade5} object & \cellcolor{hmclshade5} object & \cellcolor{vanillashade5} image & \cellcolor{vanillashade5} object \\
        \cellcolor{hmclshade6} $\downarrow$ & \cellcolor{hmclshade6} $\downarrow$ & \cellcolor{vanillashade6} $\downarrow$ & \cellcolor{vanillashade6} $\downarrow$ \\
        \cellcolor{hmclshade7} $\downarrow$ & \cellcolor{hmclshade7} $\downarrow$ & \cellcolor{vanillashade7} object & \cellcolor{vanillashade7} $\downarrow$ \\
        \cellcolor{hmclshade8} $\downarrow$ & \cellcolor{hmclshade8} $\downarrow$ & \cellcolor{vanillashade8} $\downarrow$ & \cellcolor{vanillashade8} $\downarrow$ \\
        \cellcolor{hmclshade9} $\downarrow$ & \cellcolor{hmclshade9} $\downarrow$ & \cellcolor{vanillashade9} $\downarrow$ & \cellcolor{vanillashade9} $\downarrow$ \\
}

\vspace{5pt}

\caption{\crblue{COCO geodesic-interpolation case studies, page 4.}}
\label{fig:case_study_coco_page4}
\end{figure*}

\end{document}